\numberwithin{equation}{section}
\renewcommand\paragraph{\@startsection{paragraph}{4}{\z@}%
                                    {3.25ex \@plus1ex \@minus.2ex}%
                                    {-1em}%
                                    {\normalfont\normalsize\bfseries}}
\newtheorem{theorem}{Theorem}
\newtheorem{lemma}[theorem]{Lemma} 
\newtheorem{assumption}{Assumption} 
\newtheorem{proposition}[theorem]{Proposition} 
\newtheorem{remark}{Remark}
\newtheorem{corollary}[theorem]{Corollary}
\newtheorem{definition}{Definition}
\begin{document}

\title[CLTs for eigenvalues of graph Laplacians on data clouds ]{Central limit theorems for the eigenvalues of graph Laplacians on data clouds}


\author{Chenghui Li}
\address{University of Wisconsin Madison}
\curraddr{}
\email{cli539@wisc.edu}
\thanks{CL gratefully acknowledges support
from the IFDS at UW-Madison through NSF TRIPODS grant 2023239.}

\author{Nicolás García Trillos}
\address{University of Wisconsin Madison}
\curraddr{}
\email{garciatrillo@wisc.edu}
\thanks{NGT was supported by NSF-DMS grant 2236447}

\author{Housen Li}
\address{Georg-August-Universität Göttingen}
\curraddr{}
\email{housen.li@mathematik.uni-goettingen.de}
\thanks{HL is funded by the DFG (German Research Foundation) under Germany's Excellence Strategy, project EXC~2067 {Multiscale Bioimaging: from Molecular Machines to Networks of Excitable Cells} (MBExC)}

\author{Leo Suchan}
\address{Georg-August-Universität Göttingen}
\curraddr{}
\email{leo.suchan@uni-goettingen.de}
\thanks{LS is supported by the DFG (German Research Foundation) under project GRK 2088: \enquote{Discovering Structure in Complex Data}, subproject A1}

\subjclass[2010]{62G20 60F05 58J50 35P15 68R10 60D05 }

\date{}

\dedicatory{}

\begin{abstract}
Given i.i.d.\ samples $\X_n =\{ x_1, \dots, x_n \}$ from a distribution supported on a low dimensional manifold $\M$ embedded in Eucliden space, we consider the graph Laplacian operator $\Delta_n$ associated to an $\veps$-proximity graph over $\X_n$ and study the asymptotic fluctuations of its eigenvalues around their means. In particular, letting $\hat{\lambda}_l^\veps$ denote the $l$-th eigenvalue of $\Delta_n$, and under suitable assumptions on the data generating model and on the rate of decay of $\veps$, we prove that $\sqrt{n } (\hat{\lambda}_{l}^\veps - \E[\hat{\lambda}_{l}^\veps] )$ is asymptotically Gaussian with a variance that we can explicitly characterize. A formal argument allows us to interpret this asymptotic variance as the dissipation of a gradient flow of a suitable energy with respect to the Fisher-Rao geometry. This geometric interpretation allows us to give, in turn, a statistical interpretation of the asymptotic variance in terms of a Cramer-Rao lower bound for the estimation of the eigenvalues of certain weighted Laplace-Beltrami operator. The latter interpretation suggests a form of asymptotic statistical efficiency for the eigenvalues of the graph Laplacian. We also present CLTs for multiple eigenvalues and through several numerical experiments explore the validity of our results when some of the assumptions that we make in our theoretical analysis are relaxed.  

\medskip

\small	
  \textbf{\textit{Keywords:}} manifold learning, graph Laplacian, Laplace-Beltrami operator, CLT, Fisher-Rao geometry, gradient flow, Cramer-Rao lower bound. 
\end{abstract}

\maketitle

\section{Introduction}

In the past decades, researchers from a variety of disciplines have studied the use of spectral geometric methods, and in particular the use of spectral information from graph Laplacian matrices, to process, analyze, and interpret data sets. These methods have been used in supervised learning \cite{ando2006learning,belkin2006manifold,smola2003kernels}, clustering \cite{ng2001spectral,von2007tutorial}, dimensionality reduction \cite{belkin2001laplacian,coifman2005geometric}, and contrastive learning \cite{haochen2021provable,li2025consistency,li2023spectral}.

In the literature, it is possible to find multiple versions of graph Laplacians over data clouds but the most popular one is probably the unnormalized graph Laplacian, which we will focus on in this work. Let $\X_n= \{x_1,\dots,x_n\}$ be a size $n$ i.i.d. sample from a distribution $\rho$ supported on an unknown $m$-dimensional manifold $\M$ embedded in a high-dimensional Euclidean space $\R^d$. In operator form, the (unnormalized) graph Laplacian over the data set $\X_n$ is defined as
\begin{equation}\label{eq-def La^eps}
  \Delta_n u(x_i)=\frac{1}{n \eps^{m+2}} \sum_{j=1}^n \eta\left(\frac{|x_i-x_j|}{\eps}\right)\left(u(x_i)-u\left(x_j\right)\right),\quad  i=1,2,\dots,n,
\end{equation}
where $\eps > 0$ is a suitably chosen connectivity (or proximity) parameter that is related to $n$, $\eta$ is a non-negative and decreasing function, and $\lvert \cdot \rvert$ denotes the Euclidean norm in the ambient space $\R^d$. Through several works in the manifold learning literature, it is by now well known that different attributes of the operator $\Delta_n$ can be used as consistent estimators for analogous attributes of the differential operator
\begin{equation}
	    \Delta_{\rho} f = - \frac{1}{\rho} \divergence\left(   \rho^2 \nabla f  \right),
     \label{eq:Laplace-Beltrami Operator}
	\end{equation}
    where $\divergence$ and $\nabla$ denote the manifold's divergence and gradient operators. Most notably, and given the importance that eigenpairs of graph Laplacians have for machine learning algorithms, multiple works in the literature have explored the \textit{spectral convergence} of $\Delta_n$ toward $\Delta_\rho$; see our literature review in section \ref{sec:literature review} for examples of these works. 

A key recent contribution in this direction is the work \cite{trillos2024}, where the problem of estimating eigenpairs of $\Delta_\rho$ is formulated in rigorous statistical terms. In that paper, it is proved that, by choosing the connectivity parameter $\veps$ appropriately, eigenpairs of the graph Laplacian are essentially \textit{minimax optimal} \footnote{We emphasize the use of the word \textit{essentially},
as there are subtleties that need to be taken into account in this statement, all of which are discussed in detail in \cite{trillos2024}.} for the problem of estimating eigenpairs of the operator $\Delta_\rho$ when $(\M, \rho)$ belongs to certain class of sufficiently regular data generating models. While this type of non-asymptotic analysis is essential for theoretically guaranteeing the consistency of many algorithms used in machine learning, it does not address other statistical questions about graph-based quantities over data clouds that seem quite natural. For example, to the best of our knowledge, in the context of proximity graphs over data clouds no paper in the literature has studied the finer fluctuations of eigenvalues and eigenvectors of the graph Laplacian around their means; note that the results in \cite{trillos2024} (as well as in some other papers) can be interpreted as laws of large numbers for eigenpairs of $\Delta_n$.


\medskip

In this paper, we prove that, under suitable assumptions on the data generating model $(\M, \rho)$ (enough regularity and an eigengap assumption on the operator $\Delta_\rho$), and provided that $\veps$ decays to zero slowly enough with $n$, then, as $n \rightarrow \infty$, we have 
\[ \sqrt{n}(\hat{\lambda}_l^\veps - \E[\hat{\lambda}_l^\veps]) \dto \mathcal{N}(0, \sigma_l^2),  \]
for an asymptotic variance $\sigma_l^2= \sigma_l^2(\rho)$ that we can fully characterize in terms of the eigenpairs of $\Delta_\rho$ and for which we can provide consistent estimators; see \eqref{eq-def:sigma^2} for the definition of $\sigma_l^2$, and \cref{thm1} for the precise statement of our result.



In order to prove our CLT, we begin our analysis by finding a convenient decomposition of $\sqrt{n}(\hat{\lambda}_l^\veps - \E[\hat{\lambda}_l^\veps])$ as a sum of a \textit{bulk} term and an \textit{error} term. The bulk term is a U-statistic-like random variable that we show converges in distribution toward $\mathcal{N}(0, \sigma_l^2)$, following a common strategy in the literature of U-statistics where one attempts to reduce the characterization of the asymptotic distribution of the U-statistic to establishing a CLT for a triangular array of independent random variables. To analyze the error term, and specifically prove that it vanishes as $n \rightarrow \infty$, we use the Efron-Stein inequality to show that, under suitable assumptions on the rate of decay of the connectivity parameter $\veps$, the variance of the error term vanishes as $n \rightarrow \infty$. In order to use the Efron-Stein inequality, however, we must develop a careful leave-one-out perturbation analysis where we quantify the effect that removing one data point from the data set has on the eigenvalues and eigenvectors of the graph Laplacian. Our careful leave-one-out error analysis is crucial to our proofs, allowing us to get much sharper error estimates than a more standard Davis-Kahan type theorem that compares eigenvectors and eigenvalues of $\Delta_n$ and $\Delta_\rho$ (as is done in \cite{calder2019consistency}). With minor modifications to our analysis, we can also obtain CLTs for multiple eigenvalues. Indeed, under additional eigengap assumptions for the operator $\Delta_\rho$, we demonstrate in \cref{thm:multi normal distribution} that, as $n \rightarrow \infty$, 
\[  \sqrt{n}(\hat{\lambda}^\eps_1-\E[\hat{\lambda}^\eps_1],\dots,\hat{\lambda}^\eps_l-\E[\hat{\lambda}^\eps_l])\dto\mathcal{N}(0,\Sigma), \]
for a $l \times l$ covariance matrix $\Sigma$ that, again, we can fully characterize.

\medskip 

In addition to establishing our CLTs for the eigenvaluyes of the graph Laplacian, we investigate the quantity $\sigma_l^2= \sigma_l^2(\rho)$ and provide geometric and statistical interpretations for it. We first present a formal argument that allows us to view $\sigma_l^2= \sigma_l^2(\rho)$ as the \textit{dissipation} at $\rho$ along the \textit{gradient flow of a suitable energy with respect to the Fisher-Rao geometry} over the space of probability distributions on $\M$. This in particular allows us to think of the quantity $\sigma_l^2$ as the largest possible instantaneous change in the eigenvalue $\lambda_l= \lambda_l(\rho)$ when the distribution $\rho$ is infinitesimally perturbed. More importantly, this geometric characterization suggests, in turn, a deep statistical interpretation of $\sigma_l^2$. Indeed, we prove that if $\tilde{\lambda}_l$ is an \textit{unbiased} estimator for $\lambda_l(\rho)$, then its variance must satisfy the lower bound
\[ \Var_\rho(\tilde{\lambda}_l) \geq \frac{\sigma_l^2}{n};\]
see \cref{thm:CramerRao} for a precise statement. 
In other words, the terms $\sigma_l^2$ are precisely the constants appearing in the Cramer-Rao type inequalities for the non-parametric problems of estimating the eigenvalues of the differential operator $\Delta_\rho$. Although the graph Laplacian eigenvalue $\hat{\lambda}_l^\veps$ is not unbiased for $\lambda_l$ (in section \ref{sec:bias}, we present a discussion on the bias), it is certainly true that it is a consistent estimator for $\lambda_l$ (i.e., its bias is asymptotically equal to zero). Therefore, we believe that the fact that the asymptotic variance of $\hat{\lambda}_l^\veps$ coincides with the lower bound in the Cramer-Rao inequality described above \textit{suggests} a form of statistical efficiency that we find surprising, interesting, and worth of further investigation in the future. 


\medskip 

Our theoretical results are complemented with numerical simulations where we: 
\begin{enumerate}
    \item Examine the asymptotic normality of the eigenvalues of the graph Laplacian. The simulations show that, as the sample size $n$ increases, the distribution of the graph Laplacian's eigenvalues becomes increasingly close to a normal distribution, even when the eigengap condition is violated. The relaxation of this eigengap assumption thus seems possible, and remains an interesting open question.
    \item Compare the theoretical asymptotic variance $\sigma_l^2$ with an empirical estimate introduced in \Cref{sec:extension}. Our simulations show a strong agreement between the two, even for moderate sample sizes $n$.
    \item Investigate the possibility of centralizing the graph Laplacian eigenvalues around the corresponding eigenvalues of $\Delta_\rho$. Our theoretical discussion suggests that we should not expect a CLT with the expectation $\E[\hat{\lambda}_l^\veps]$ replaced with the true eigenvalue ${\lambda}_l$ of $\Delta_\rho$ when the manifold's intrinsic dimension exceeds $3$. Simulations confirm this restriction on dimension.
    \item Explore the structure of asymptotic covariance matrix. Simulations conducted beyond the regime where the eigengap condition holds suggest a relationship between the asymptotic dependency structure and the geometry of the underlying manifold. This observation warrants further investigation.    
\end{enumerate}



\subsection{Setup and Assumptions} We begin our discussion by stating some regularity assumptions on the data generating model $(\M, \rho)$ and by introducing some notation that we use in the sequel. 
    \begin{definition}[Manifold class $\MM_S$]
	We denote by $\MM_{S}$ the family of 
	$m$-dimensional manifolds $\M$ embedded in $\R^d$ that are smooth, compact, orientable, connected, and have no boundary, and in addition satisfy (for simplicity) that their total volume is 1 (according to their own volume forms.)
		\label{def:ManifoldClass}
	\end{definition} 
	In \cref{App:GeoBack}, we provide a brief review of some basic notions in Riemannian geometry that we use throughout the paper.

	\begin{definition}[Density class $\mathcal{P}_\M^\alpha$] 
		\label{def:DensityClass}
		For a given $\M \in \MM_S$ and $\alpha \in (0,1)$, we denote by $\mathcal{P}_\M^\alpha$ the class of probability density functions $\rho: \M \rightarrow \R$ that are $C^{2, \alpha}(\M)$ (i.e., twice-differentiable with $\alpha$-H\"older continuous second derivatives) and satisfy: 
		\begin{equation}\label{eq:rho bound}
		\rho_{\min} < \rho(x)  < \rho_{\max}, \quad  x\in \M, 
		\end{equation}
       for some $\rho_{min}, \rho_{max} >0$.		
       
	\end{definition}

\begin{assumption}
Let $\M \in \MM_S$ and $\rho \in \mathcal{P}_\M^\alpha$ for some $\alpha \in (0,1)$. Throughout the paper, we assume that $x_1, \dots, x_n$ are i.i.d. samples from $\rho$.
\label{assump:DataGenerating}
\end{assumption}



\medskip

After making our assumptions on the data generating model $(\M, \rho)$, we continue by stating the precise assumptions on the kernel $\eta$ and on the connectivity parameter $\veps$. Recall that, together with the data set $\X_n= \{ x_1, \dots, x_n\}$, these parameters determine the graph Laplacian $\Delta_n$.

\begin{assumption}
\label{assump:Kernel}
    The kernel $\eta : [0,\infty) \to [0,\infty)$ is assumed to be a non-increasing Lipschitz continuous function with support on the interval $[0, 1]$. We also assume, for convenience, that the kernel is normalized according to
\begin{equation}
    \frac{1}{2}\int_{\R^m} \eta(|v|) |v_1|^2\dd v=1,
    \label{eqn:AssumpKernel}
\end{equation}
where $v_1$ is the first coordinate of the vector $v \in \R^m$.
\end{assumption}

\begin{remark}
Condition \eqref{eqn:AssumpKernel} is a convenient assumption that allows us to ignore certain normalization factors in our analysis. Note that an arbitrary $\eta : [0,\infty) \to [0,\infty)$ that is non-decreasing, Lipschitz, and has support in $[0,1]$ can be turned into a kernel $\tilde{\eta}$ satisfying \cref{assump:Kernel} by considering
\[ \tilde{\eta}:= \frac{2}{\sigma_\eta} \eta, \]
where 
\[ \sigma_\eta:= \int_{\R^m} \eta(|v|) |v_1|^2\dd v.\]
\end{remark}

The following assumption on the connectivity parameter $\eps$ guarantees that it is small enough for $\Delta_n$ to localize, while large enough for the proximity graph to remain sufficiently connected.

\begin{assumption}\label{assum:Epsilon}
Throughout the paper we will implicitly assume that the connectivity parameter $\veps$ satisfies
\[ C_\M \left( \frac{\log(n)}{n} \right)^{1/m} \leq \veps \leq  \min \left\{1, \frac{R_\M}{2}, K_\M^{-1 / 2}, i_\M\right\},  \]
where $R_\M, K_\M$ are uniform upper bounds on the reach and on the absolute values of the sectional curvatures of $\M$. $i_\M$, on the other hand, is the injectivity radius of $\M$, and $C_\M$ is a large enough constant that depends on $\M$ and $\rho_{\mathrm{min}}, \rho_{\mathrm{max}}$.


\end{assumption}

\begin{remark}
The lower bound in \cref{assum:Epsilon} guarantees that, with very high probability (more precisely, probability at least $1-Cn\exp(-n\eps^m)$), the $\veps$-proximity graph implicit in the definition of the graph Laplacian is connected. In particular, we may assume without the loss of generality that $\hat{\lambda}_2^{\eps}>0$. The upper bound, on the other hand, allows us to use Riemannian normal coordinates (see \cref{App:GeoBack} for definitions) to write certain integrals and take advantage of useful Taylor expansions of geometric quantities. 

Note that \cref{assum:Epsilon} is implicitly satisfied if we assume that
\[ \left(\frac{\log n}{n} \right)^{\frac{1}{m}}\ll\eps\ll 1,   \]
where by this we mean that $\frac{1}{\veps}\left(\frac{\log n}{n} \right)^{\frac{1}{m}} \to 0 $ and $\veps \rightarrow 0$, as $n \rightarrow \infty$. The above asymptotic condition will, in turn, be satisfied by the assumptions that we make in our main theorems.

\end{remark}

Through the paper we will denote by $L^2(\X_n)$ the space of functions of the form $f: \X_n \rightarrow \R$ and endow this space with the inner product
\[ \langle f, g \rangle_{L^2(\X_n)} := \frac{1}{n}\sum_{x \in \X_n} f(x) g(x), \quad f, g \in L^2(\X_n).\]
It is well-known (and straightforward to verify) that $\Delta_n$ is a self-adjoint and positive semi-definite operator with respect to the inner product $\langle\cdot,\cdot\rangle_{L^2(\X_n)}$; see \cite{von2007tutorial}. In particular, we can list $\Delta_n$'s eigenvalues in increasing order as
\begin{align}
    0= \hat{\lambda}^\eps_1\le  \hat{\lambda}^\eps_2 \le \dots \le \hat{\lambda}^\eps_n
\end{align}
and we can find an associated orthonormal basis for $L^2(\X_n)$ consisting of eigenvectors of $\Delta_n$ that we will denote by $\hat{u}^\eps_1,\dots,\hat{u}^\eps_n$. Explicitly,
\[ \Delta_n \hat{u}_l^\veps = \hat{\lambda}_l^\veps \hat{u}_l^\veps,  \]
and we reiterate that $\hat{u}_l^\veps$ is normalized according to
\begin{equation}\label{eq:normalization of hatu}
    \|\hat{u}_l^\eps\|_{L^2(\X_n)}^2=\frac{1}{n}\sum_{x \in \X_n} \hat{u}_l^\eps(x)^2 = 1.
\end{equation}
In the sequel, we also use the notion of discrete $H^1$ semi-norm, which for a given $f\in L^2(\X_n)$ is defined as 
	\begin{equation}\label{e.def-H_1}
		\|f\|^2_{H^1(\X_n)} := \frac1{n^2\veps^{m+2} }\sum_{x\in \X_n}\sum_{y \in \X_n} \eta\left(\frac{|x-y|}{\veps}\right) (f(x) - f(y))^2.
	\end{equation}
        By a direct computation, this semi-norm can be rewritten as
        \begin{equation*}
            \|f\|^2_{H^1(\X_n)} = 2\langle \Delta_n f, f\rangle_{L^2(\X_n)}.
        \end{equation*}

\subsubsection{Weighted Laplace-Beltrami operators}

After stating our assumptions on the data-generating model $(\M, \rho)$ and on the other elements that determine the graph Laplacian $\Delta_n$, we discuss the operator $\Delta_\rho$ and introduce some associated objects and notation that we use in the sequel.

We use $L^2(\M)$ to denote the space of (equivalence classes of) measurable functions $f: \M \rightarrow \R$ that are square integrable. We endow this space with the inner product
	\[ \langle  f , g \rangle_{L^2(\M)}:= \int_{\M} f(\vx) g(\vx) \rho(\vx) \dx, \quad f, g \in L^2(\M).   \]
We use $H^1(\M)$ to denote the Sobolev space of $L^2(\M)$ functions that have distributional derivatives that are also square integrable. In particular, for an element $f \in H^1(\M)$ the quantity
	\[ \int_{\M}  |\nabla f(\vx)|^2 \rho^2(\vx)\   \dx  \]
	is finite. A weighted Dirichlet energy over $L^2(\M)$ can then be defined as:
	\begin{equation}\label{Equ: dirichlet energy}
	D(f):= \begin{cases} \int_{\M} |\nabla f(\vx)|^2 \rho^2(\vx)\dx, \quad \text{ if } f \in H^1(\M), \\  +\infty, \quad \text{ if } f \in L^2(\M) \setminus H^1(\M) . \end{cases}
	\end{equation}

The operator $\Delta_\rho$ defined in \eqref{eq:Laplace-Beltrami Operator} is easily seen to be self-adjoint and positive semi-definite with respect to the inner product $\langle  \cdot , \cdot \rangle_{L^2(\M)}$ (see, e.g., \cite{trillos2018variational}). Moreover, there is a complete orthonormal family $\{ u_l \}_{l \in \N}$ (w.r.t. $L^2(\M)$ inner product) of eigenfunctions of $\Delta_\rho$ with corresponding eigenvalues 
	\[ 0 = \lambda_1 < \lambda_2 \leq \dots\leq \lambda_l \leq \dots  \rightarrow \infty \]
	that may be repeated according to multiplicity. Explicitly, the eigenpair $(\lambda_l, u_l)$ satisfies the equation:
	\begin{equation}
	\label{eq:eigenform}
	\Delta_{\rho} u_l  =\lambda_l u_l.
	\end{equation}
	From basic regularity theory of elliptic partial differential equations (see, e.g., \cite{evans,FernndezReal2022}) it follows that equation \eqref{eq:eigenform} holds at every point $x \in \M$. Furthermore, following the discussion in \cref{sec:Regularity}, we see that, under the assumption that $\M \in \MM_S$ and $\rho \in \mathcal{P}_\M^\alpha$, the function $u_l$ is in $C^{3,\alpha}(\M)$.

    In terms of the Dirichlet energy defined in \eqref{Equ: dirichlet energy}, the eigenvalues of $\Delta_\rho$ can be written as 
\begin{equation}
	\label{eq:minmax principle for laplacian}
	\lambda_{l}=\min _{S \in \mathfrak{S}_{l}} \max _{f \in S \backslash\{0\}} \frac{D(f)}{\|f\|_{L^{2}(\M)}^{2}}=\min _{S \in \mathfrak{S}_{l}} \max _{\|f\|^2_{L^{2}(\M)}=1} D(f),
\end{equation}
where $\mathfrak{S}_{l}$ denotes the set of all linear subspaces of $L^2(\M)$ of dimension $l$. We reiterate that $u_l$ is normalized according to
 \begin{align}\label{eq:norm u_l=1}
     \int_\M u_l^2(x) \rho(x) \dx=1.
 \end{align}
The above implies
\[ \int_\M |\nabla u_l(x)|^2 \rho^2(x) \dx = \int_{\M} \Delta_\rho u_l (x) u_l(x) \rho(x) \dx = \lambda_l \int_\M u_l^2(x) \rho(x) \dx  = \lambda_l, \]
where the first equality follows from the definition of $\Delta_\rho$ and integration by parts.

\subsection{Main Results}
\label{sec:MainResults}

In order to state our main results, we need to impose one additional assumption on the data generating model $(\M, \rho)$. Indeed, to prove our CLT for $\hat{\lambda}_l^\veps$ we need to assume that the $l$-th eigenvalue of $\Delta_\rho$ is simple. This eigengap assumption will be discussed further in our numerical simulations in section \ref{sec:simulation}.

\begin{assumption}[Eigengap assumption I]\label{assum:eigengap for l}
    Given $l \in \N$, we assume 
    \begin{equation*}
        \min_{k:k\not= l} |\lambda_k-\lambda_l|\ge \gamma_l,
    \end{equation*}
    for some $\gamma_l>0$. In particular, the $l$-th eigenvalue of $\Delta_\rho$ is assumed to be simple.
\end{assumption}

We are ready to state our first main theorem.

\begin{theorem}\label{thm1}
    Under Assumptions \ref{assump:DataGenerating}, \ref{assump:Kernel}, and \ref{assum:eigengap for l}, if $\left(\frac{(\log n)^2}{n}\right)^{\frac{1}{m+4}}\ll\eps\ll 1$, then
    \begin{align*}
        \sqrt{n}\left(\hat{\lambda}_l^\eps-\E\left[\hat{\lambda}_l^\eps\right]\right)\dto \mathcal{N}\left(0,\sigma_l^2\right),
    \end{align*}
    as $n \rightarrow \infty$, where the variance $\sigma_l^2$ is defined as 
    \begin{equation}\label{eq-def:sigma^2}
    		\sigma_l^2:=\int_\M\Bigl(\lambda_l u_l(x)^2+\lambda_l  - 2|\nabla u_l(x) |^2\rho(x)\Bigr)^2\rho(x)\dx,
    \end{equation}
    for $(\lambda_l, u_l)$ the $l$-th eigenpair of the operator $\Delta_\rho$. We recall that $u_l$ was assumed to be normalized; see \eqref{eq:norm u_l=1}.
\end{theorem}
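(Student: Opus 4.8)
The plan is to write $\sqrt{n}(\hat\lambda_l^\veps - \E[\hat\lambda_l^\veps])$ as a sum of a ``bulk'' term that is asymptotically Gaussian plus an ``error'' term whose variance vanishes. The starting point is the variational identity $\hat\lambda_l^\veps = \langle \Delta_n \hat u_l^\veps, \hat u_l^\veps\rangle_{L^2(\X_n)}$ together with the fact, established in the literature on spectral convergence of graph Laplacians (e.g.\ \cite{calder2019consistency,trillos2024}), that under Assumptions~\ref{assump:DataGenerating}, \ref{assump:Kernel}, \ref{assum:eigengap for l} and the stated rate for $\veps$ the graph Laplacian eigenvalue $\hat\lambda_l^\veps$ concentrates near $\lambda_l$ and the eigenvector $\hat u_l^\veps$ is close (in $L^2(\X_n)$ and in $L^\infty$, after sign choice) to the restriction of the continuum eigenfunction $u_l$ to $\X_n$. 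The first step is therefore to substitute the ansatz $\hat u_l^\veps \approx u_l|_{\X_n}$ into the Rayleigh quotient and carefully keep track of first-order terms: one expands $\hat\lambda_l^\veps$ around the plug-in quantity $\langle \Delta_n (u_l|_{\X_n}), u_l|_{\X_n}\rangle_{L^2(\X_n)}$, using that the eigenvector perturbation enters only at second order in the Rayleigh quotient (this is the standard first-order-stationarity gain), so that
\[
\hat\lambda_l^\veps = \frac{\langle \Delta_n (u_l|_{\X_n}), u_l|_{\X_n}\rangle_{L^2(\X_n)}}{\|u_l|_{\X_n}\|_{L^2(\X_n)}^2} + (\text{error}).
\]
After this reduction, the numerator $\tfrac{1}{2}\|u_l|_{\X_n}\|_{H^1(\X_n)}^2$ is a genuine (degree-two) U-statistic in $x_1,\dots,x_n$, and the denominator is an average; combining them, the bulk term becomes an explicit U-statistic-type object.

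The second step is to identify the limiting variance via the Hájek/Hoeffding projection of this U-statistic. The kernel of the U-statistic, after centering, has a leading linear (degree-one) part whose summands are i.i.d.; one must compute the conditional expectation $h_1(x) := \E[\,\cdot \mid x_1 = x\,]$. This is where the pointwise ``variance kernel'' emerges: a standard localization/Taylor expansion of $\frac{1}{\veps^{m+2}}\eta(|x-y|/\veps)(u_l(x)-u_l(y))^2$ in Riemannian normal coordinates (exactly the kind of expansion justified by the upper bound in Assumption~\ref{assum:Epsilon}, and used in \cite{trillos2024,calder2019consistency}), combined with the eigenvalue equation $\Delta_\rho u_l = \lambda_l u_l$ and integration by parts, shows that the relevant per-point functional is (up to lower order)
\[
\psi_l(x) := \lambda_l u_l(x)^2 + \lambda_l - 2|\nabla u_l(x)|^2 \rho(x),
\]
so that $\operatorname{Var}(\psi_l(x_1)) = \sigma_l^2$ with $\sigma_l^2$ as in \eqref{eq-def:sigma^2} (note $\int_\M \psi_l \rho\, dx = \lambda_l + \lambda_l - 2\lambda_l = 0$, so $\psi_l$ is already centered, which is why the formula is a bare second moment). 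Applying the Lindeberg CLT for the triangular array $\{\tfrac{1}{\sqrt n}\psi_l(x_i) + o(1)\}$ — the array is $\veps$-dependent so one genuinely needs a triangular-array statement, with Lindeberg's condition verified from the uniform boundedness of $u_l \in C^{3,\alpha}(\M)$ and $\rho$ — yields $\sqrt{n}(\text{bulk} - \E[\text{bulk}]) \dto \mathcal N(0,\sigma_l^2)$.

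The third and hardest step is controlling the error term, i.e.\ showing $\sqrt{n}$ times the discrepancy between $\hat\lambda_l^\veps$ and the bulk term has variance $\to 0$. The proposed tool is the Efron–Stein inequality: one bounds $\operatorname{Var}$ of the error by $\sum_i \E[(\text{error} - \text{error}^{(i)})^2]$ where $\text{error}^{(i)}$ is the same functional computed on the leave-one-out sample $\X_n \setminus \{x_i\}$. This forces a delicate \emph{leave-one-out perturbation analysis}: one must quantify how $\hat\lambda_l^\veps$ and $\hat u_l^\veps$ change when a single data point is removed, and crucially obtain bounds sharper than a black-box Davis–Kahan estimate would give — the gain comes from the first-order stationarity of the Rayleigh quotient at the eigenvector plus the eigengap $\gamma_l$. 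I expect this to be the main obstacle, since it requires (i) a high-probability lower bound on the spectral gap of $\Delta_n$ around $\hat\lambda_l^\veps$ (inherited from $\gamma_l$ via spectral convergence), (ii) stability of the graph operator norm under removing a vertex (each removal changes $O(n\veps^m)$ edges, contributing $O(1/n)$-type perturbations after the $1/(n\veps^{m+2})$ normalization), and (iii) propagating these into an $\ell^2$-in-$i$ bound that beats $1/n$ — hence the more restrictive rate requirement $\big((\log n)^2/n\big)^{1/(m+4)}\ll\veps$, which is precisely the threshold at which the leave-one-out error contributions sum to $o(1)$. Once Steps~2 and~3 are in place, Slutsky's theorem combines the Gaussian bulk with the vanishing error, and re-centering around $\E[\hat\lambda_l^\veps]$ (rather than $\E[\text{bulk}]$) is harmless since the two expectations differ by $o(1/\sqrt n)$ by the same error analysis, completing the proof.
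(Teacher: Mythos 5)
Your proposal follows the same overall architecture as the paper's proof: a bulk/error decomposition, reduction of the bulk to a degree-two U-statistic handled by its Hájek projection and a Lindeberg (triangular-array) CLT, identification of the influence function $\psi_l = \lambda_l u_l^2 + \lambda_l - 2|\nabla u_l|^2\rho$ (your centering check $\int_\M \psi_l\,\rho\,\dx = 0$ is correct), and an Efron--Stein/leave-one-out argument for the error, finished off by Slutsky. The one substantive difference is the decomposition itself. The paper does not expand the Rayleigh quotient around $u_l$; it uses the exact identity $(\hat{\lambda}_l^\veps - \E[\hat{\lambda}_l^\veps])\langle \hat u_l, u_l\rangle_{L^2(\X_n)} = \langle \hat u_l, \Delta_n u_l - \E[\hat{\lambda}_l^\veps]u_l\rangle_{L^2(\X_n)}$, so that the error term is \emph{bilinear}: $\langle \hat u_l - u_l,\ \Delta_n u_l - \E[\hat{\lambda}_l^\veps]u_l\rangle_{L^2(\X_n)}$, i.e.\ the eigenvector error paired against the pointwise consistency error of $\Delta_n$ acting on the fixed smooth function $u_l$. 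Your second-order Rayleigh-quotient expansion instead produces an error that is \emph{quadratic} in $e = u_l - \hat u_l$, essentially $\tfrac12\|e\|_{H^1(\X_n)}^2 - \hat{\lambda}_l^\veps\|e\|_{L^2(\X_n)}^2$ (up to the normalizing denominator, which you would treat by a delta method — fine, and your bulk term then has the same influence function and variance $\sigma_l^2$).

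The gap is in the error step. For the paper's bilinear error, the Efron--Stein differences are controlled by an $L^2$ leave-one-out eigenvector stability bound of order $1/(n\veps^{(m+2)/2})$, an eigenvalue stability expansion with an explicit leading term, and a bootstrap on $a_n = n\,\mathrm{Var}(\hat{\lambda}_l^\veps)$ (the leading leave-one-out contribution has second moment $\tfrac{C}{n\veps^{m+2}}a_n$, which closes because $a_n$ is then shown to be bounded and $n\veps^{m+2}\to\infty$). For your quadratic error, the analogous accounting needs leave-one-out stability of $e$ in the graph $H^1$ semi-norm, which the available Davis--Kahan-type leave-one-out estimates do not directly give; and a rough count (with $\|e\|_{H^1(\X_n)} \lesssim \veps^2$ and a per-point change of $e$ of order $1/(n\veps^{(m+2)/2})$) makes the Efron--Stein sum of order $\veps^{2-m}$, which does not vanish for $m\ge 2$ without exhibiting further cancellation. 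So as stated, the claim that the same leave-one-out machinery closes the argument at the rate $\veps \gg ((\log n)^2/n)^{1/(m+4)}$ is unverified and is precisely where your route diverges from what the paper actually proves; the paper's choice of decomposition is what makes the error term tractable with $L^2$-level stability. A minor further point: your final remark that recentering is harmless because $\E[\hat{\lambda}_l^\veps]$ and the expectation of the bulk differ by $o(1/\sqrt n)$ is neither needed nor true in general (the bias of the error term is of order $\veps^2$-ish and $\sqrt n\,\veps^2\not\to 0$ for $m\ge 4$); since the decomposition is an exact identity and each piece is centered at its own expectation, no comparison of expectations is required — only the variance of the error matters.
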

\cref{thm1} guarantees that, asymptotically, the $l$-th eigenvalue of the graph Laplacian $\Delta_n$ follows a Gaussian distribution with an asymptotic variance that depends on the eigenpair $(\lambda_l, u_l)$ of $\Delta_\rho$. We emphasize that our result requires \cref{assum:eigengap for l} to hold.  


\begin{remark}
When $l=1$, the result in \cref{thm1} is trivial since $u_1$ is a constant function and both $\lambda_1$ and $\hat{\lambda}_1^\veps$ are equal to zero.
\end{remark}

\begin{remark}
We remark that the right-hand side of \eqref{eq-def:sigma^2} is invariant under a sign change of $u_l$. In particular, the specific choice of normalized eigenvector $u_l$ in \eqref{eq-def:sigma^2} is unimportant under \cref{assum:eigengap for l} and because of this there is no consequential ambiguity when we say that $(\lambda_l, u_l)$ is \textbf{the} $l$-th eigenpair of $\Delta_\rho$. However, this wouldn't be the case if the eigenspace corresponding to $\lambda_l$ had dimension greater than one. Additional work would be needed to prove a CLT in this more general setting, and we leave this as an interesting open question that is worth exploring in the future. Note, however, that for a generic model $(\M, \rho)$ the eigenvalues of $\Delta_\rho$ are expected to be simple.
\end{remark}

\begin{remark}
Note that the limiting distribution in \eqref{thm1} does not depend on the specific kernel $\eta$ used to build the graph (as long as it satisfies \cref{assump:Kernel}).
\end{remark}

\begin{remark}
\label{rem:Minimaxity}
In the work \cite{trillos2024}, it was proved that, by choosing $\veps$ appropriately, graph Laplacian eigenpairs approximate the eigenpairs of the operator $\Delta_\rho$ at the rate $O\left( n^{-\frac{2}{m+4}}\right)$ (up to logarithmic terms)
when the error of approximation of eigenfunctions is measured in a $H^1(\M)$ sense and the data generating model is such that $\rho \in \mathcal{P}_\M^\alpha$ for some $\M \in \MM_S$ and $\alpha \in (0,1)$. Their results actually allow them to argue that, when $\veps$ is chosen according to $\veps \sim  n^{- \frac{1}{m+4}}$, graph Laplacian eigenpairs are essentially minimax optimal for the problem of estimating eigenpairs of $\Delta_\rho$ when $(\M, \rho)$ is in a certain family of sufficiently regular data generating models. Note that the choice $\veps \sim  n^{- \frac{1}{m+4}}$ is essentially the same (ignoring logarithmic terms) as the optimal choice for the classical density estimation problem when the error of approximation is measured in $L^2(\M)$; see \cite{Cencov:62SM,cencov2000statistical,khas1979lower,bretagnolle1979estimation}.  
\end{remark}

\begin{remark}\label{remark:possibly improved rate}
    We hypothesize that we can relax the condition for $\eps$ in \cref{thm1} to $(\frac{(\log n)^2}{n})^{\frac{1}{m+2}}\ll\eps\ll 1$, which matches the requirement on $\veps$ for our Davis-Kahan type bounds in Propositions \ref{lem:EigenVectorStability} and \ref{lem:EigenvalueStability} to hold. Relaxing the requirement for $\veps$ in this way would allow us to conclude that our CLTs hold for the choice of $\veps$ discussed in Remark \ref{rem:Minimaxity}; note that the choice $\veps \sim n^{-\frac{1}{m+4}}$ is a borderline case in our current result.

   In order to relax the condition on $\veps$ in our main theorems as described above, we would first need to improve the  analysis in \cite{calder2022lipschitz}. Indeed, in that paper, whose results we use in our proofs, it is required that $\eps\gg (\frac{(\log n)^2}{n})^{\frac{1}{m+4}}$ in order to deduce the type of almost $C^{0,1}$ and $L^\infty$ approximation error rates presented in \cref{lem:spectral convergence in literature} below. While we believe that some of the techniques used in \cite{trillos2024} would allow us to extend the results in \cite{calder2022lipschitz}, this extension certainly goes beyond the scope of this work and is thus left as an interesting open problem that is worth exploring in the future.
\end{remark}

Next, we generalize \cref{thm1} and present a CLT for multiple eigenvalues. We require the following stronger eigengap assumption.

\begin{assumption}[Eigengap assumption II]\label{assum:multiple CLT}
Let $l \in \N$. We assume that
    \begin{equation*}
        \min_{1\le j<k\leq l+1} |\lambda_j-\lambda_k|>\gamma_l
    \end{equation*}
    for some $\gamma_l>0$. In particular, all of the smallest $l$ eigenvalues of $\Delta_\rho$ are assumed to be simple.
\end{assumption}

\begin{theorem}\label{thm:multi normal distribution}
Suppose that Assumptions \ref{assump:DataGenerating}, \ref{assump:Kernel}, and \ref{assum:multiple CLT} hold. Let $\Sigma \in\R^{l\times l}$ be the matrix whose entries are given by
\begin{equation*}
    \Sigma_{jk} := \int_\M\Bigl(\lambda_j u_j(x)^2+\lambda_j  - 2|\nabla u_j(x) |^2\rho(x)\Bigr)\Bigl(\lambda_k u_k(x)^2+\lambda_k  - 2|\nabla u_k(x) |^2\rho(x)\Bigr)\rho(x) \dx.
\end{equation*}
Provided $\left(\frac{(\log n)^2}{n}\right)^{\frac{1}{m+4}}\ll\eps\ll 1$, we have
     \begin{equation*}
        \sqrt{n}(\hat{\lambda}^\eps_1-\E[\hat{\lambda}^\eps_1],\dots,\hat{\lambda}^\eps_l-\E[\hat{\lambda}^\eps_l])\dto\mathcal{N}(0,\Sigma),
    \end{equation*}
    as $n \rightarrow \infty$.
\end{theorem}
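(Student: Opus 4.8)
The plan is to reduce the multivariate statement to the univariate machinery behind \cref{thm1} via the Cramér--Wold device. Fix $t=(t_1,\dots,t_l)\in\R^l$ and set
\[
Z_n := \sum_{j=1}^l t_j \sqrt{n}\bigl(\hat{\lambda}_j^\eps - \E[\hat{\lambda}_j^\eps]\bigr);
\]
it suffices to prove $Z_n \dto \mathcal{N}(0,\, t^\top\Sigma t)$ for every such $t$. The first thing to observe is that \cref{assum:multiple CLT} forces each of $\lambda_1,\dots,\lambda_l$ to be simple, with a uniform spectral gap $\gamma_l$, so that all the ingredients of the proof of \cref{thm1} (in particular the leave-one-out perturbation bounds and the bulk/error decomposition) are available simultaneously for every index $j\le l$.

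Concretely, I would reuse the decomposition from the proof of \cref{thm1}: for each $j\le l$ write $\sqrt{n}(\hat{\lambda}_j^\eps-\E[\hat{\lambda}_j^\eps]) = B_j + E_j$, where $B_j$ is the U-statistic-like bulk term whose H\'ajek projection equals $\tfrac1{\sqrt n}\sum_{i=1}^n\bigl(g_j(x_i)-\E[g_j(x_1)]\bigr)$ with influence function
\[
g_j(x) := \lambda_j u_j(x)^2 + \lambda_j - 2|\nabla u_j(x)|^2\rho(x),
\]
and $E_j$ is an error term with $\Var(E_j)\to 0$ under the rate $\bigl(\tfrac{(\log n)^2}{n}\bigr)^{1/(m+4)}\ll\eps\ll 1$ (this is exactly what the Efron--Stein argument in the proof of \cref{thm1} delivers). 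A direct computation using the normalization \eqref{eq:norm u_l=1} and $\int_\M|\nabla u_j(x)|^2\rho^2(x)\dx=\lambda_j$ gives $\E[g_j(x_1)]=0$, hence $\Sigma_{jk}=\int_\M g_j(x)g_k(x)\rho(x)\dx = \mathrm{Cov}(g_j(x_1),g_k(x_1))$ and in particular $\Sigma_{jj}=\sigma_j^2$. Then $Z_n = \sum_j t_j B_j + \sum_j t_j E_j$, and $\Var\bigl(\sum_j t_j E_j\bigr)\le l\sum_j t_j^2\Var(E_j)\to 0$, so the error part is $o_P(1)$ and drops out by Slutsky. The bulk part $\sum_j t_j B_j$ is again a U-statistic-type object whose H\'ajek projection is $\tfrac1{\sqrt n}\sum_{i=1}^n h_t(x_i)$ with $h_t(x):=\sum_j t_j g_j(x)$ (already centered); the same remainder estimate proved in \cref{thm1}, applied termwise, shows $\sum_j t_j B_j - \tfrac1{\sqrt n}\sum_i h_t(x_i)=o_P(1)$. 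Finally $\tfrac1{\sqrt n}\sum_i h_t(x_i)$ is a normalized sum of i.i.d.\ mean-zero random variables with finite variance $\Var(h_t(x_1)) = \sum_{j,k}t_jt_k\,\mathrm{Cov}(g_j(x_1),g_k(x_1)) = t^\top\Sigma t$ (finiteness from $u_j\in C^{3,\alpha}(\M)$ and compactness of $\M$), so the Lindeberg--L\'evy CLT yields $\tfrac1{\sqrt n}\sum_i h_t(x_i)\dto\mathcal{N}(0,\,t^\top\Sigma t)$. Chaining these facts gives $Z_n\dto\mathcal{N}(0,\,t^\top\Sigma t)$, and Cramér--Wold concludes. (Equivalently, one may skip Cramér--Wold and apply the multivariate Lindeberg--L\'evy CLT directly to $\tfrac1{\sqrt n}\sum_i(g_1(x_i),\dots,g_l(x_i))$, whose covariance is $\Sigma$, handling the coordinates' error terms and remainders individually.)

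I expect the main obstacle to be not any new estimate but \emph{simultaneity}: the leave-one-out bounds and U-statistic remainder estimates for the finitely many indices $j=1,\dots,l$ must hold on a common high-probability event, so that the $E_j$ and the projection remainders can be summed. This is handled by intersecting the finitely many events from \cref{thm1}'s proof and using a union bound, exploiting that the single gap constant $\gamma_l$ of \cref{assum:multiple CLT} lower bounds every relevant spectral gap for $j\le l$. The off-diagonal entries $\Sigma_{jk}$ require no extra work: the influence function of a linear combination of the $\hat{\lambda}_j^\eps$ is the corresponding linear combination of the $g_j$, so the cross-covariances fall out automatically from the scalar analysis.
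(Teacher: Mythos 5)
Your proposal is correct and follows essentially the same route as the paper: reduce to linear combinations via Cramér--Wold, reuse the bulk/error decomposition from the proof of \cref{thm1} termwise (the error terms vanish by the Efron--Stein analysis, handled on a common high-probability event), and identify the limit of the bulk part through its projection onto sums of i.i.d.\ influence functions, whose covariance is exactly $\Sigma$. The only cosmetic difference is that you apply the classical CLT to the limiting influence functions $g_j$ rather than checking Lindeberg's condition for the triangular array $\E[g_n(x_i,x_j)\mid x_i]$ as the paper does, which is an equivalent bookkeeping choice.
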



\subsubsection{Consistent estimators for the asymptotic variance}\label{sec:extension}
Since the asymptotic variance $\sigma_l^2$ in \cref{thm1} and the covariance matrix $\Sigma$ in \cref{thm:multi normal distribution} depend on the unknown density $\rho$ (as well as on $\M$), 
it is natural to attempt estimating them. In this section, we discuss some concrete estimators that we can prove are consistent. 

We first define a data driven surrogate for the term $|\nabla u_l|^2 \rho$ appearing in $\sigma_l^2$. Namely, we consider 
\begin{equation}
   \widehat{G}_{l,n}(x_i):= \frac{1}{2n\veps^{m+2}} \sum_{j=1}^n | \hat{u}_l (x_j) - \hat{u}_l(x_i)|^2 \eta\left(\frac{|x_i- x_j|}{\veps} \right), \quad i=1, \dots, n, 
   \label{eqn:SurrogateNormGradient}
\end{equation}
which, as we prove in Lemma \ref{lem:EstimatorNormGradient}, satisfies
\[ \max_{i=1, \dots, n} \left| \widehat{G}_{l,n}(x_i) - |\nabla u_l (x_i)|^2 \rho(x_i) \right| \asto 0,  \]
provided $\veps$ scales appropriately. The above motivates introducing the following estimators for $\sigma_l^2$ and $\Sigma \in \R^{l \times l}$, respectively:
    \begin{equation}\label{eq-def:hat variance}
       \widehat{\sigma}_l^2 := \frac{1}{n}\sum_{i=1}^n\Bigl(\hat{\lambda}^\eps_l\hat{u}_l(x_i)^2+\hat{\lambda}^{\eps}_l - 2 \widehat{G}_{l,n}(x_i)\Bigr)^2,
    \end{equation}
\begin{equation}\label{eq-def:hat Covariance}
       \widehat{\Sigma}_{jk} := \frac{1}{n}\sum_{i=1}^n\Bigl(\hat{\lambda}^\eps_j\hat{u}_j(x_i)^2+\hat{\lambda}^{\eps}_j - 2 \widehat{G}_{j, n}(x_i)\Bigr) \Bigl(\hat{\lambda}^\eps_k\hat{u}_k(x_i)^2+\hat{\lambda}^{\eps}_k - 2 \widehat{G}_{k, n}(x_i) \Bigr).
    \end{equation}


Under suitable assumptions on $\veps$,  $\widehat{\sigma}_l^2$ can be seen to be a consistent estimator for $\sigma_l^2$ and $\widehat{\Sigma}$ a consistent estimator for $\Sigma$. This is the content of the next theorem. 

\begin{theorem}
\label{thm:ConsistentEstimator}
  Suppose that $\left(\frac{(\log n)^2}{n}\right)^{\frac{1}{m+4}}\ll\eps\ll 1$ and that Assumptions \ref{assump:DataGenerating}, \ref{assump:Kernel}, and \ref{assum:eigengap for l} hold. Then
    \begin{equation*}
        \frac{ \widehat{\sigma}_l^2}{\sigma_l^2}\asto 1,
    \end{equation*}
    as $n \rightarrow \infty$.
    
 If, in addition, Assumption \ref{assum:multiple CLT} holds, then
 \[ \widehat{\Sigma} \asto  \Sigma,\]
    as $n\to\infty.$
\end{theorem}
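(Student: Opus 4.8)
The plan is to reduce everything to two ingredients that are already available: (i) the almost-sure convergence of the graph Laplacian eigenpairs to the continuum ones, and (ii) the uniform convergence of the surrogate gradient-norm term $\widehat{G}_{l,n}$ stated in Lemma~\ref{lem:EstimatorNormGradient}. Concretely, I would first record the following facts, all of which are consequences of the spectral convergence results cited in the paper (e.g.\ \cref{lem:spectral convergence in literature} and the accompanying $L^\infty$/almost-$C^{0,1}$ estimates): under the stated scaling of $\veps$ and \cref{assum:eigengap for l}, one has $\hat\lambda_l^\veps \asto \lambda_l$, and there is a choice of sign for $\hat u_l$ (measurable in $\X_n$) such that $\max_{i} |\hat u_l(x_i) - u_l(x_i)| \asto 0$; combined with the fact that $\|u_l\|_{L^\infty(\M)}$ and $\|\nabla u_l\|_{L^\infty(\M)}$ are finite (by the $C^{3,\alpha}$ regularity of $u_l$), this gives uniform control of $\hat u_l(x_i)^2$ as well. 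Together with Lemma~\ref{lem:EstimatorNormGradient}, i.e.\ $\max_i |\widehat{G}_{l,n}(x_i) - |\nabla u_l(x_i)|^2\rho(x_i)| \asto 0$, this means that the \emph{integrand} appearing in $\widehat{\sigma}_l^2$,
\[
  \hat\lambda_l^\veps \hat u_l(x_i)^2 + \hat\lambda_l^\veps - 2\widehat{G}_{l,n}(x_i),
\]
converges, uniformly in $i$, to $\lambda_l u_l(x_i)^2 + \lambda_l - 2|\nabla u_l(x_i)|^2 \rho(x_i)$; note the sign ambiguity of $\hat u_l$ is harmless since only $\hat u_l^2$ enters.

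Next I would pass from the uniform convergence of the integrand to convergence of the empirical average. Write $\Psi_l(x) := \lambda_l u_l(x)^2 + \lambda_l - 2|\nabla u_l(x)|^2\rho(x)$, a bounded continuous (indeed $C^{1,\alpha}$) function on $\M$, so that $\sigma_l^2 = \int_\M \Psi_l(x)^2 \rho(x)\,\dx$. The empirical quantity decomposes as
\[
  \widehat{\sigma}_l^2 = \frac1n\sum_{i=1}^n \Psi_l(x_i)^2 \;+\; \frac1n\sum_{i=1}^n \Bigl( \widehat{\Psi}_{l,n}(x_i)^2 - \Psi_l(x_i)^2 \Bigr),
\]
where $\widehat{\Psi}_{l,n}(x_i)$ denotes the empirical integrand. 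The first sum converges a.s.\ to $\int_\M \Psi_l^2\rho\,\dx = \sigma_l^2$ by the strong law of large numbers (the summands are i.i.d., bounded, with the correct mean). The second sum is bounded in absolute value by $\max_i |\widehat{\Psi}_{l,n}(x_i) - \Psi_l(x_i)| \cdot \max_i (|\widehat{\Psi}_{l,n}(x_i)| + |\Psi_l(x_i)|)$; the first factor goes to $0$ a.s.\ by the previous paragraph, and the second factor is a.s.\ bounded (again by the previous paragraph, since all the pieces are uniformly bounded for $n$ large). Hence $\widehat{\sigma}_l^2 \asto \sigma_l^2$, and since $\sigma_l^2 > 0$ whenever $l \geq 2$ (for $l=1$ the statement is vacuous, as $\Psi_1 \equiv 0$ and both sides vanish — this edge case I would flag and handle separately, perhaps by the convention $0/0 = 1$ or by noting the theorem is only of interest for $l\geq 2$), the ratio $\widehat{\sigma}_l^2/\sigma_l^2 \asto 1$. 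The covariance statement follows by the identical argument applied entrywise: $\widehat{\Sigma}_{jk} = \frac1n\sum_i \widehat{\Psi}_{j,n}(x_i)\widehat{\Psi}_{k,n}(x_i)$, and under \cref{assum:multiple CLT} the uniform convergence $\widehat{\Psi}_{j,n} \to \Psi_j$ holds simultaneously for all $j \leq l$, so each entry converges a.s.\ to $\int_\M \Psi_j\Psi_k\rho\,\dx = \Sigma_{jk}$; since $l$ is fixed and finite, $\widehat{\Sigma} \asto \Sigma$ entrywise, which is convergence of the matrix.

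The main obstacle is really the input results rather than this plan's own logic: everything hinges on having (a) almost-sure uniform ($L^\infty$) convergence of $\hat u_l$ to $u_l$ up to sign, and (b) the conclusion of Lemma~\ref{lem:EstimatorNormGradient}, both under the hypothesis $(\tfrac{(\log n)^2}{n})^{1/(m+4)} \ll \veps \ll 1$. Point (a) is where the $\veps$ scaling in the theorem comes from (it is inherited from the almost-$C^{0,1}$ rates of \cite{calder2022lipschitz} used in \cref{lem:spectral convergence in literature}), and the delicate part is making the sign choice consistent so that $\hat u_l^2 \to u_l^2$ uniformly almost surely along the sequence — but because only the squares $\hat u_l^2$ appear in $\widehat\sigma_l^2$ and $\widehat\Sigma_{jk}$, I can entirely sidestep the sign issue by working with $\hat u_l^2$ and $u_l^2$ directly and never committing to a sign. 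Given those inputs, the rest is the elementary "uniform convergence of integrand $+$ SLLN for the limiting integrand" sandwich described above, with the only mild care needed being the uniform boundedness of the empirical integrands for large $n$ (which follows from $\|u_l\|_{L^\infty}, \|\nabla u_l\|_{L^\infty} < \infty$ and the a.s.\ convergence of $\hat\lambda_l^\veps$).
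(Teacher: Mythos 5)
Your proposal is correct and follows essentially the same route as the paper: uniform (almost-sure) convergence of the empirical integrand to $\lambda_l u_l^2+\lambda_l-2|\nabla u_l|^2\rho$ — obtained from the $L^\infty$ eigenvector estimate of \cref{lem:spectral convergence in literature} together with \cref{lem:EstimatorNormGradient} and Borel--Cantelli — followed by the strong law of large numbers for the limiting integrand, applied entrywise for $\widehat{\Sigma}$. Your extra care about the sign ambiguity, the positivity of $\sigma_l^2$ for $l\geq 2$, and the trivial $l=1$ case is sound and, if anything, slightly more explicit than the paper's own argument.
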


A direct consequence of Slutsky's theorem and Theorems \ref{thm1}, \ref{thm:multi normal distribution}, and \ref{thm:ConsistentEstimator} is the following corollary.

\begin{corollary}
\label{cor1}
Suppose that $\left(\frac{(\log n)^2}{n}\right)^{\frac{1}{m+4}}\ll\eps\ll 1$. Under Assumptions \ref{assump:DataGenerating}, \ref{assump:Kernel}, and \ref{assum:eigengap for l}, we have
    \begin{align*}
  \frac{\sqrt{n}\left(\hat{\lambda}_l^\eps-\E\left[\hat{\lambda}_l^\eps\right]\right)}{\widehat{\sigma}_l}\dto \mathcal{N}\left(0,1\right),
    \end{align*}
    as $n \rightarrow \infty$, where the variance estimator $  \widehat{\sigma}_l^2$ was defined in \eqref{eq-def:hat variance}. 
    
    On the other hand, under Assumptions \ref{assump:DataGenerating}, \ref{assump:Kernel}, and \ref{assum:multiple CLT}, we have
\begin{equation*}
        \sqrt{n} (\widehat{\Sigma}^{\dagger})^{1/2} (\hat{\lambda}^\eps_1-\E[\hat{\lambda}^\eps_1],\dots,\hat{\lambda}^\eps_l-\E[\hat{\lambda}^\eps_l])\dto\mathcal{N}(0,\Sigma^\dagger \Sigma),
    \end{equation*}
    as $n \rightarrow \infty$, where the covariance estimator $  \widehat{\Sigma}$ was defined in \eqref{eq-def:hat Covariance} and $\widehat{\Sigma}^\dagger$ and $\Sigma^\dagger$ are, respectively, the pseudo-inverses of $\widehat{\Sigma}$ and $\Sigma$. 
    
    
\end{corollary}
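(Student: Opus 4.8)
The plan is to derive \cref{cor1} as a direct consequence of \cref{thm1}, \cref{thm:multi normal distribution}, \cref{thm:ConsistentEstimator}, and Slutsky's theorem (together with the continuous mapping theorem for the multivariate part). The key point is that under the hypotheses of the corollary, all three of the cited theorems apply simultaneously, so we may combine their conclusions. No new analysis of the graph Laplacian is required; everything is soft probability once the earlier results are in hand.

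For the scalar statement, first I would invoke \cref{thm1} to get $\sqrt{n}(\hat\lambda_l^\eps - \E[\hat\lambda_l^\eps]) \dto \mathcal{N}(0,\sigma_l^2)$. Since $l$ is fixed and $\hat\lambda_l^\eps$ is nontrivial (recall the $l=1$ case is degenerate and the corollary is only interesting for $l \geq 2$, where $\sigma_l^2 > 0$ under \cref{assum:eigengap for l}; one should note $\sigma_l^2>0$ because $\lambda_l u_l^2 + \lambda_l - 2|\nabla u_l|^2\rho$ cannot vanish $\rho$-a.e.\ unless $u_l$ is constant, contradicting $\lambda_l>0$), the limit is a genuine (non-degenerate) Gaussian. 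Next I would apply \cref{thm:ConsistentEstimator} to conclude $\widehat\sigma_l^2/\sigma_l^2 \asto 1$, hence $\widehat\sigma_l/\sigma_l \asto 1$ by the continuous mapping theorem applied to $t \mapsto \sqrt{t}$ (valid on $(0,\infty)$, where the ratio lives for $n$ large), and therefore $\widehat\sigma_l/\sigma_l \xrightarrow{\mathbb{P}} 1$. Writing
\[
  \frac{\sqrt{n}(\hat\lambda_l^\eps - \E[\hat\lambda_l^\eps])}{\widehat\sigma_l} = \frac{\sqrt{n}(\hat\lambda_l^\eps - \E[\hat\lambda_l^\eps])}{\sigma_l} \cdot \frac{\sigma_l}{\widehat\sigma_l},
\]
the first factor converges in distribution to $\mathcal{N}(0,1)$ by \cref{thm1} and the scaling property of Gaussians, while the second factor converges in probability to $1$; Slutsky's theorem then yields the claimed $\mathcal{N}(0,1)$ limit.

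For the multivariate statement, I would proceed analogously. Write $Z_n := \sqrt{n}(\hat\lambda_1^\eps - \E[\hat\lambda_1^\eps], \dots, \hat\lambda_l^\eps - \E[\hat\lambda_l^\eps]) \in \R^l$. By \cref{thm:multi normal distribution}, $Z_n \dto \mathcal{N}(0,\Sigma)$. By \cref{thm:ConsistentEstimator}, $\widehat\Sigma \asto \Sigma$, hence also in probability. One subtlety is that $\Sigma$ may be singular, so I use the Moore--Penrose pseudo-inverse and its square root. The map $A \mapsto (A^\dagger)^{1/2}$ is continuous at $A = \Sigma$ \emph{provided the rank of $A$ is locally constant there}; since $\widehat\Sigma \to \Sigma$ with $\widehat\Sigma$ eventually of the same rank as $\Sigma$ (a point that should be briefly justified — e.g.\ by noting that $\widehat\Sigma$ is, like $\Sigma$, a Gram matrix of the corresponding surrogate functions in $L^2(\X_n)$, and the consistency in \cref{thm:ConsistentEstimator} together with the linear independence of the limiting functions forces $\mathrm{rank}(\widehat\Sigma) = \mathrm{rank}(\Sigma)$ for large $n$), the continuous mapping theorem gives $(\widehat\Sigma^\dagger)^{1/2} \xrightarrow{\mathbb{P}} (\Sigma^\dagger)^{1/2}$. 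Then Slutsky's theorem (the vector-valued version: if $X_n \dto X$ and $M_n \xrightarrow{\mathbb{P}} M$ deterministic, then $M_n X_n \dto MX$) applied to $M_n = \sqrt{n}^{-1}\cdot\sqrt{n}(\widehat\Sigma^\dagger)^{1/2}$ — more precisely to $(\widehat\Sigma^\dagger)^{1/2} Z_n$ — yields $(\widehat\Sigma^\dagger)^{1/2} Z_n \dto (\Sigma^\dagger)^{1/2} \mathcal{N}(0,\Sigma) = \mathcal{N}(0, (\Sigma^\dagger)^{1/2}\Sigma(\Sigma^\dagger)^{1/2})$. Finally, since $\Sigma$ is symmetric PSD, $(\Sigma^\dagger)^{1/2}$ commutes with $\Sigma$ on the range of $\Sigma$, so $(\Sigma^\dagger)^{1/2}\Sigma(\Sigma^\dagger)^{1/2} = \Sigma^\dagger \Sigma$ (the orthogonal projection onto the range of $\Sigma$), which matches the stated covariance $\Sigma^\dagger\Sigma$.

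The main obstacle — really the only non-bookkeeping point — is the rank/continuity issue in the multivariate case: the pseudo-inverse is not continuous at a singular matrix in general, so one must argue that $\widehat\Sigma$ and $\Sigma$ share the same rank for all large $n$ almost surely, and that convergence of $\widehat\Sigma$ to $\Sigma$ under this rank constraint implies convergence of the pseudo-inverses (and their square roots). Everything else is a routine application of Slutsky's theorem and the continuous mapping theorem, plus the observation that the limiting variances are strictly positive (scalar case) so that the self-normalization is well-defined asymptotically.
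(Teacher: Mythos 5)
Your proposal is correct and follows exactly the route the paper takes: the paper offers no proof beyond the remark that the corollary is "a direct consequence of Slutsky's theorem and Theorems \ref{thm1}, \ref{thm:multi normal distribution}, and \ref{thm:ConsistentEstimator}." Your additional observations — that $\sigma_l^2>0$ for $l\ge 2$ (cleanest seen by noting that at a critical point of $u_l$ the integrand equals $(\lambda_l u_l^2+\lambda_l)^2>0$) and that the pseudo-inverse map is only continuous along sequences of constant rank, so one must check $\mathrm{rank}(\widehat\Sigma)=\mathrm{rank}(\Sigma)$ eventually — are genuine points of care that the paper silently glosses over, and are worth retaining.
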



\subsubsection{On centralization of $\hat{\lambda}_l^\veps$ around $\Delta_\rho$'s eigenvalues}
\label{ss:bias}

A natural question arising from our main theorems is whether it is possible to replace $\E[\hat{\lambda}_l^\veps]$ with $\lambda_l$ (i.e., the $l$-th eigenvalue of the operator $\Delta_\rho$) in the CLTs from \cref{sec:MainResults}. If this were possible, we would be able to obtain asymptotic confidence intervals for the eigenvalues of the operator $\Delta_\rho$. This naturally motivates analyzing the bias term $ \E[\hat{\lambda}_l^\veps] - \lambda_l$ and identifying conditions under which 
\begin{equation}
   \sqrt{n}\left| \E[\hat{\lambda}_l^\veps] -\lambda_l  \right| \rightarrow 0, 
   \label{eqn:VanishingBias}
\end{equation}
as $n \rightarrow \infty$. We believe that, for a \textit{generic} model $(\M, \rho)$, the above can only hold when the dimension restriction $m \leq 3$ is satisfied. In section \ref{ss:dim}, we use numerical experiments to explore this dimension restriction and in what follows we discuss some of the theoretical reasons why we believe this condition is necessary.  

First, using the fact that $\Delta_n$ is positive semi-definite, it is straightforward to verify that
\begin{equation}
(\hat{\lambda}_l^\veps - \lambda_l) \langle \hat{u}_l, u_l  \rangle_{L^2(\X_n)} =  \langle \Delta_n u_l - \Delta_\rho u_l , u_l  \rangle_{L^2(\X_n)}   +  \langle \Delta_n u_l - \Delta_\rho u_l , \hat{u}_l - u_l  \rangle_{L^2(\X_n)}.  
    \label{eqn:BiasDecomp}
\end{equation}
The expectation of the first term on the right hand side of the above expression satisfies the following estimate in terms of $\veps$.
\begin{theorem}\label{lem:spectral convergence bias control}
Under Assumptions \ref{assump:DataGenerating} and \ref{assump:Kernel}, if $ \left(\frac{\log n}{n}\right)^{\sfrac{1}{m}} \ll \veps \ll 1$, we have
\begin{equation}
   \left| \E[  \langle  \Delta_n u_l - \Delta_\rho u_l, u_l \rangle_{L^2(\X_n)} ] \right| \leq C_l \veps^2. 
   \label{eqn:UpperBias1}
\end{equation}
\end{theorem}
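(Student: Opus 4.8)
The plan is to expand the quantity $\langle \Delta_n u_l - \Delta_\rho u_l, u_l\rangle_{L^2(\X_n)}$ directly and take expectations, exploiting the fact that this is a genuine expectation (not a high-probability estimate) so that all randomness can be integrated out exactly. Writing out the definitions,
\[
\langle \Delta_n u_l, u_l\rangle_{L^2(\X_n)} = \frac{1}{2n^2\veps^{m+2}} \sum_{i,j} \eta\!\left(\frac{|x_i-x_j|}{\veps}\right)\bigl(u_l(x_i)-u_l(x_j)\bigr)^2,
\]
which is a (scaled) U-statistic of order two with kernel $h_\veps(x,y) = \tfrac{1}{2\veps^{m+2}}\eta(|x-y|/\veps)(u_l(x)-u_l(y))^2$. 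Its expectation is $\tfrac{n-1}{n}\E[h_\veps(x_1,x_2)] = \tfrac{n-1}{n}\int_\M\int_\M h_\veps(x,y)\rho(x)\rho(y)\,\dd y\,\dd x$, plus a diagonal term of order $O(1/n)$ which, under the assumed scaling $\veps \gg (\log n/n)^{1/m}$, is $o(\veps^2)$ and hence absorbed. Similarly $\E[\langle\Delta_\rho u_l, u_l\rangle_{L^2(\X_n)}] = \int_\M \Delta_\rho u_l(x)\,u_l(x)\,\rho(x)\,\dd x = \lambda_l$, using \eqref{eq:eigenform} and the normalization \eqref{eq:norm u_l=1}. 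So the whole problem reduces to a deterministic estimate: show that
\[
\left| \int_\M \left( \int_\M h_\veps(x,y)\,\rho(y)\,\dd y \right)\rho(x)\,\dd x - \lambda_l \right| \leq C_l\,\veps^2 .
\]

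The core of the argument is then a pointwise Taylor-expansion / localization estimate for the inner integral. Fixing $x\in\M$ and passing to Riemannian normal coordinates centered at $x$ (legitimate since $\veps \le \min\{R_\M/2, K_\M^{-1/2}, i_\M\}$ by Assumption \ref{assum:Epsilon}), I would substitute $y = \exp_x(\veps v)$, Taylor-expand $u_l(y) - u_l(x)$ to second order, Taylor-expand $\rho(\exp_x(\veps v))$, and Taylor-expand the Riemannian volume element (which contributes a curvature correction at order $\veps^2$). The leading term, after the change of variables, is $\rho(x)\cdot \tfrac12\int_{\R^m}\eta(|v|)\,|\langle\nabla u_l(x),v\rangle|^2\,\dd v = \rho(x)\,|\nabla u_l(x)|^2$ by the kernel normalization \eqref{eqn:AssumpKernel} and symmetry (odd moments vanish). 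The next-order term is $O(\veps^2)$ with a constant controlled by $\|u_l\|_{C^{3,\alpha}}$, $\|\rho\|_{C^{2,\alpha}}$, and the curvature bounds $K_\M$ — here I would invoke the $C^{3,\alpha}$ regularity of $u_l$ recorded after \eqref{eq:eigenform} and the $C^{2,\alpha}$ regularity of $\rho$ from Definition \ref{def:DensityClass}. Integrating against $\rho(x)\,\dd x$, the leading term gives $\int_\M |\nabla u_l|^2\rho^2\,\dd x = \lambda_l$ (the identity displayed just before \cref{sec:MainResults}), and the remainder is $\leq C_l\veps^2$. This is essentially a quantitative version of the classical pointwise consistency of graph Laplacians (as in \cite{calder2019consistency, trillos2018variational}), and the required expansions are standard; the only subtlety is keeping all constants uniform over $\M\in\MM_S$ and $\rho\in\mathcal P_\M^\alpha$, which is exactly what Assumption \ref{assum:Epsilon} and the regularity classes were set up to guarantee.

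The main obstacle, modest as it is, lies in handling the boundary-layer region where $\exp_x$ fails to be a diffeomorphism or where the geodesic ball $B_\veps(x)$ and the Euclidean ball $B_\veps(x)\cap\M$ differ — more precisely, in controlling the discrepancy between the extrinsic distance $|x-y|$ (which appears in $\eta$) and the intrinsic geodesic distance $d_\M(x,y)$. These differ by a term of order $d_\M(x,y)^3$ (governed by the reach/second fundamental form), so $\eta(|x-y|/\veps)$ and $\eta(d_\M(x,y)/\veps)$ differ on a set of $v$-measure $O(\veps^2)$ near $|v|=1$, using Lipschitz continuity of $\eta$; this contributes an additional $O(\veps^2)$ error, again with a constant depending only on $R_\M$ and $\|u_l\|_{C^1}$. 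I expect this to be the one place requiring genuine care rather than routine bookkeeping, and it is precisely the kind of estimate already developed in the references the paper relies on, so I would cite and adapt rather than redo it from scratch. Putting the U-statistic computation, the normal-coordinate expansion, and the extrinsic-versus-intrinsic distance correction together yields \eqref{eqn:UpperBias1} with $C_l$ depending on $l$ through $\|u_l\|_{C^{3,\alpha}}$ and $\lambda_l$.
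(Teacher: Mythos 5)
Your overall route is sound but genuinely different from the paper's text: the paper proves \eqref{eqn:UpperBias1} in one line by invoking \cite[Proposition 3.8]{trillos2024} with $g=u_l$ (together with $[u_l]_{1,\veps}\le C_l$), whereas you re-derive the estimate from scratch. The deterministic core of your argument — reduce to $|\E[h_\veps(x_1,x_2)]-\lambda_l|\le C_l\veps^2$, then normal coordinates, Taylor expansion of $u_l$, $\rho$ and the volume element via \eqref{e.Jactaylor}, cancellation of the odd-in-$v$ terms by symmetry of the radial kernel, the normalization \eqref{eqn:AssumpKernel}, and the Euclidean-versus-geodesic correction \eqref{e.comparegeodesic} contributing $O(\veps^2)$ — is exactly the computation the paper outsources to the citation, and it is correct given $u_l\in C^{3,\alpha}$ and $\rho\in C^{2,\alpha}$. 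What your version buys is self-containedness; what the citation buys is that all uniformity-of-constants bookkeeping is already packaged.

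There is, however, one concrete step that does not hold as written: your treatment of the $O(1/n)$ term. First, the diagonal $i=j$ terms vanish identically (the kernel contains $(u_l(x_i)-u_l(x_i))^2=0$), so there is no diagonal contribution to absorb; what actually remains is the prefactor $\tfrac{n-1}{n}$, i.e.
\begin{equation*}
\E\bigl[\langle \Delta_n u_l, u_l\rangle_{L^2(\X_n)}\bigr]=\frac{n-1}{n}\,\E[h_\veps(x_1,x_2)],
\qquad\text{so the bias equals }\ \bigl(\E[h_\veps]-\lambda_l\bigr)-\frac{1}{n}\E[h_\veps]=O(\veps^2)-\frac{\lambda_l}{n}+O\!\left(\frac{\veps^2}{n}\right).
\end{equation*}
Your claim that this $O(1/n)$ piece is $o(\veps^2)$ requires $n\veps^2\to\infty$, i.e. $\veps\gg n^{-1/2}$. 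The stated hypothesis $\veps\gg(\log n/n)^{1/m}$ guarantees this only for $m\ge 2$; for $m=1$ the admissible range contains, say, $\veps\sim n^{-3/4}$, where $\lambda_l/n\gg\veps^2$ and the step fails for every $l\ge 2$ (only $\lambda_1=0$ is immune). You should therefore either add the (mild) condition $n\veps^2\gtrsim 1$ — automatic for $m\ge 2$, and comfortably satisfied in the regime $\veps\gg((\log n)^2/n)^{1/(m+4)}$ where the bound is actually used in Corollary \ref{cor:BiasEstimate} — or state the conclusion as $C_l(\veps^2+n^{-1})$ and track the extra term. Apart from this bookkeeping of the $\tfrac{n-1}{n}$ factor, your expansion delivers the required $|\E[h_\veps]-\lambda_l|\le C_l\veps^2$ and the theorem follows.
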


For $\veps$ in the more restrictive regime $\left(\frac{(\log n)^2}{n}\right)^{\frac{1}{m+4}}\ll\eps\ll 1$, we can prove that the other terms in \eqref{eqn:BiasDecomp} are negligible and thus the bias $\lambda_l  - \E[\hat{\lambda}_l^\veps] $ is determined by $\E[  \langle  \Delta_n u_l - \Delta_\rho u_l, u_l \rangle_{L^2(\X_n)} ]$. Therefore, Theorem \ref{lem:spectral convergence bias control} implies that the bias is quadratic in $\veps$. In precise terms, we have the following. 
\begin{corollary}
Under Assumptions \ref{assump:DataGenerating}, \ref{assump:Kernel}, and \eqref{assum:eigengap for l}, if $\left(\frac{(\log n)^2}{n}\right)^{\frac{1}{m+4}}\ll\eps\ll 1$, we have
\begin{equation}
   \left| \lambda_l - \E[ \hat{\lambda}_l^\veps] \right| \leq C_l \veps^2.
   \label{eqn:UpperBias2}
\end{equation}
\label{cor:BiasEstimate}
\end{corollary}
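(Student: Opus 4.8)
The plan is to combine the decomposition \eqref{eqn:BiasDecomp} with \cref{lem:spectral convergence bias control} and the Davis--Kahan type stability bounds referenced in \cref{remark:possibly improved rate} (Propositions \ref{lem:EigenVectorStability} and \ref{lem:EigenvalueStability}) together with the almost-$C^{0,1}$ spectral convergence rates from \cite{calder2022lipschitz} (quoted here as \cref{lem:spectral convergence in literature}). Rearranging \eqref{eqn:BiasDecomp}, write
\[
\hat{\lambda}_l^\veps - \lambda_l = \frac{\langle \Delta_n u_l - \Delta_\rho u_l, u_l\rangle_{L^2(\X_n)} + \langle \Delta_n u_l - \Delta_\rho u_l, \hat{u}_l - u_l\rangle_{L^2(\X_n)}}{\langle \hat{u}_l, u_l\rangle_{L^2(\X_n)}}.
\]
So it suffices to take expectations and show: (i) $\bigl|\E[\langle \Delta_n u_l - \Delta_\rho u_l, u_l\rangle_{L^2(\X_n)}]\bigr| \le C_l\veps^2$, which is exactly \cref{lem:spectral convergence bias control}; (ii) the cross term $\E[\langle \Delta_n u_l - \Delta_\rho u_l, \hat{u}_l - u_l\rangle_{L^2(\X_n)}]$ is $o(\veps^2)$ (or at worst $O(\veps^2)$); and (iii) the denominator $\langle \hat{u}_l, u_l\rangle_{L^2(\X_n)}$ is bounded away from $0$ with sufficiently high probability and in an $L^1$ sense, so that dividing does not destroy the estimate.

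For step (iii), under \cref{assum:eigengap for l} the eigengap $\gamma_l$ lets one invoke the eigenvector stability proposition to conclude $\|\hat{u}_l - u_l\|_{L^2(\X_n)} \to 0$ (with a rate, and with high probability), so that $\langle \hat{u}_l, u_l\rangle_{L^2(\X_n)} = 1 - \tfrac12\|\hat{u}_l - u_l\|_{L^2(\X_n)}^2 + o(\cdot) \to 1$; one must also control the low-probability event where the bound fails, using the crude deterministic bounds $|\hat{\lambda}_l^\veps| \lesssim \veps^{-2}$ and $|\langle \hat u_l, u_l\rangle| \le 1$ plus the exponentially small failure probability from \cref{assum:Epsilon}, so that the contribution of that event to the expectation is negligible after multiplication by $\sqrt n$. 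For step (ii), bound the cross term by a Cauchy--Schwarz / operator-norm argument: $\|\Delta_n u_l - \Delta_\rho u_l\|_{L^\infty(\X_n)} = o(1)$ (in fact with a polynomial-in-$\veps$ rate coming from \cite{calder2022lipschitz} / \cite{calder2019consistency}) and $\|\hat{u}_l - u_l\|_{L^2(\X_n)}$ has a rate from the Davis--Kahan proposition; the product of the two rates, under the hypothesis $\bigl(\tfrac{(\log n)^2}{n}\bigr)^{1/(m+4)} \ll \veps \ll 1$, should be controllably small — the point is that each factor is, up to logs, $O(\veps^2)$-ish or smaller in this regime, so their product is $o(\veps^2)$.

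The main obstacle I expect is step (ii): making the cross term genuinely $O(\veps^2)$ rather than merely $o(1)$ requires sharp two-sided control on both $\Delta_n u_l - \Delta_\rho u_l$ (pointwise, in expectation) and on $\hat{u}_l - u_l$, and naively multiplying worst-case bounds may only give $o(\veps)$ or worse unless one exploits cancellation or the fact that the relevant quantities are already mean-zero-ish. A cleaner route, and probably the one the authors intend, is to note that the theorem only claims the \emph{upper} bound $|\lambda_l - \E[\hat\lambda_l^\veps]| \le C_l\veps^2$, so it is enough that the cross term is $O(\veps^2)$; since $\veps \gg \bigl(\tfrac{(\log n)^2}{n}\bigr)^{1/(m+4)}$ forces $\veps^4 \gg (\log n)^2/n \cdot \veps^{-m}$, the statistical fluctuation scales and the Davis--Kahan rates (which are governed by the same $\veps^2$ bias plus $n^{-1/2}\veps^{-(m+2)/2}$-type variance) are both $O(\veps^2)$ in this window, so the product is $O(\veps^4) = o(\veps^2)$. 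One then assembles: take expectation of the displayed identity, apply \cref{lem:spectral convergence bias control} to the first term, the product bound to the second, divide by the denominator controlled via \cref{lem:EigenVectorStability}, and truncate on the good event — yielding \eqref{eqn:UpperBias2}.
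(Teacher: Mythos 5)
Your outline follows the paper's route: start from \eqref{eqn:BiasDecomp}, control the leading term by \cref{lem:spectral convergence bias control}, show the remaining contributions are $o(\veps^2)$ using the high-probability rates of \cref{lem:spectral convergence in literature} plus a truncation on the bad event with crude deterministic bounds. However, there is one step that, as written, does not go through. You divide by $\langle \hat{u}_l, u_l\rangle_{L^2(\X_n)}$ \emph{before} taking expectations and then claim that ``the first term is exactly \cref{lem:spectral convergence bias control}.'' It is not: that theorem bounds $\E[\langle \Delta_n u_l - \Delta_\rho u_l, u_l\rangle_{L^2(\X_n)}]$, whereas after division you need to bound the expectation of this random quantity multiplied by the random factor $1/\langle \hat{u}_l, u_l\rangle_{L^2(\X_n)}$, and an expectation of a quotient is not the quotient of expectations. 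To repair this within your scheme you would have to expand $\frac{1}{\langle \hat u_l, u_l\rangle} = 1 + \frac{1-\langle \hat u_l, u_l\rangle}{\langle \hat u_l, u_l\rangle}$ and show the correction term contributes $o(\veps^2)$, which requires essentially the same estimates again. The paper avoids the division entirely: it keeps the identity in product form, writes $(\hat\lambda_l^\veps - \lambda_l)\langle\hat u_l, u_l\rangle = (\hat\lambda_l^\veps-\lambda_l) - (\hat\lambda_l^\veps-\lambda_l)\langle u_l - \hat u_l, u_l\rangle$, and so obtains the additive bound
\begin{align*}
\bigl| (\E[\hat{\lambda}_l^\veps] -\lambda_l)  - \E [  \langle \Delta_n u_l - \Delta_\rho u_l , u_l  \rangle_{L^2(\X_n)}  ]\bigr|
\leq \E \bigl[ |\hat{\lambda}_l^\veps - \lambda_l|\, | \langle u_l- \hat{u}_l, u_l  \rangle_{L^2(\X_n)}|  \bigr]
+ \E\bigl[ |\langle \Delta_n u_l - \Delta_\rho u_l , \hat{u}_l - u_l  \rangle_{L^2(\X_n)}|  \bigr],
\end{align*}
with both error terms $o(\veps^2)$ by \cref{lem:spectral convergence in literature} (eigenvalue rate $O(\veps^2+t)$, eigenvector $L^2$ rate, and pointwise rate $O(\veps + t/\sqrt\veps)$ with $t\sim\veps^{3/2}$, all available because $n\veps^{m+4}\gg(\log n)^2$), plus the exponentially unlikely bad event. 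Then \eqref{eqn:UpperBias1} finishes the proof.

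Two smaller points. First, the bound $\|\hat u_l - u_l\|_{L^2(\X_n)}\to 0$ with a rate comes from \cref{lem:spectral convergence in literature}, not from \cref{lem:EigenVectorStability}: that proposition (and \cref{lem:EigenvalueStability}) compares the spectrum of $\Delta_n^{\X_n}$ with that of the leave-one-out Laplacian $\Delta_{n-1}^{(i)}$, not with $(\lambda_l,u_l)$, so it is not the right tool here. Second, the remark about multiplying by $\sqrt n$ is irrelevant for this corollary, which claims only $|\lambda_l - \E[\hat\lambda_l^\veps]|\leq C_l\veps^2$ with no $\sqrt n$ scaling; the truncation only needs the bad-event probability times the crude bound $|\hat\lambda_l^\veps|\lesssim \veps^{-(m+2)}$-type quantities to be $o(\veps^2)$, which it is. Your closing quantitative reasoning (pointwise error $O(\veps)$ up to logs times eigenvector error $O(\veps^2)$ up to logs gives $o(\veps^2)$) is correct and is exactly what makes the paper's two error terms negligible.
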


Now, as discussed in Remark \ref{rem:LowerBoundBias}, for a \textit{generic} model $(\M, \rho)$, we should also have the lower bound
\[ \left| \E[  \langle  \Delta_n u_l - \Delta_\rho u_l, u_l \rangle_{L^2(\X_n)} ] \right| \geq c_l \veps^2.  \]
If we consider $\veps$ smaller than $\left(\frac{(\log n)^2}{n}\right)^{\frac{1}{m+4}}$ (i.e., outside the regime in Corollary \ref{cor:BiasEstimate}), then it is reasonable to expect that either the other terms in \eqref{eqn:BiasDecomp} (we mean, other than the first term on the right hand side of \eqref{eqn:BiasDecomp}) contribute to a bias that is larger than $\veps^2$ (something that should intuitively happen as we approach the connectivity threshold $(\log(n)/n)^{1/m}$) or these other terms are still negligible and we thus have that the size of the bias continues to be determined by the term on the left hand side of \eqref{eqn:UpperBias2} (which is not smaller than $c_l\veps^2$). In either case, the above discussion suggests that the bias should not be smaller than $O(\veps^2)$. In particular, it would seem as if $ \sqrt{n }\veps^2 \rightarrow 0$ was a necessary condition for  \eqref{eqn:VanishingBias} to hold. This condition, in turn, would force $\veps$ to satisfy 
\begin{equation}
 \left(\frac{\log(n)}{n}\right)^{1/m} \ll \veps \ll  \frac{1}{n^{1/4}},  
    \label{eqn:EpsilonCentralization}
\end{equation}
which implicitly forces the manifold's dimension to satisfy $m \leq 3$.

As a second argument supporting a dimension restriction like the one discussed above, we note that this type of condition shows up in other related settings. For example, in the stochastic homogenization setting with a fixed Laplacian operator and a scaled random drift term discussed in \cite{gu2016scaling} there is no dimension restriction in the CLTs for the eigenvalues of the random operator when these are centralized around their expectations \cite{gu2016scaling}, while the condition $m\leq 3$ is required to centralize around the limiting operator's eigenvalues (see \cite{bal2008central}).

\begin{remark}
  While the above discussion suggests that centralization around $\lambda_l$ may only be possible when $m \leq 3$ (and $\veps$ satisfies \eqref{eqn:EpsilonCentralization}), we want to highlight that the admissible range of $\veps$ in our results from section \ref{sec:MainResults} is not compatible with the condition \eqref{eqn:EpsilonCentralization}. For this reason, at this stage we cannot provide a CLT for graph Laplacian eigenvalues centering around $\Delta_\rho$'s eigenvalues wvene for dimension $m=1$.
  
  Nevertheless, as already suggested in Remark \ref{remark:possibly improved rate}, we expect to be able to expand the range of $\veps$ for which our CLTs apply. Extending the range of $\veps$ in the way suggested in that remark would allow us to establish a CLT centered around $\lambda_l$ when $m =1$. Determining whether it is possible to extend the range of $\veps$ even further, and perhaps cover the full range \eqref{eqn:EpsilonCentralization}, is an open problem that we leave for future research.  
  


\end{remark}

\subsection{Statistical Interpretation of the Asymptotic Variance}
\label{sec:StateInterpretation}

In this section, we state the precise Cramer-Rao lower bound that we announced earlier in this introduction. We start by making the notion of unbiased estimator in this setting precise.

\begin{definition}
\label{def:UnbiasedEstimator}
Let $\M \in \MM_S$ and $\alpha \in (0,1)$. We say that $\tilde{\lambda}_l: \M^n \rightarrow \R$ is an unbiased estimator for $\lambda_l$ if for every $\rho \in\mathcal{P}_\M^\alpha$ we have
\[  \E _{\rho} [\tilde{\lambda}_l(x_1, \dots, x_n)] = \lambda_l(\rho), \]
where $\lambda_l(\rho)$ is the $l$-th eigenvalue of $\Delta_\rho$. Here, by $\E_\rho[ \cdot]$ we mean the expectation when we assume $x_1, \dots,x_n \sim \rho$.
\end{definition}

\begin{theorem}
For any unbiased estimator $\tilde{\lambda}_l$ for $\lambda_l$ (according to Definition \ref{def:UnbiasedEstimator}) we must have
\[ \Var_\rho(\tilde \lambda_l) \geq \frac{\sigma_l^2}{n}, \]
for all $\rho \in \mathcal{P}_\M^\alpha$ for which $\lambda_l(\rho)$ is simple (i.e., for which Assumption \ref{assum:eigengap for l} holds). In the above, $\sigma_l^2= \sigma_l^2(\rho)$ is as in \eqref{eq-def:sigma^2} and $\Var_\rho$ denotes variance when we assume $x_1, \dots, x_n \sim \rho$.
\label{thm:CramerRao}
\end{theorem}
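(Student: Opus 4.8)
The plan is to establish the Cramér–Rao bound by reducing it to a classical one-parameter Cramér–Rao inequality along a well-chosen family of perturbations of $\rho$, and then optimizing over the perturbation direction. Fix $\rho \in \mathcal{P}_\M^\alpha$ with $\lambda_l(\rho)$ simple, and consider a smooth one-parameter family $\rho_t$ with $\rho_0 = \rho$, obtained by exponentiating a tangent direction: for a bounded function $\psi : \M \to \R$ with $\int_\M \psi \, \rho \, \dx = 0$, set $\rho_t := \rho \,(1 + t\psi)/Z_t$ (or the analogous exponential-family form), where $Z_t$ normalizes. For $|t|$ small this stays in $\mathcal{P}_\M^\alpha$, so unbiasedness of $\tilde\lambda_l$ forces $\E_{\rho_t}[\tilde\lambda_l] = \lambda_l(\rho_t)$ identically in $t$. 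Differentiating at $t=0$ on the left gives the usual score identity $\frac{\dd}{\dd t}\big|_{t=0}\E_{\rho_t}[\tilde\lambda_l] = n\, \mathrm{Cov}_\rho(\tilde\lambda_l(x_1,\dots,x_n), \psi(x_1))$ — using the product structure of the $n$-fold density and $\E_\rho[\psi]=0$ — while differentiating the right side gives $\frac{\dd}{\dd t}\big|_{t=0}\lambda_l(\rho_t)$, the first-order sensitivity of the eigenvalue under the perturbation.

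The second ingredient is the perturbation formula for $\lambda_l(\rho_t)$. Since $\lambda_l$ is simple, standard analytic perturbation theory (Kato) applies once we verify the family $t \mapsto \Delta_{\rho_t}$ depends analytically on $t$ in a suitable sense; the key computation is to differentiate the Rayleigh quotient characterization, using $\lambda_l(\rho_t) = \int_\M |\nabla u_l^t|^2 \rho_t^2 \, \dx$ with $\int_\M (u_l^t)^2 \rho_t \, \dx = 1$. Differentiating at $t=0$, the terms involving $\dot u_l := \partial_t u_l^t|_{t=0}$ cancel by the eigenvalue equation and the normalization constraint (this is the Hellmann–Feynman-type cancellation), leaving only the explicit $\rho$-derivative. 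With $\rho_t = \rho(1+t\psi) + O(t^2)$ one computes
\[
\frac{\dd}{\dd t}\Big|_{t=0} \lambda_l(\rho_t) = \int_\M \Bigl( 2|\nabla u_l|^2 \rho - \lambda_l u_l^2 - \lambda_l \Bigr) \psi \, \rho \, \dx,
\]
where the $\lambda_l u_l^2$ term comes from differentiating the normalization constraint and the $-\lambda_l$ term from differentiating the normalization factor $Z_t$ (equivalently, the constraint $\int \psi \rho = 0$ lets us add any constant to the bracket). Writing $g(x) := \lambda_l u_l(x)^2 + \lambda_l - 2|\nabla u_l(x)|^2 \rho(x)$, so that $\sigma_l^2 = \int_\M g^2 \rho \, \dx = \mathrm{Var}_\rho(g(x_1))$ (since $\int_\M g \, \rho \, \dx = \lambda_l + \lambda_l - 2\lambda_l = 0$ by the identity $\int |\nabla u_l|^2 \rho^2 = \lambda_l$), the two displayed derivatives combine into the identity
\[
n\, \mathrm{Cov}_\rho\bigl(\tilde\lambda_l, \psi(x_1)\bigr) = -\int_\M g \, \psi \, \rho \, \dx = -\mathrm{Cov}_\rho\bigl(g(x_1), \psi(x_1)\bigr),
\]
valid for every admissible $\psi$.

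Finally, take $\psi = -g$ (which is bounded, being a polynomial in $u_l$, $|\nabla u_l|^2$, $\rho$ — all bounded under our regularity assumptions — and satisfies $\int_\M g \rho \, \dx = 0$). Then the right-hand side is $\mathrm{Var}_\rho(g(x_1)) = \sigma_l^2$, so $n\,\mathrm{Cov}_\rho(\tilde\lambda_l, -g(x_1)) = \sigma_l^2$. By the Cauchy–Schwarz inequality,
\[
\sigma_l^2 = n\,\mathrm{Cov}_\rho\bigl(\tilde\lambda_l, -g(x_1)\bigr) \le n \sqrt{\mathrm{Var}_\rho(\tilde\lambda_l)}\,\sqrt{\mathrm{Var}_\rho(g(x_1))} = n \sqrt{\mathrm{Var}_\rho(\tilde\lambda_l)}\,\sigma_l,
\]
which rearranges to $\mathrm{Var}_\rho(\tilde\lambda_l) \ge \sigma_l^2/n$, as claimed. (One should also dispatch the degenerate case $\sigma_l^2 = 0$ trivially, and note $l=1$ is excluded or trivial since $g \equiv 0$ there.) I expect the main obstacle to be the rigorous justification of differentiating $\lambda_l(\rho_t)$ at $t=0$ and the cancellation of the $\dot u_l$ terms — i.e., confirming that the perturbed eigenpair is differentiable in $t$ with the expected formula. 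This requires checking that $\rho \mapsto \Delta_\rho$ is smooth enough as a map into, say, bounded operators between appropriate Sobolev spaces, and invoking simplicity of $\lambda_l$ together with the $C^{2,\alpha}$ regularity of $\rho_t$; the elliptic regularity facts quoted earlier in the paper (giving $u_l \in C^{3,\alpha}$) and Kato's perturbation theory should suffice, but care is needed to keep $\rho_t$ inside $\mathcal{P}_\M^\alpha$ uniformly for small $t$.
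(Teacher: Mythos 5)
Your overall route is the same as the paper's: perturb the density along $\rho_t = \rho + t\,\psi\rho$ with $\psi = -g$ where $g = \lambda_l u_l^2 + \lambda_l - 2|\nabla u_l|^2\rho$, use unbiasedness to equate $\frac{d}{dt}\lambda_l(\rho_t)\big|_{t=0}$ (computed via the Kato/Fredholm perturbation formula, which is exactly the paper's Appendix computation yielding \eqref{eqn:PerturbationEigenvalues}) with the derivative of $\E_{\rho_t}[\tilde\lambda_l]$, and finish with Cauchy--Schwarz. The regularity point you flag ($u_l \in C^{3,\alpha}$ so that $\rho_t$ stays in $\mathcal{P}_\M^\alpha$ for small $|t|$) is also how the paper handles admissibility of the perturbation.

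However, your final inequality chain does not give the stated bound. You first rewrite the score identity as $n\,\mathrm{Cov}_\rho(\tilde\lambda_l, \psi(x_1))$ and then apply Cauchy--Schwarz to the single-coordinate covariance:
\begin{equation*}
\sigma_l^2 = n\,\mathrm{Cov}_\rho\bigl(\tilde\lambda_l, -g(x_1)\bigr) \le n\,\sqrt{\Var_\rho(\tilde\lambda_l)}\,\sigma_l ,
\end{equation*}
which, after dividing by $\sigma_l$ and squaring, yields only $\Var_\rho(\tilde\lambda_l) \ge \sigma_l^2/n^2$ — it does not ``rearrange'' to $\sigma_l^2/n$. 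The loss is genuine: if $\tilde\lambda_l - \E\tilde\lambda_l$ were proportional to the full score $\sum_i \psi(x_i)$ (the regime where the bound is tight), the single-coordinate Cauchy--Schwarz is slack by a factor $\sqrt{n}$. The correct step — and what the paper does — is to keep the covariance against the whole sum and use independence, $\Var_\rho\bigl(\sum_{i=1}^n \psi(x_i)\bigr) = n\,\sigma_l^2$, so that
\begin{equation*}
\sigma_l^2 = \mathrm{Cov}_\rho\Bigl(\tilde\lambda_l, \sum_{i=1}^n(-g(x_i))\Bigr) \le \sqrt{\Var_\rho(\tilde\lambda_l)}\;\sqrt{n}\,\sigma_l ,
\end{equation*}
which gives $\Var_\rho(\tilde\lambda_l) \ge \sigma_l^2/n$. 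A secondary issue: the identity $\mathrm{Cov}_\rho(\tilde\lambda_l, \sum_i\psi(x_i)) = n\,\mathrm{Cov}_\rho(\tilde\lambda_l,\psi(x_1))$ implicitly assumes the estimator is symmetric in its arguments, which Definition \ref{def:UnbiasedEstimator} does not require; keeping the sum form avoids this assumption altogether. With these two repairs (and your remark that $\int_\M g\,\rho\,\dx = 0$, which is correct), the argument matches the paper's proof.
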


It is not the goal of this paper to explicitly exhibit an example of an unbiased estimator for $\lambda_l$ or claim existence of one such estimator. We emphasize that, instead, our motivation for presenting Theorem \ref{thm:CramerRao} and Definition \ref{def:UnbiasedEstimator} is to provide a concrete statistical interpretation for the asymptotic variance in \cref{thm1}. Namely, Theorem \ref{thm:CramerRao} states that \textit{no unbiased estimator for $\lambda_l$ can have smaller variance than the asymptotic variance of $\hat{\lambda}_l^\veps$}. Since $\hat{\lambda}_l^\veps$ is a consistent estimator for $\lambda_l$, this \textit{suggests} a form of statistical efficiency for $\hat{\lambda}_l^\veps$ that we find quite interesting.

A rigurous proof of \cref{thm:CramerRao} will be presented at the end of section \ref{sec:simulation-geometry}. However, we find it to be illuminating to start that section with some informal computations that allow us to provide another, deep geometric interpretation for $\sigma_l^2$ and which, in turn, formally suggests the Cramer-Rao type lower bound from Theorem \ref{thm:CramerRao}. More concretely, we'll see that $\sigma_l^2$ can be formally interpreted as the dissipation along the gradient flow of a suitable energy with respect to the Fisher-Rao geometry. For the moment, and to avoid introducing new mathematical objects at this stage, we skip the details of this interpretation and postpone our discussion to section \ref{sec:simulation-geometry}, where we first present an informal background on the Fisher-Rao geometry and on gradient flows.

\subsection{Literature Review}\label{sec:literature review}

The study of graph Laplacians over data clouds and their convergence toward Laplace-Beltrami operators on manifolds in the large data limit has been a topic explored in multiple works over the past two decades, at least since the work \cite{belkin2005towards}. A comprehensive discussion of the large literature on this topic can be found in the recent work \cite{trillos2024}, and for this reason here we will limit ourselves to presenting a brief summary of the existing literature and explicitly enunciating some key estimates that we use in the proofs of our main results.

Most of the earlier works on consistency of graph Laplacians studied their \textit{pointwise consistency} under different assumptions on the data generating model. Some of these works include: \cite{singer2006graph,hein2005graphs,hein2007graph,belkin2005towards,ting2010analysis,GK}. Results on \textit{spectral convergence} of graph Laplacians toward Laplace-Beltrami operators on manifolds, which relate the eigenvalues and eigenvectors of graph Laplacians and Laplace-Beltrami operators, include \cite{Shi2015,BIK,trillos2019error,Lu2019GraphAT,calder2019improved,DunsonWuWu,WormellReich,cheng2022eigen,trillos2024,li2025consistency,trillos2023large}. These results provide, under different settings and assumptions, high probability error estimates for the approximation of eigenpairs of differential operators on manifolds (or multiple manifolds) with graph Laplacian eigenpairs. In statistical terms, these results imply \textit{laws of large numbers} for graph Laplacian eigenvalues and eigenvectors under suitable assumptions on the connectivity parameter $\veps$.




In this paper, we will repeatedly use the following estimates that are taken from \cite{trillos2024} and \cite[Theorem 2.6]{calder2022lipschitz}. These are bounds on the difference (under different norms) between the eigenpairs of $\Delta_n$ and $\Delta_\rho$ (i.e., spectral convergence) as well as on the size of the action of $\Delta_n - \Delta_\rho$ on the fixed regular function $u_l$ (i.e., pointwise convergence).

\begin{lemma}\label{lem:spectral convergence in literature}
Suppose that the data-generating model $(\M, \rho)$ satisfies Assumptions \ref{assump:DataGenerating} and \ref{assum:eigengap for l}, that $\eta$ satisfies Assumption \ref{assump:Kernel}, and that the connectivity parameter $\eps$ satisfies Assumption \ref{assum:Epsilon}. Then 
\begin{enumerate}
    \item  For any $t> 0$, with probability at least $1-Cn\eps^{-m}\exp\left(-cn\eps^mt^2\right)$ we have
    \begin{align}
    |\hat{\lambda}^\eps_l- \lambda_l|\le C_l(\eps^2+t),
    \label{eqn:ConRateEugenvalues}
    \end{align}
    where $C>1,0<c<1$ depend on $\rho$ and on $\M$, and $C_l>0$ is a constant that only depends on $l$, $\rho$, and $\M$. 
    \item Provided $C\eps^2 \log(\eps^{-1})\leq t\leq 1$, with probability at least $1-Cn\eps^{-m}\exp(-c n\eps^{m}t^2)$ we have
    \begin{equation}
        \begin{split}
            \|\hat{u}_l-u_l\|_{L^2(\X_n)}\leq C_lt,\\
            \|\hat{u}_l-u_l\|_{H^1(\X_n)}\leq C_lt,
        \end{split}
        \label{eqn:ConRtaeEugenVectorsL2H1}
    \end{equation}
    where the discrete $H^1$ semi-norm was introduced in \eqref{e.def-H_1}. Furthermore, with probability at least $1-C(\eps^{-6m}+n)\exp(-cn\eps^{m+4})$,
    \begin{equation}\label{eq:L_infty control}
        \lVert \hat{u}_l - u_l \rVert_{L^\infty(\X_n)} \leq C_l\eps,
    \end{equation}
    and
      \begin{equation}\label{eq:almost Lipschitz}
        \max_{x,y\in\X_n} \frac{\left|\hat{u}_l(x)-u_l(x)-(\hat{u}_l(y)-u_l(y))\right|}{|x-y|+\eps}\leq C_l\eps.
    \end{equation}
    \item With probability at least $1-C \eps^{-m}\exp(-c n\eps^{m+1}t^2)$, 
    \begin{equation}
    \label{eq:PointwsieConsistency}
        \begin{split}
            \|(\Delta_\rho-\Delta_n)u_l\|_{L^\infty(\X_n)}\leq C_l\left(\eps+\frac{t}{\sqrt{\eps}}\right).
        \end{split}
    \end{equation}
\end{enumerate}
In the above, $(\hat{\lambda}_l^\veps, \hat{u}_l^\veps)$ and $(\lambda_l, u_l)$ are the $l$-th eigenpairs of $\Delta_n$ and $\Delta_\rho$, respectively. 
    
\end{lemma}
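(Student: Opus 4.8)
\textbf{Proof proposal for Lemma \ref{lem:spectral convergence in literature}.}

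The plan is to deduce the four groups of estimates from already existing results in the literature rather than reprove them from scratch; the lemma is essentially a curated restatement of \cite{trillos2024} and \cite[Theorem 2.6]{calder2022lipschitz}, tailored to the normalizations and assumptions used in the present paper. First I would fix the dictionary between the objects here and there: the operator $\Delta_n$ in \eqref{eq-def La^eps} with kernel $\eta$ normalized by \eqref{eqn:AssumpKernel}, the weighted Laplace--Beltrami operator $\Delta_\rho$ in \eqref{eq:Laplace-Beltrami Operator}, the inner products $\langle\cdot,\cdot\rangle_{L^2(\X_n)}$ and $\langle\cdot,\cdot\rangle_{L^2(\M)}$, and the discrete $H^1$ semi-norm \eqref{e.def-H_1}. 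Because \cref{assump:Kernel} removes the $\sigma_\eta$ factor and \cref{assump:DataGenerating} places $\rho \in \mathcal{P}_\M^\alpha$ on a manifold $\M \in \MM_S$, all the regularity hypotheses needed by the cited theorems (bounded density bounded away from zero, smooth compact boundaryless manifold, $C^{2,\alpha}$ density so that $u_l \in C^{3,\alpha}$ via \cref{sec:Regularity}) are in force, and \cref{assum:eigengap for l} supplies the spectral gap $\gamma_l$ that the eigenvector/eigenvalue stability arguments require.

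Then I would treat the four items in turn. For (1), the eigenvalue rate \eqref{eqn:ConRateEugenvalues}: this follows from the min--max characterization \eqref{eq:minmax principle for laplacian} together with the discrete-to-continuum and continuum-to-discrete interpolation (transportation-map) estimates of \cite{trillos2024}, which compare Dirichlet energies $D(f)$ with $\|f\|_{H^1(\X_n)}^2$ and $L^2$ norms across the two spaces up to $O(\eps^2)$ deterministic bias and $O(t)$ stochastic fluctuation, the latter controlled on the stated high-probability event of the form $1 - Cn\eps^{-m}\exp(-cn\eps^m t^2)$ coming from Bernstein/Chernoff concentration for the relevant empirical averages. For (2), the $L^2$ and $H^1(\X_n)$ eigenvector rates \eqref{eqn:ConRtaeEugenVectorsL2H1} follow from the eigenvalue gap $\gamma_l$ via a Davis--Kahan-type argument applied in the discrete space, again as carried out in \cite{trillos2024}; the almost-$C^{0,1}$ and $L^\infty$ rates \eqref{eq:L_infty control} and \eqref{eq:almost Lipschitz} are precisely the content of \cite[Theorem 2.6]{calder2022lipschitz}, whose hypothesis $\eps \gg (\log n / n)^{1/(m+4)}$ is why the probability there takes the form $1 - C(\eps^{-6m}+n)\exp(-cn\eps^{m+4})$; here I would just check that \cref{assum:Epsilon} is compatible with invoking that theorem and that the kernel normalization does not affect the Lipschitz-type bounds. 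For (3), the pointwise estimate \eqref{eq:PointwsieConsistency}, I would invoke the pointwise consistency of $\Delta_n$ applied to the fixed regular function $u_l$: the expectation $\E[\Delta_n u_l(x)]$ equals $\Delta_\rho u_l(x) + O(\eps)$ by a standard Taylor expansion in Riemannian normal coordinates (using the upper bound on $\eps$ from \cref{assum:Epsilon} and $u_l \in C^{3,\alpha}$, together with the normalization \eqref{eqn:AssumpKernel}), and the fluctuation of the sum around its mean is $O(t/\sqrt{\eps})$ on an event of probability $1 - C\eps^{-m}\exp(-cn\eps^{m+1}t^2)$ by Bernstein's inequality, the $\eps^{-(m+1)}$ scaling arising from the $\eps^{-(m+2)}$ prefactor against the $\eps^m$-sized support of $\eta(|x-y|/\eps)$ and the $O(\eps)$-sized increments $u_l(x)-u_l(y)$.

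The main obstacle is not any single hard estimate but rather bookkeeping: making sure the normalizations, the precise forms of the high-probability events, and the ranges of $\eps$ all line up between \cite{trillos2024}, \cite{calder2022lipschitz}, and the conventions of this paper, so that the constants $C, c, C_l$ can honestly be said to depend only on $\M$, $\rho$, and (for $C_l$) on $l$. In particular one must verify that the eigengap $\gamma_l$ in \cref{assum:eigengap for l} is exactly what \cite{trillos2024} uses to separate the $l$-th eigenvalue, and that the almost-Lipschitz bound \eqref{eq:almost Lipschitz} is stated in \cite{calder2022lipschitz} with the denominator $|x-y|+\eps$ (rather than some variant); if the cited statement differs superficially, a short interpolation between \eqref{eqn:ConRtaeEugenVectorsL2H1} and gradient bounds on $u_l$ closes the gap. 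Since the paper explicitly frames this lemma as quoting known results, I would keep the proof to this reduction and defer all genuinely analytic work to the cited references.
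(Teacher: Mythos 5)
Your proposal is correct and takes essentially the same route as the paper: the paper states this lemma without proof, presenting it explicitly as a collection of estimates quoted from \cite{trillos2024} and \cite[Theorem 2.6]{calder2022lipschitz}, which is precisely the reduction you carry out. Your additional sketches of how each item is obtained in those references (min--max plus transportation-map interpolation, Davis--Kahan with the gap $\gamma_l$, Bernstein concentration with the $\eps^{-(m+1)}$ scaling, and the Lipschitz regularity theorem for \eqref{eq:L_infty control} and \eqref{eq:almost Lipschitz}) are consistent with how those works proceed and with the probability bounds stated here.
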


\begin{remark}
Since there is always a sign ambiguity when dealing with eigenvectors, the statement of (2) in \cref{lem:spectral convergence in literature} has to be interpreted as follows: for a given normalized eigenvector $\hat{u}_l$ of $\Delta_n$ there is a choice for $u_l$ (a normalized eigenfunction of $\Delta_\rho$) for which the error estimates in (2) hold. In the sequel, we will implicitly assume that $\hat{u}_l$ and $u_l$ are already aligned. 
\end{remark}

\begin{remark}
Note that, by choosing $\eps$ appropriately, the above estimates imply a high probability convergence rate in \eqref{eq:PointwsieConsistency} of $O\bigl((\frac{n}{\log n})^{-\frac{1}{m+4}}\bigr)$, a $O\bigl((\frac{n}{\log n})^{-\frac{2}{m+4}}\bigr)$ convergence rate for eigenvalues \eqref{eqn:ConRateEugenvalues}, a $O\bigl((\frac{n}{\log n})^{-\frac{2}{m+4}}\bigr)$ convergence rate for eigenvectors in $L^2$ and discrete $H^1$ norms \eqref{eqn:ConRtaeEugenVectorsL2H1}, and a $O\bigl((\frac{n}{\log n})^{-\frac{1}{m+4}}\bigr)$ convergence rate for eigenvectors in $L^\infty$ \eqref{eq:L_infty control} and almost-Lipschitz \eqref{eq:almost Lipschitz} norms. 
\end{remark}

\subsubsection{Central limit theorems for the eigenpairs of other related random operators}

As discussed earlier, while several works have obtained high probability error estimates for the approximation of eigenpairs of Laplace-Beltrami operators with graph Laplacian eigenpairs, no result in the literature has discussed the corresponding CLTs of this approximation. There are, however, other random settings where CLTs have been obtained for the eigenpairs of related random operators. Here we present a brief summary of these works.

In a somewhat related setting to ours, in the context of homogenization in random media, CLTs for the eigenvalues of an operator that is the sum of a deterministic graph Laplacian over a (deterministic) grid graph and a scaled random drift term have been established. In particular, it is shown in  \cite{biskup2016eigenvalue} that, when centralized around their expectations, the eigenvalues of this type of discrete random operator are shown to satisfy a CLT without any dimension restriction; note that, for the setting considered in that paper, the $\veps^{m/2}$ scaling is analogous to our $n^{-1/2}$ scaling. For $m=1,2,3$, \cite{bal2008central} proved a CLT for the eigenvalues of related operators with centralization over the eigenvalue of a limiting homogenized continuum differential operator. The dimension restriction in that paper is in line with (and helps motivate) our discussion about centralization around $\lambda_l$ that we presented in subsection \ref{sec:bias}. As observed by \cite{gu2016scaling}, the limits of the local and global fluctuations of eigenfunctions are not the same for the random elliptic operators. In their setting, as well as in ours, the bias between the global fluctuations and local fluctuations is large compared with $\frac{1}{\sqrt{n}}$ when $m$ is large.

In random matrix theory, CLTs for eigenpairs of Laplacians associated to different ensembles of random graphs have been extensively investigated. For example, in a general random dot product graph setting, where the graph's adjacency matrix is constructed by first generating latent positions for the nodes and then taking an inner product of those latent positions to determine the likelihood of connectivity between the nodes, \cite{tang2016limit,athreya2018statistical,tang2025eigenvector} established  CLTs and fine estimates for eigenvalues and eigenvectors of the graph Laplacian. Related to this line of works, the paper \cite{tang2022asymptotically} obtained CLTs for different estimators of the block probability matrix in a stochastic block model setting and \cite{athreya2022eigenvalues} established CLTs for eigenvalues of the graph Laplacian for the same model. It is also worth mentioning the work \cite{fan2022asymptotic}, which derived CLTs for eigenvectors of certain random matrices with spikes. In general, the investigation of the fluctuations of eigenpairs of graph Laplacians under different random graph models is an active area of research in random matrix theory. We highlight that the setting studied in our paper is quite different from the ones of the previously mentioned papers. Indeed, in our setting the graph is completely determined by the position of the nodes (the data cloud) and hence there is no independence (or conditional independence) between the edges of our graph.



\subsection{Outline}

The rest of the paper is organized as follows. In section \ref{sec:Analysis}, we present the proofs of Theorems \ref{thm1} and \ref{thm:multi normal distribution}. We start by rewriting the difference $\hat{\lambda}_l^\veps - \E[\hat{\lambda}_l^\veps]$ conveniently as a sum of a ``bulk" term (closely related to a U-statistic) and an ``error term". The analysis of the bulk term is presented in section \ref{sec:random term}, while section \ref{secLlem-b_n control} contains the detailed analysis of the error term, which is the more technical part of the paper; a key element in the analysis of the error term is our leave-one-out stability estimates for graph Laplacians. In section \ref{sec:proof for multi eigenvalue CLT}, we discuss the CLT for multiple eigenvalues; we highlight that the most technical steps in the proof of this extension actually follow directly from the analysis for the single eigenvalue case. In section \ref{sec:ConsistentEstimatorVariance}, we present the details of the proof of \cref{thm:ConsistentEstimator}.
In section \ref{sec:bias}, we find some bounds on the bias of $\hat{\lambda}_l^\veps$ and in particular prove \cref{lem:spectral convergence bias control} and Corollary \ref{cor:BiasEstimate}.  

Section \ref{sec:simulation-geometry} is dedicated to the geometric and statistical interpretations of the asymptotic variance $\sigma_l^2$. We start in section \ref{sec:FisherRao} by presenting some informal background on the Fisher-Rao geometry and on general gradient flows with respect to this geometry. We continue with section \ref{sec:GeometricInterp}, where we informally characterize $\sigma_l^2= \sigma_l^2(\rho)$ as the dissipation along the gradient flow of a suitable energy with respect to the Fisher-Rao geometry. In section \ref{sec:CramerRaoAnalysis}, we build upon the interpretation provided in section \ref{sec:GeometricInterp} and present an informal derivation of the Cramer-Rao lower bound in \cref{thm:CramerRao}, and then proceed to provide the rigorous proof of \cref{thm:CramerRao}. 

In \cref{sec:simulation}, we validate our theoretical results and extend the investigation beyond the eigengap condition and the theoretical admissible range of $\varepsilon$ through simulations, which indicate the potential for relaxing these assumptions. In particular,  \cref{ss:gaus} empirically examines the asymptotic normality of $\sqrt{n}\hat{\lambda}^\varepsilon_2$, while \cref{ss:var} focuses on the asymptotic variance and its empirical estimation. In \cref{ss:dim}, we explore the centralization of eigenvalues and provide empirical evidence for the dimension restriction discussed earlier in this introduction. Finally, \cref{ss:cov} visualizes the asymptotic dependency structure among multiple eigenvalues and suggests an interesting connection to the underlying manifold geometry.

We wrap up the paper in section \ref{sec:Conclusions} with some conclusions.


\medskip

\textbf{Additional Notation.} Throughout the paper, we will use $C>1,0<c<1$ to denote constants that may depend on $\rho$ and on geometric quantities associated to the manifold $\M$; in our analysis, constants may change from one line to the next. 

For any two points $x,y\in\M$, we use $|x-y|$ and $d(x,y)$ to denote their Euclidean and geodesic distances ($\M$'s geodesic distance), respectively.

Given two quantities $a_n$ and $b_n$ that depend on $n$, we write $a_n\ll b_n$ (or $b_n \gg a_n$) to say that, as $n\to\infty$, we have $\frac{a_n}{b_n}\to 0$.

For a sequence of random variables $\{ X_n \}_{n \in \N}$, we write $X_n \dto X$ to say that the sequence converges to the random variable $X$ in distribution, whereas we write $X_n \pto X$ and $X_n \asto X$ to represent convergence in probability and almost surely, respectively. \nc

\section{Analysis}
\label{sec:Analysis}
We begin our analysis by rewriting the difference $\hat{\lambda}_l^\eps - \E[\hat{\lambda}_l^\eps]$ in a convenient way. Precisely, using the definition of $\hat{\lambda}^\eps_l$ and the fact that $\Delta_n$ is self-adjoint with respect to the inner product $\langle \cdot, \cdot \rangle_{L^2(\X_n)}$ we deduce
    \begin{equation}\label{eq:lambda-hatlambda}
        \begin{split}
            \hat{\lambda}^\eps_l - \E[\hat{\lambda}^\eps_l]
            =\frac{\left(\hat{\lambda}^\eps_l - \E[\hat{\lambda}^\eps_l]\right)\langle \hat{u}_l, u_l \rangle_{L^2(\X_n)}}{\langle \hat{u}_l, u_l \rangle_{L^2(\X_n)}}
            =&\frac{\langle \Delta_n \hat{u}_l,  u_l  \rangle_{L^2(\X_n)}- \E[\hat{\lambda}^\eps_l] \langle \hat{u}_l,u_l  \rangle_{L^2(\X_n)}}{\langle \hat{u}_l, u_l \rangle_{L^2(\X_n)}}\\
            =&\frac{\langle \hat{u}_l, \Delta_n u_l- \E[\hat{\lambda}^\eps_l] u_l \rangle_{L^2(\X_n)}}{\langle \hat{u}_l, u_l \rangle_{L^2(\X_n)}}.
        \end{split}
    \end{equation}
From the above expression, we obtain
    \begin{equation}\label{eq:decomposition}
            \sqrt{n}\left((\hat{\lambda}_l^\eps - \E[\hat{\lambda}^\eps_l])\langle \hat{u}_l, u_l \rangle_{L^2(\X_n)}-\E\left[(\hat{\lambda}_l^\eps - \E[\hat{\lambda}^\eps_l])\langle \hat{u}_l, u_l \rangle_{L^2(\X_n)}\right]\right)
            =\BB_n + \EE_n,
    \end{equation}
    where 
  \begin{equation}
\BB_n:= \sqrt{n}\left(\langle u_l, \Delta_n u_l- \E[\hat{\lambda}^\eps_l] u_l \rangle_{L^2(\X_n)}-\E\left[\langle u_l, \Delta_n u_l- \E[\hat{\lambda}^\eps_l] u_l \rangle_{L^2(\X_n)}\right]\right),
\label{def:BulkTerm}
  \end{equation}
and 
  \begin{equation}
\EE_n := \sqrt{n}\left(\langle \hat{u}_l-u_l, \Delta_n u_l- \E[\hat{\lambda}^\eps_l] u_l \rangle_{L^2(\X_n)}-\E\left[\langle \hat{u}_l-u_l, \Delta_n u_l- \E[\hat{\lambda}^\eps_l] u_l \rangle_{L^2(\X_n)}\right]\right).
\label{def:ErrorTerm}
  \end{equation}
In the sequel, we refer to $\BB_n$ as the \textit{bulk term} and to $\EE_n$ as the \textit{error term}. These names are motivated by the fact that, as stated below, the error term $\EE_n$ vanishes as $n \rightarrow \infty$, while the bulk term $\BB_n$ converges to the Gaussian distribution in Theorem \ref{thm1}. 

For our analysis, we'll also need to control the following quantities
    \begin{equation}\label{eq-def:a_n b_n}
    \begin{split}
    a_n:= \Var(\sqrt{n} \hat{\lambda}_l^\veps) = n\E\left[\left(\hat{\lambda}_l^\eps-\E[\hat{\lambda}_l^\eps]\right)^2\right], \quad 
    b_n:=\mathrm{Var}\left(\EE_n\right).
    \end{split}
    \end{equation} 
  As it turns out, by investigating the relationship between $a_n$ and $b_n$, we will be able to study the behavior of $\BB_n$ and $\EE_n$ as $n \rightarrow \infty$. In turn, this will allow us to characterize the asymptotic distribution of $\sqrt{n}( \hat{\lambda}^\eps_l - \E[\hat{\lambda}^\eps_l]) $. In precise mathematical terms, the crucial ingredients in the proof of \cref{thm1} are the next two theorems.

     \begin{theorem}\label{lem:b_n control}
    Let $a_n$ and $b_n$ be defined as in \eqref{eq-def:a_n b_n}, and suppose that Assumptions \ref{assump:DataGenerating}, \ref{assump:Kernel}, and \ref{assum:eigengap for l} hold. Provided that $\frac{(\log n)^2}{n^{\frac{1}{m+4}}} \ll\eps\ll 1$, we have 
    	 \begin{equation}\label{eq:b_n upper bound by a_n}
    	b_n\leq \frac{C_l}{n\eps^{m+2}}a_n+c_n,
    	\end{equation}
    	where $c_n$ satisfies $c_n \rightarrow 0$ as $n \rightarrow \infty$. 
    \end{theorem}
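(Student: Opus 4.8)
The plan is to bound $b_n=\Var(\EE_n)$ by the Efron--Stein inequality. Since $\EE_n=\sqrt n\,(Z_n-\E Z_n)$ with $Z_n:=\langle \hat u_l-u_l,\ \Delta_n u_l-\E[\hat\lambda_l^\veps]u_l\rangle_{L^2(\X_n)}$, we have $b_n=n\Var(Z_n)$. For each $i$, let $\X_n^{(i)}:=\X_n\setminus\{x_i\}$, let $\Delta_n^{(i)}$ be the graph Laplacian built from the $n-1$ remaining points (with the analogous normalization), let $(\hat\lambda_l^{(i)},\hat u_l^{(i)})$ be its $l$-th eigenpair, and let $Z_n^{(i)}$ be the leave-one-out surrogate of $Z_n$ obtained by replacing $\X_n$ with $\X_n^{(i)}$ throughout. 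As $Z_n^{(i)}$ does not depend on $x_i$, Efron--Stein yields $b_n=n\Var(Z_n)\le n\sum_{i=1}^n\E\big[(Z_n-Z_n^{(i)})^2\big]$, and everything reduces to estimating the leave-one-out perturbations $Z_n-Z_n^{(i)}$.

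I would decompose $Z_n-Z_n^{(i)}$ into four groups: (i) the contribution of the dropped $j=i$ summand in the empirical average defining $Z_n$; (ii) the discrepancy between the $\tfrac1n$ and $\tfrac1{n-1}$ normalizations; (iii) the change of the test function, $\langle \hat u_l^{(i)}-u_l,\ (\Delta_n-\Delta_n^{(i)})u_l\rangle$, where $(\Delta_n-\Delta_n^{(i)})u_l$ is supported on the $\veps$-ball around $x_i$; and (iv) the change of the eigenvector, $\langle \hat u_l-\hat u_l^{(i)},\ \Delta_n u_l-\E[\hat\lambda_l^\veps]u_l\rangle$. For the first three groups each factor is controlled using the a priori estimates of \cref{lem:spectral convergence in literature} --- the $L^\infty$ bound \eqref{eq:L_infty control}, the almost-Lipschitz bound \eqref{eq:almost Lipschitz}, and the pointwise consistency bound \eqref{eq:PointwsieConsistency} --- together with the fact that $x_i$ has $O(n\veps^m)$ neighbours with high probability. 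After squaring, summing over $i$, and multiplying by $n$, these three groups contribute $o(1)$ under the assumed lower bound on $\veps$, and are absorbed into $c_n$; on the complementary (exponentially unlikely) event where the estimates of \cref{lem:spectral convergence in literature} fail, I would use the crude deterministic bounds $\|\hat u_l\|_{L^\infty(\X_n)}\le \sqrt n$ and $\hat\lambda_l^\veps\le C\veps^{-m-2}$ to check the contribution is still $o(1)$.

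The crux is group (iv). Splitting $\Delta_n u_l-\E[\hat\lambda_l^\veps]u_l=(\Delta_n u_l-\hat\lambda_l^\veps u_l)+(\hat\lambda_l^\veps-\E[\hat\lambda_l^\veps])u_l$, the second piece carries the genuine eigenvalue fluctuation --- which cannot be made deterministically small --- and, paired with $\hat u_l-\hat u_l^{(i)}$, produces a term proportional to $(\hat\lambda_l^\veps-\E[\hat\lambda_l^\veps])$; this is what eventually yields the term $\tfrac{C_l}{n\veps^{m+2}}\,n\E[(\hat\lambda_l^\veps-\E[\hat\lambda_l^\veps])^2]=\tfrac{C_l}{n\veps^{m+2}}a_n$ in \eqref{eq:b_n upper bound by a_n}. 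The first piece, $\langle \hat u_l-\hat u_l^{(i)},\ \Delta_n u_l-\hat\lambda_l^\veps u_l\rangle$, is small: using self-adjointness of $\Delta_n$ and $\Delta_n\hat u_l=\hat\lambda_l^\veps\hat u_l$, it equals $-\langle(\Delta_n-\Delta_n^{(i)})\hat u_l^{(i)}+(\hat\lambda_l^{(i)}-\hat\lambda_l^\veps)\hat u_l^{(i)},\ u_l\rangle$, where the residual $(\Delta_n-\Delta_n^{(i)})\hat u_l^{(i)}$ is localized near $x_i$ and $\hat\lambda_l^{(i)}\approx\hat\lambda_l^\veps\approx\lambda_l$ by \cref{lem:spectral convergence in literature}. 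Quantifying these pieces requires a leave-one-out stability estimate for the eigenpair, obtained from first-order eigenvector perturbation theory after embedding $\hat u_l^{(i)}$ into $L^2(\X_n)$ (extended by $0$ at $x_i$); here \cref{assum:eigengap for l} together with the closeness $\hat\lambda_l^\veps\approx\lambda_l$ guarantees that the relevant spectral gap of $\Delta_n$ around $\hat\lambda_l^\veps$ is at least $\gamma_l/2$ with high probability.

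The main obstacle I anticipate is making this last stability estimate sharp. A crude Davis--Kahan bound comparing $\hat u_l$ directly to $u_l$ loses too much and causes the eigenvector-change contribution to blow up after summing over $i$; instead the bound must exploit (a) that $\Delta_n-\Delta_n^{(i)}$ is supported on the $\veps$-ball around $x_i$, (b) the almost-Lipschitz regularity of $\hat u_l$ from \eqref{eq:almost Lipschitz}, and (c) a cancellation obtained by transferring the resolvent of $\Delta_n$ onto $u_l$ (using $\Delta_n u_l\approx\hat\lambda_l^\veps u_l$) rather than onto the localized residual. Getting the powers of $\veps$ right in this step, so that group (iv) is bounded by exactly $\tfrac{C_l}{n\veps^{m+2}}a_n$ up to terms absorbed in $c_n$, together with the bookkeeping for the many lower-order contributions, is the technically heaviest part of the argument, carried out in detail in \cref{secLlem-b_n control}.
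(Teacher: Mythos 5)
Your overall strategy is the same as the paper's: apply Efron--Stein to the inner-product functional, decompose the leave-one-out perturbation so as to isolate the eigenvector-change term, and control that term via a sharp leave-one-out spectral stability estimate (using the eigengap, the localization of $\Delta_n-\Delta_n^{(i)}$ on the $\veps$-ball around $x_i$, and the almost-Lipschitz regularity \eqref{eq:almost Lipschitz}), so that the genuine eigenvalue fluctuation enters multiplied by an eigenvector perturbation of size $O\bigl(\tfrac{1}{n\veps^{(m+2)/2}}\bigr)$; this is exactly how the $\tfrac{C_l}{n\veps^{m+2}}a_n$ term is produced in \cref{secLlem-b_n control} (the paper resamples $x_i$ instead of deleting it and passes through the intermediate operator $\Delta_{n-1}^{(i)}$, but that difference is immaterial, as is your variant of Efron--Stein with an arbitrary $x_i$-free surrogate).

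Two details in your sketch, as written, would not survive the bookkeeping. First, extending $\hat u_l^{(i)}$ by $0$ at $x_i$ is a poor choice: after the self-adjointness manipulation, $(\Delta_n-\Delta_n^{(i)})\hat u_l^{(i)}$ then involves differences $\hat u_l^{(i)}(x_j)-0$ of size $O(1)$ at the $O(n\veps^m)$ neighbours of $x_i$, and a value of size $O(\veps^{-2})$ at $x_i$ itself, so the ``localized residual'' is no longer a sum of $O(\veps)$ increments and the powers of $\veps$ come out wrong without further cancellations. The paper avoids this by extending the leave-one-out eigenvector via a local average of neighbouring values and by formulating the eigenvector stability entirely on $L^2(\X_{n-1})$ (\cref{lem:EigenVectorStability}). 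Second, justifying the remaining piece by ``$\hat\lambda_l^{(i)}\approx\hat\lambda_l^\veps\approx\lambda_l$ by \cref{lem:spectral convergence in literature}'' is far too weak: that lemma only gives a discrepancy of order $\veps^2+t$, while this difference enters multiplied by $n$, so the resulting term is $O(1)$, not $o(1)$. What is actually needed --- and what \cref{lem:EigenvalueStability} provides --- is a leave-one-out expansion of the form $(n-1)(\overline\lambda_l-\hat\lambda_l^{\X_n})=\hat\lambda_l^{\X_n}u_l^2(x_i)-\tfrac{1}{n\veps^{m+2}}\sum_k\eta(\cdot)\,(u_l(x_i)-u_l(x_k))^2+O\bigl(\veps+\tfrac{1}{n\veps^{(m+2)/2}}\bigr)$, whose explicit $O(1)$ leading term then cancels against the matching contribution of the localized residual (the long computation around \eqref{eq:Davis Kahan}). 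You do flag this cancellation as the heaviest step, but be aware that it requires proving a new quantitative eigenvalue-stability estimate of this precision, not invoking the spectral-convergence rates already available in the literature.
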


       \begin{theorem}\label{thm}
    	Under Assumptions \ref{assump:DataGenerating}, \ref{assump:Kernel}, and \ref{assum:eigengap for l}, the following statements hold: 
    	\begin{enumerate}
    		\item \label{enu:3} For the bulk term $\BB_n$ defined in \eqref{def:BulkTerm}, provided $n^{-\frac{1}{m}}\ll\eps\ll 1$, we have
    		\begin{align*}
    		\BB_n \dto \mathcal{N}\left(0,\sigma_l^2\right),
    		\end{align*}
    		where $\sigma_l^2$ was defined in \eqref{eq-def:sigma^2}. Moreover, 
    		\begin{equation*}
    		\mathrm{Var}\left( \BB_n \right)\to \sigma_l^2.
    		\end{equation*}
    		\item\label{enu:1.a}  For the error term $\EE_n$ defined in \eqref{def:ErrorTerm},
    		provided $\frac{(\log n)^2}{n^{\frac{1}{m+4}}} \ll\eps\ll 1$, we have
    		\begin{equation*}
    		\EE_n \pto 0.
    		\end{equation*}
            In fact, 
            \[ \mathrm{Var}(\EE_n) \rightarrow 0. \]
    	\end{enumerate}
    	
    \end{theorem}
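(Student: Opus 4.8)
The plan is to prove the two parts in turn, part~(2) being deduced from part~(1) together with \cref{lem:b_n control} by a short bootstrap, so that the substantive probabilistic work is the U-statistic CLT in part~(1).

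\textbf{Part (1): the bulk term.} Using the identities $2\langle\Delta_n f,f\rangle_{L^2(\X_n)}=\|f\|_{H^1(\X_n)}^2$ and $\langle u_l,u_l\rangle_{L^2(\X_n)}=\tfrac1n\sum_i u_l(x_i)^2$, the quantity inside $\BB_n$ can be written as a V-statistic, $\langle u_l,\Delta_n u_l-\E[\hat\lambda_l^\eps]u_l\rangle_{L^2(\X_n)}=\tfrac1{n^2}\sum_{i,j=1}^n h_\eps(x_i,x_j)$, with the symmetric kernel
\[
h_\eps(x,y):=\frac{1}{2\eps^{m+2}}\,\eta(|x-y|/\eps)(u_l(x)-u_l(y))^2-\frac{\E[\hat\lambda_l^\eps]}{2}\bigl(u_l(x)^2+u_l(y)^2\bigr).
\]
The diagonal terms $h_\eps(x_i,x_i)=-\E[\hat\lambda_l^\eps]u_l(x_i)^2$ contribute, after centering and scaling by $\sqrt n$, a term of variance $O(n^{-2})$, which is negligible. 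The off-diagonal part equals $\tfrac{n-1}{n}U_n$ for the U-statistic $U_n$ with kernel $h_\eps$, and I would apply the Hoeffding decomposition $U_n-\E U_n=\tfrac2n\sum_{i=1}^n g_\eps(x_i)+R_n$, where $g_\eps(x):=\E[h_\eps(x,x_2)]-\E[h_\eps(x_1,x_2)]$ and $R_n$ is the orthogonal degenerate remainder, for which $\Var(\sqrt n R_n)\le Cn^{-1}\E[h_\eps^2]$.

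\textbf{Part (1), continued.} Since $\eta(|x-y|/\eps)$ is supported on $\{|x-y|\le\eps\}$, an event of probability $O(\eps^m)$, and $(u_l(x)-u_l(y))^2\le C\eps^2$ there ($u_l$ is Lipschitz), one has $\E[h_\eps^2]\le C\eps^{-m}$, hence $\Var(\sqrt n R_n)\le C(n\eps^m)^{-1}\to0$ precisely because $n^{-1/m}\ll\eps$; so $\sqrt n R_n\pto0$. For the linear term, the standard pointwise-consistency Taylor expansion in normal coordinates (using $u_l\in C^{3,\alpha}(\M)$ and the normalization \cref{assump:Kernel}) gives, uniformly in $x$,
\[
\frac1{2\eps^{m+2}}\E_{x_2}\!\bigl[\eta(|x-x_2|/\eps)(u_l(x)-u_l(x_2))^2\bigr]\longrightarrow|\nabla u_l(x)|^2\rho(x);
\]
combining this with $\E[u_l(x_2)^2]=\int_\M u_l^2\rho=1$, with $\int_\M|\nabla u_l|^2\rho^2=\lambda_l$, and with $\E[\hat\lambda_l^\eps]\to\lambda_l$ (a law of large numbers coming from \cref{lem:spectral convergence in literature} together with the crude deterministic bound $\hat\lambda_l^\eps\le C\eps^{-(m+2)}$ on the exceptional event) yields the uniform limit
\[
g_\eps\longrightarrow g:=|\nabla u_l|^2\rho-\tfrac{\lambda_l}{2}(u_l^2+1)=-\tfrac12\bigl(\lambda_l u_l^2+\lambda_l-2|\nabla u_l|^2\rho\bigr).
\]
In particular $g_\eps$ is bounded uniformly in $n$ for $n$ large, so the independent mean-zero array $\{\tfrac{2}{\sqrt n}g_\eps(x_i)\}_{i=1}^n$ has variances summing to $4\Var(g_\eps(x_1))\to4\int_\M g^2\rho=\sigma_l^2$ and trivially satisfies the Lindeberg condition; Lindeberg--Feller gives $\tfrac{2}{\sqrt n}\sum_i g_\eps(x_i)\dto\mathcal N(0,\sigma_l^2)$, and Slutsky (absorbing $\sqrt n R_n\pto0$, the diagonal, and the factor $\tfrac{n-1}{n}$) gives $\BB_n\dto\mathcal N(0,\sigma_l^2)$. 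The claim $\Var(\BB_n)\to\sigma_l^2$ then follows from the exact finite-$n$ variance formula for the V-statistic: the three pieces are asymptotically uncorrelated (Hoeffding orthogonality plus Cauchy--Schwarz against the vanishing pieces), so $\Var(\BB_n)=4\Var(g_\eps(x_1))+o(1)$.

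\textbf{Part (2): the error term.} Set $a_n:=\Var(\sqrt n\,\hat\lambda_l^\eps)$ and $b_n:=\Var(\EE_n)$; from part~(1) $\Var(\BB_n)$ is bounded. Using the $L^2(\X_n)$ consistency \eqref{eqn:ConRtaeEugenVectorsL2H1} with $t=t_n:=C\eps^2\log(\eps^{-1})\to0$, there is an event $\Omega_n$ of probability $1-o(n^{-K})$ for every $K$ on which $\|\hat u_l-u_l\|_{L^2(\X_n)}\le C_l t_n$, hence $\langle\hat u_l,u_l\rangle_{L^2(\X_n)}\ge\tfrac12$ (also using a law of large numbers for $\|u_l\|_{L^2(\X_n)}^2$), and which moreover gives $\delta_n:=(\Var\langle\hat u_l,u_l\rangle_{L^2(\X_n)})^{1/2}\to0$. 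On $\Omega_n$ one divides the identity \eqref{eq:decomposition} through by $\langle\hat u_l,u_l\rangle_{L^2(\X_n)}$ and bounds its mean term $\E[(\hat\lambda_l^\eps-\E[\hat\lambda_l^\eps])\langle\hat u_l,u_l\rangle_{L^2(\X_n)}]=\operatorname{Cov}(\hat\lambda_l^\eps,\langle\hat u_l,u_l\rangle_{L^2(\X_n)})$ by $\sqrt{a_n/n}\,\delta_n$ (Cauchy--Schwarz), obtaining $\sqrt n|\hat\lambda_l^\eps-\E[\hat\lambda_l^\eps]|\,\mathbf 1_{\Omega_n}\le2\bigl(|\BB_n|+|\EE_n|+\sqrt{a_n}\,\delta_n\bigr)$; off $\Omega_n$ the deterministic bound $|\hat\lambda_l^\eps-\E[\hat\lambda_l^\eps]|\le C\eps^{-(m+2)}$ makes the contribution to $a_n$ $o(1)$. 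Squaring and taking expectations (with $\E\BB_n=\E\EE_n=0$) yields $a_n\le C(\Var(\BB_n)+b_n)+C\delta_n^2a_n+o(1)$, hence $a_n\le C(1+b_n)+o(1)$ for $n$ large; inserting $b_n\le\frac{C_l}{n\eps^{m+2}}a_n+c_n$ from \cref{lem:b_n control} and using $n\eps^{m+2}\to\infty$ (which holds under the stated lower bound on $\eps$), the term $\frac{C_l}{n\eps^{m+2}}a_n$ is absorbed, $a_n$ is bounded, and feeding this back gives $\Var(\EE_n)=b_n\le\frac{C_l}{n\eps^{m+2}}a_n+c_n\to0$; since $\E[\EE_n]=0$, Chebyshev gives $\EE_n\pto0$.

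\textbf{Main obstacle.} Within this theorem the genuinely hard and technical content is entirely in \cref{lem:b_n control}, whose proof is the leave-one-out perturbation analysis for graph-Laplacian eigenpairs (controlling how much $\hat\lambda_l^\eps$ and $\hat u_l$ move when a single sample is replaced, sharply enough to feed an Efron--Stein bound), and which is assumed here; part~(1) is a fairly standard U-statistic CLT. Given \cref{lem:b_n control}, the remaining delicate points are modest: in part~(1), pinning down the limit $g$ with the correct constants requires careful bookkeeping of the kernel normalization, of the identities $\int u_l^2\rho=1$ and $\int|\nabla u_l|^2\rho^2=\lambda_l$, and of the bias $\E[\hat\lambda_l^\eps]\to\lambda_l$; and in part~(2), one must keep the exceptional events small enough that the crude operator-norm bound on $\Delta_n$ neutralizes them while the bootstrap closes.
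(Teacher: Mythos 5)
Your proposal is correct and follows essentially the same route as the paper: part (1) is the same U-statistic/Hájek-projection argument (your explicit Hoeffding decomposition with the bound $\Var(\sqrt{n}R_n)\lesssim n^{-1}\E[h_\eps^2]\le C(n\eps^m)^{-1}$ is exactly what the paper gets by citing the projection lemma of Powell--Stock--Stoker, and keeping $\E[\hat\lambda_l^\eps]$ in the kernel instead of first reducing to $\lambda_l$ is immaterial), followed by the same normal-coordinate identification of the limit of the linear term and a trivially verified Lindeberg condition. Part (2) is the same bootstrap the paper performs: combining the high-probability lower bound on $\langle \hat u_l,u_l\rangle_{L^2(\X_n)}$ from \cref{lem:spectral convergence in literature} with the Efron--Stein estimate $b_n\le \frac{C_l}{n\eps^{m+2}}a_n+c_n$ of \cref{lem:b_n control} (which you legitimately take as given, matching the paper's logical structure), absorbing the $\frac{C_l}{n\eps^{m+2}}a_n$ term to bound $a_n$, feeding back to get $\Var(\EE_n)\to 0$, and concluding by Chebyshev.
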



  The first part of \cref{thm} follows from the analysis of a related U-statistic, while the second part relies on first establishing \cref{lem:b_n control}. With \cref{thm} in hand, we can complete the proof of \cref{thm1}, which we present next.

 \begin{proof}[Proof of \cref{thm1}]

   Let us first observe that
   \begin{align}
   \begin{split}
\sqrt{n} \left(  \hat{\lambda}_l^\eps - \E[\hat{\lambda}^\eps_l] \right) & \E[ \langle \hat{u}_l, u_l \rangle_{L^2(\X_n)}  ] 
\\  = & \sqrt{n}\left((\hat{\lambda}_l^\eps - \E[\hat{\lambda}^\eps_l])\langle \hat{u}_l, u_l \rangle_{L^2(\X_n)}-\E\left[(\hat{\lambda}_l^\eps - \E[\hat{\lambda}^\eps_l])\langle \hat{u}_l, u_l \rangle_{L^2(\X_n)}\right]\right)
\\& - \mathcal{I}_n, 
\end{split}
       \label{eqn:AuxThm5_2}
   \end{align}
where 
\[ \mathcal{I}_n:= \sqrt{n} \left((\hat{\lambda}_l^\eps - \E[\hat{\lambda}^\eps_l])\left(\langle \hat{u}_l, u_l \rangle_{L^2(\X_n)}-\E[\langle \hat{u}_l, u_l \rangle_{L^2(\X_n)}]\right)-\E\left[(\hat{\lambda}_l^\eps - \E[\hat{\lambda}^\eps_l])\langle \hat{u}_l, u_l \rangle_{L^2(\X_n)}\right] \right).\]
We analyze each of the terms on the right-hand side of \eqref{eqn:AuxThm5_2}. 

First, observe that by combining (\ref{enu:3}) and (\ref{enu:1.a}) in Theorem \ref{thm} with equation \eqref{eq:decomposition} and Slutsky's theorem we deduce
    \begin{equation}
 	\sqrt{n}\left((\hat{\lambda}_l^\eps - \E[\hat{\lambda}^\eps_l])\langle \hat{u}_l, u_l \rangle_{L^2(\X_n)}-\E\left[(\hat{\lambda}_l^\eps - \E[\hat{\lambda}^\eps_l])\langle \hat{u}_l, u_l \rangle_{L^2(\X_n)}\right]\right)\dto \mathcal{N}(0,\sigma_l^2).
    \label{eq:AuxThm5}
 	\end{equation}

To analyze the term $\mathcal{I}_n$, note that from \cref{lem:spectral convergence in literature} we obtain 
    \begin{equation*}
        cn\E\left[\left(\hat{\lambda}_l^\eps-\E[\hat{\lambda}_l^\eps]\right)^2\right]-c_n' \leq n\E\left[\left(\hat{\lambda}_l^\eps-\E[\hat{\lambda}_l^\eps]\right)^2\langle \hat{u}_l, u_l \rangle^2_{L^2(\X_n)}\right]\leq Cn\E\left[\left(\hat{\lambda}_l^\eps-\E[\hat{\lambda}_l^\eps]\right)^2\right]+c_n',
    \end{equation*}
    for $c_n'\to 0$ as $n \rightarrow \infty$. This, together with \eqref{eq:lambda-hatlambda}, implies
    \begin{align}
    \label{eq:decompose of variance}
    \begin{split}
      c a_n =   cn\E\left[\left(\hat{\lambda}_l^\eps-\E[\hat{\lambda}_l^\eps]\right)^2\right] & \leq n\E\left[\left(\hat{\lambda}_l^\eps-\E[\hat{\lambda}_l^\eps]\right)^2\langle \hat{u}_l, u_l \rangle^2_{L^2(\X_n)}\right] + c_n'
        \\ & \leq 2\mathrm{Var}\left(\BB_n\right) + 2\mathrm{Var}\left( \EE_n\right) + c_n'.
        \end{split}
    \end{align}
 	Using the above bound and Theorem \ref{thm} we deduce
 	\begin{equation}\label{eq:a_n upper bound by constant}
 	\limsup_{n \rightarrow \infty} a_n\leq C\sigma_l^2.
 	\end{equation}
 	From \eqref{eq:a_n upper bound by constant} and \cref{lem:spectral convergence in literature}, we have	
    \begin{multline*}
  	n\mathrm{Var}\left((\hat{\lambda}_l^\eps - \E[\hat{\lambda}^\eps_l])\left(\langle \hat{u}_l, u_l \rangle_{L^2(\X_n)}-\E[\langle \hat{u}_l, u_l \rangle_{L^2(\X_n)}]\right)-\E\left[(\hat{\lambda}_l^\eps - \E[\hat{\lambda}^\eps_l])\langle \hat{u}_l, u_l \rangle_{L^2(\X_n)}\right]\right)\\
 	\leq n\E\left[\left(\hat{\lambda}_l^\eps - \E[\hat{\lambda}^\eps_l]\right)^2\left(\langle \hat{u}_l, u_l \rangle_{L^2(\X_n)}-\E[\langle \hat{u}_l, u_l \rangle_{L^2(\X_n)}]\right)^2\right]\to 0,
 	\end{multline*}
and, since $\E[\mathcal{I}_n] =0$, we conclude that $\mathcal{I}_n \pto 0$. 
    
All the above can be combined with the fact that $\E[\langle \hat{u}_l, u_l \rangle_{L^2(\X_n)}]\to 1$ to conclude, using Slutsky's theorem, that $\sqrt{n}\left(\hat{\lambda}_l^\eps-\E\left[\hat{\lambda}_l^\eps\right]\right)\dto \mathcal{N}\left(0,\sigma_l^2\right)$, as we wanted to show.
 \end{proof}

In the next subsections, we study the bulk term $\BB_n$ first, and then proceed to establish \cref{lem:b_n control}, which will imply the second part of \cref{thm}, where we characterize the asymptotic behavior of the error term $\EE_n$.


\subsection{Bulk Term}\label{sec:random term}

In this section, we present the proof of (\ref{enu:3}) in Theorem \ref{thm}. First, we observe that 
\begin{equation*}
    \sqrt{n}\left(\langle u_l, u_l\rangle_{L^2(\X_n)}-\E[\langle u_l, u_l\rangle_{L^2(\X_n)}]\right) (\lambda_l-\E[\hat{\lambda}_l^{\eps}])\pto 0,
\end{equation*}
which follows from Slutsky's theorem because  $\lambda_l-\E[\hat{\lambda}_l^{\eps}]\to 0$ as $n\to\infty$ (thanks to \cref{lem:spectral convergence in literature}) and$ \sqrt{n}\left(\langle u_l, u_l\rangle_{L^2(\X_n)}-\E[\langle u_l, u_l\rangle_{L^2(\X_n)}]\right)$ converges to a Gaussian distribution (using the standard CLT for empirical averages of i.i.d. random variables). From this, we deduce
\begin{equation*}
    \sqrt{n}\left(\langle \Delta_\rho u_l- \E[\hat{\lambda}_l^\eps] u_l, u_l\rangle_{L^2(\X_n)}-\E[\langle \Delta_\rho u_l- \E[\hat{\lambda}_l^\eps] u_l, u_l\rangle_{L^2(\X_n)}]\right) \pto 0.
\end{equation*}
Therefore, to prove \eqref{enu:3} in \cref{thm} it suffices to show that
\begin{align}
\label{aux:BulkTerm}
                &\sqrt{n}\left(\langle u_l, (\Delta_\rho-\Delta_n)u_l\rangle_{L^2(\X_n)}-\E\left[\langle u_l, (\Delta_\rho-\Delta_n)u_l\rangle_{L^2(\X_n)}\right]\right)\dto \mathcal{N}\left(0,\sigma_l^2\right).
                \end{align}
Now, consider the U-statistic
\begin{equation*}
    \begin{split}
    U_n &:=\langle u_l, (\Delta_\rho-\frac{n}{n-1}\Delta_n) u_l \rangle_{L^2(\X_n)} \\
    &= \frac{1}{n} \sum_{i=1}^n \biggl(\lambda_l u_l(x_i)^2 - \frac{n}{n-1} u_l(x_i) \Bigl(\frac{1}{n} \sum_{j=1}^n \eps^{-(m+2)} \eta\Bigl(\tfrac{\abs{x_i - x_j}}{\eps}\Bigr) (u_l(x_i) - u_l(x_j))\Bigr)\biggr) \\
    &= \frac{1}{n(n-1)} \sum_{i=2}^n \sum_{j<i} \Bigl(\lambda_l \bigl(u_l(x_i)^2 + u_l(x_j)^2\bigr) - \eps^{-(m+2)} \eta\Bigl(\tfrac{\abs{x_i - x_j}}{\eps}\Bigr) \bigl(u_l(x_i) - u_l(x_j)\bigr)^2\Bigr) \\
    &= \frac{1}{\binom{n}{2}} \sum_{i=2}^n \sum_{j<i} g_n(x_i,x_j),
    \end{split}
\end{equation*}
for the symmetric function $g_n:\mathcal{M}\times\mathcal{M}\to\R$ given by
\begin{equation}\label{eq-def:g_n}
    g_n(x,y) \coloneqq \tfrac{\lambda_l}{2} (u_l(x)^2 + u_l(y)^2) - \tfrac{1}{2} \eps^{-(m+2)} \eta\Bigl(\tfrac{\abs{x_i - x_j}}{\eps}\Bigr) (u_l(x) - u_l(y))^2,
\end{equation}
which depends on $n$ through $\eps$. It is straightforward to see that \eqref{aux:BulkTerm} holds if and only if  $\sqrt{n} (U_n - \E[U_n]) \dto \mathcal{N}(0, \sigma_l^2)$. For this reason, we focus on establishing the latter.  
\begin{lemma}
    If $\eps \gg n^{-1/m}$, then $\sqrt{n} (U_n - \E[U_n])$ has the same asymptotic distribution as $2\sqrt{n} (\tfrac{1}{n}\sum_{i=1}^n \E[g_n(x_i,x_j)\mid x_i] - \E[g_n(x_i,x_j)])$.
\label{lem:reduction_u-statistic_to_univariate}
\end{lemma}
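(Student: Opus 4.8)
The plan is to carry out the standard Hoeffding decomposition of the U-statistic $U_n$ with kernel $g_n$, and to show that the degenerate (second-order) part is negligible at the $\sqrt n$ scale precisely when $\eps \gg n^{-1/m}$. Write $\theta_n := \E[g_n(x_1,x_2)]$, $g_{1,n}(x) := \E[g_n(x,x_2)\mid x_1 = x] - \theta_n$, and $g_{2,n}(x,y) := g_n(x,y) - \theta_n - g_{1,n}(x) - g_{1,n}(y)$, so that
\begin{equation*}
U_n - \E[U_n] = \frac{2}{n}\sum_{i=1}^n g_{1,n}(x_i) + \frac{1}{\binom n2}\sum_{i<j} g_{2,n}(x_i,x_j).
\end{equation*}
The first term on the right is exactly the univariate average appearing in the statement (up to the centering already absorbed into $g_{1,n}$), so the lemma reduces to the claim that $\sqrt n$ times the second term converges to $0$ in probability, which by Chebyshev follows once we show $n\,\Var\bigl(\binom n2^{-1}\sum_{i<j} g_{2,n}(x_i,x_j)\bigr)\to 0$.

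The key computation is the variance bound for the degenerate part. By orthogonality of the Hoeffding components, $\Var\bigl(\binom n2^{-1}\sum_{i<j}g_{2,n}(x_i,x_j)\bigr) = \binom n2^{-1}\,\E[g_{2,n}(x_1,x_2)^2] \le \frac{2}{n(n-1)}\E[g_n(x_1,x_2)^2]$, so it suffices to show $\E[g_n(x_1,x_2)^2] = o(n)$. From the definition \eqref{eq-def:g_n}, the polynomial part $\tfrac{\lambda_l}{2}(u_l(x)^2+u_l(y)^2)$ contributes an $O(1)$ quantity since $u_l$ is bounded (it is $C^{3,\alpha}(\M)$), so the only term that can grow is
\begin{equation*}
\frac{1}{4\eps^{2(m+2)}}\,\E\Bigl[\eta\Bigl(\tfrac{|x_1-x_2|}{\eps}\Bigr)^2\bigl(u_l(x_1)-u_l(x_2)\bigr)^4\Bigr].
\end{equation*}
Using $\eta \le \eta(0)$, the support condition $\eta(|v|)=0$ for $|v|>1$, the Lipschitz (indeed $C^1$) bound $|u_l(x_1)-u_l(x_2)|\le C|x_1-x_2|\le C d(x_1,x_2)$, and the fact that for $x_1$ fixed the density of $x_2$ restricted to the geodesic ball $B(x_1,\eps)$ contributes a mass of order $\eps^m$ (by Assumption \ref{assum:Epsilon}, $\eps$ is below the relevant geometric thresholds so normal coordinates and the volume comparison apply, and $\rho \le \rho_{\max}$), we get
\begin{equation*}
\E\bigl[g_n(x_1,x_2)^2\bigr] \le C + \frac{C}{\eps^{2(m+2)}}\cdot \eps^{m}\cdot \eps^{4} = C + \frac{C}{\eps^{m}}.
\end{equation*}
Hence $\frac{1}{n}\E[g_n(x_1,x_2)^2] \le \frac{C}{n} + \frac{C}{n\eps^{m}} \to 0$ exactly under the hypothesis $\eps \gg n^{-1/m}$ (equivalently $n\eps^m\to\infty$), which is the promised threshold. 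Combining, $n\,\Var$ of the degenerate part is $O\bigl(\frac{1}{n} + \frac{1}{n\eps^m}\bigr) = o(1)$.

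Finally, multiplying the Hoeffding identity by $\sqrt n$ and using Slutsky's theorem together with the variance estimate just obtained, $\sqrt n(U_n - \E[U_n])$ and $\frac{2}{\sqrt n}\sum_{i=1}^n g_{1,n}(x_i) = 2\sqrt n\bigl(\tfrac1n\sum_i \E[g_n(x_i,x_j)\mid x_i] - \E[g_n(x_i,x_j)]\bigr)$ differ by a term going to $0$ in probability, so they have the same asymptotic distribution. The main obstacle is the moment estimate $\E[g_n(x_1,x_2)^2] = O(1 + \eps^{-m})$: it requires the geometric localization argument (normal coordinates, volume comparison for small geodesic balls, and the regularity $u_l \in C^{3,\alpha}$) to turn the $\eps^{-2(m+2)}$ prefactor into a harmless $\eps^{-m}$ after integration — everything else is bookkeeping. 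One should also note in passing that the same bound shows $\Var(g_{1,n}) = O(1)$ is bounded (indeed it will converge to $\tfrac14\sigma_l^2$, as needed downstream), so no issue arises in identifying the surviving univariate term; but that identification is the content of the next step, not of this lemma.
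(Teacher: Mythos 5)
Your proof is correct and follows essentially the same route as the paper: the heart of both arguments is the moment bound $\E[g_n(x_1,x_2)^2] \le C + C\eps^{-m} = o(n)$ under $\eps \gg n^{-1/m}$, obtained from the boundedness and Lipschitz regularity of $u_l$ together with the $O(\eps^m)$ mass of an $\eps$-neighborhood. The only difference is that the paper then invokes Lemma 3.1 of Powell--Stock--Stoker to pass to the H\'ajek projection, whereas you re-derive that projection step explicitly via the Hoeffding decomposition and the orthogonality-based variance bound for the degenerate component, which is a perfectly valid (and self-contained) substitute.
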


\begin{proof}
We seek to apply \cite[Lemma 3.1]{PowellStockStoker1989}. For this, we need to prove that $\E[g_n(x_i,x_j)^2]$ is $o(n)$. 

Observe that
\begin{align*}
  \E[ g_n(x_i, x_j)^2 ]  & \leq \frac{\lambda_l^2}{2} \E[ (u_l(x_i)^2 + u_l(x_j)^2)] +  \frac{1}{2\veps^{2m+4}} \E \left[ \eta\left( \frac{|x_i- x_j|}{\veps} \right)^2 ( u_l(x_i) - u_l(x_j))^4 \right]  
  \\& \leq C_l + \frac{C_l}{\veps^{m}},
\end{align*}
where in the second line we used the regularity of $u_l$, which implies that $u_l$ is uniformly bounded and also Lipschitz, so that, in particular, for $x, y$ within distance $\veps$ we have $(u_l(x) - u_l(y) )^4 \leq C_l \veps^4$. In turn, the term $C_l + \frac{C_l}{\veps^{m}}$ is $o(n)$ under the assumption that $\eps \gg n^{-1/m}$. The result follows from \cite[Lemma 3.1]{PowellStockStoker1989}.

\end{proof}

By \cref{lem:reduction_u-statistic_to_univariate}, to characterize the asymptotic distribution of $\BB_n$ we can  focus on establishing a CLT for a triangular array of independent random variables, and for this we can rely on Lindeberg's CLT.

\begin{proof}[Proof of \ref{enu:3} in \cref{thm}]

Thanks to \cref{lem:reduction_u-statistic_to_univariate}, and given that we have assumed $\eps \gg n^{-1/m}$, the asymptotic distribution of $\B_n$ is the same as that of $\sqrt{n}(U_n - \E[U_n])$ which is, in turn, the same as that of
\begin{equation}
    2\sqrt{n} \left(\frac{1}{n}\sum_{i=1}^n \Bigl(\E[g_n(x_i,x_j)\mid x_i] - \E[g_n(x_i,x_j)]\Bigr)\right) \eqqcolon 2\sqrt{n} \left(\frac{1}{n}\sum_{i=1}^n \Bigl(Y_{i,n} - \E[Y_{i,n}]\Bigr)\right),
    \label{eq:clt:u-statistic_reduction_result}
\end{equation}
where 
\[ Y_{i,n}\coloneqq \E[g_n(x_i,x_j)\mid x_i]\, \quad i=1, \dots, n.\]
Note that $Y_{1,n}, \dots, Y_{n,n}$ are i.i.d. random variables and thus it suffices to check the conditions to apply Lindeberg's CLT.

We start by computing $\E[Y_{i,n}]$. First, using \eqref{e.comparegeodesic}, the fact that $\eta$ was assumed Lipschitz, and the regularity of $u_l$, we observe that
\begin{equation*}
    \begin{split}
        &\Biggl|\E[Y_{i,n}] -\E\Bigl[\E\Bigl[\tfrac{\lambda_l}{2} (u_l(x_i)^2 + u_l(x_j)^2) - \tfrac{1}{2}\eps^{-(m+2)}\eta\Bigl(\tfrac{d(x_i , x_j)}{\eps}\Bigr) (u_l(x_i) - u_l(x_j))^2\Big\lvert x_i\Bigr]\Bigr]\Biggr|\\
        &\leq \tfrac{1}{2}\eps^{-(m+2)}\E\Bigl[\Bigl|\eta\Bigl(\tfrac{d(x_i , x_j)}{\eps}\Bigr)-\eta\Bigl(\tfrac{\abs{x_i - x_j}}{\eps}\Bigr) \Bigr|\bigl(u_l(x_i)-u_l(x_j)\bigr)^2\Bigr]\leq C\eps^{2},
    \end{split}
\end{equation*}
where we recall $d(\cdot, \cdot)$ denotes $\M$'s geodesic distance.
Continuing the computation, we use normal coordinates (see Appendix \ref{App:GeoBack}) to write
    \begin{align*}
        \begin{split}
        & \E\Bigl[\E\Bigl[\tfrac{\lambda_l}{2} (u_l(x_i)^2 + u_l(x_j)^2) - \tfrac{1}{2}\eps^{-(m+2)}\eta\Bigl(\tfrac{d(x_i , x_j)}{\eps}\Bigr) (u_l(x_i) - u_l(x_j))^2\Big\lvert x_i\Bigr]\Bigr] \\
        &= \lambda_l \E[u_l(x_i)^2] - \tfrac{1}{2}\eps^{-(m+2)}\E\Bigl[\eta\Bigl(\tfrac{d(x_i , x_j)}{\eps}\Bigr) \bigl(u_l(x_i)-u_l(x_j)\bigr)^2\Bigr] \\
        &= \lambda_l \int_\M u_l(x)^2 \rho(x) \dx -\tfrac{1}{2}\eps^{-(m+2)}\int_{\M}\int_{B_{\eps}(0)\subset T_{x}\M}\eta\Bigl(\tfrac{\abs{v}}{\eps}\Bigr) \\
        & \qquad \qquad \qquad \times \bigl(w_{x}(0)-w_{x}(v)\bigr)^2 p_{x}(0)p_{x}(v)J_{x}(v)\dd v\dd  x\\
        &=\lambda_l -\tfrac{1}{2}\eps^{-2}\int_{\M}\int_{B_1(0)\subset T_{x}\M}\eta\Bigl(\abs{v}\Bigr)\\
        &\qquad \qquad \qquad \times \bigl(w_{x}(0)-w_{x}(\eps v)\bigr)^2 p_{x}(0)p_{x}(\eps v)J_{x}(\eps v)\dd v\dd x\\
        &= \lambda_l  -\tfrac{1}{2}\int_{\M}\int_{B_1(0)\subset T_{x}\M}\eta\Bigl(\abs{v}\Bigr) \bigl(\nabla w_{x}(0)\cdot v\bigr)^2 p_{x}^2(0)\dd v\dd x+\BO(\eps)\\
        &= \lambda_l -\tfrac{1}{2}\int_{\M}\int_{B_1(0)\subset T_{x}\M}\eta\Bigl(\abs{v}\Bigr) \bigl(\nabla w_{x}(0)\cdot v\bigr)^2 p_{x}^2(0)\dd v\dd x+\BO(\eps)\\
        &=\lambda_l - \int_{\M}|\nabla u_l(x)|^2\rho^2(x) \dd x  + O(\eps)
        \\ & =  O(\eps).
    \end{split}
    \end{align*}
In the second equality, we used $w_x(v):=u_l(\exp_x(v))$ and $p_x(v)=\rho(\exp_x(v))$, i.e., $w$ and $p$ are, respectively, the functions $f$ and $\rho$ expressed in normal coordinates at $x$; see Appendix \ref{App:GeoBack}. To go from the third to the fourth line we used  Taylor expansions for $w_x$ and $p_x$, as well as \eqref{e.Jactaylor}. To go from the sixth to the seventh line we used that $\eta$ is normalized according to \eqref{eqn:AssumpKernel}. The bottom line is that 
\[ \E[Y_{i,n}] = O(\veps). \]

Now, we turn our attention to analyzing $\E[Y_{i,n}^2]$. A 
similar computation as before reveals
\begin{align*}
    \E[Y_{i,n}^2] &=  \E\Bigl[\Bigl(\tfrac{\lambda_l}{2} u_l(x_i)^2+\E\Bigl[\tfrac{\lambda_l}{2} u_l(x_j)^2\Bigr] \\
    &\qquad -\E\Bigl[ \tfrac{1}{2}\eps^{-(m+2)}\eta\Bigl(\tfrac{d(x_i , x_j)}{\eps}\Bigr) (u_l(x_i) - u_l(x_j))^2\mid x_i\Bigr]\Bigr)^2\Bigr]+O(\eps).\\
\end{align*}
On the other hand,
\begin{align*}
    &\E\Bigl[\Bigl(\tfrac{\lambda_l}{2} u_l(x_i)^2+\E\Bigl[\tfrac{\lambda_l}{2} u_l(x_j)^2\Bigr] \\
    &\qquad -\E\Bigl[ \tfrac{1}{2}\eps^{-(m+2)}\eta\Bigl(\tfrac{d(x_i , x_j)}{\eps}\Bigr) (u_l(x_i) - u_l(x_j))^2\mid x_i\Bigr]\Bigr)^2\Bigr]\\
    &=\E\Bigl[\Bigl(\tfrac{\lambda_l}{2} u_l(x_i)^2+ \tfrac{\lambda_l}{2}  \\
    &\qquad  -\int_{B_{\eps}(0)\subset T_{x_i}\M} \tfrac{1}{2}\eps^{-(m+2)}\eta\Bigl(\tfrac{\abs{v}}{\eps}\Bigr) (w_{x_i}(0) - w_{x_i}(v))^2 p_{x_i}(v) J_{x_i}(v)\dd v\Bigr)^2\Bigr]\\
    &=\E\Bigl[\Bigl(\tfrac{\lambda_l}{2} u_l(x_i)^2+\tfrac{\lambda_l}{2} \\
    &\qquad  -\int_{B_1(0)\subset T_{x_i}\M} \tfrac{1}{2}\eps^{-2}\eta\Bigl(\abs{v}\Bigr) (w_{x_i}(0) - w_{x_i}(\eps v))^2 p_{x_i}(\eps v) J_{x_i}(\eps v) \dd v\Bigr)^2\Bigr]\\
    &= \E\Bigl[\Bigl(\tfrac{\lambda_l}{2} u_l(x_i)^2+\tfrac{\lambda_l}{2}  \\
    &\qquad -\int_{B_1(0)\subset T_{x_i}\M} \tfrac{1}{2}\eta\Bigl(\abs{v}\Bigr) \Bigl(\nabla w_{x_i}(0)\cdot v\Bigr)^2p_{x_i}(0)\dd v+\BO(\eps)\Bigr)^2\Bigr]\\
    &=\E\Bigl[\Bigl(\tfrac{\lambda_l}{2} u_l(x_i)^2+ \tfrac{\lambda_l}{2}  -  |\nabla u_l(x_i) |^2\rho(x_i)+O(\eps)\Bigr)^2\Bigr]\\
    &=\E\Bigl[\Bigl(\tfrac{\lambda_l}{2} u_l(x_i)^2+\tfrac{\lambda_l}{2}  - |\nabla u_l(x_i) |^2\rho(x_i)\Bigr)^2\Bigr]+O(\eps)\\
    &=\int_\M\Bigl(\tfrac{\lambda_l}{2} u_l(x)^2+\tfrac{\lambda_l}{2}  - |\nabla u_l(x) |^2\rho(x)\Bigr)^2\rho(x)\dd x+O(\eps).
\end{align*}

By combining the above equations, we deduce
\begin{equation}
\begin{split}
    \Var[Y_{i,n}] 
    &= \E[Y_{i,n}^2] - \E[Y_{i,n}]^2 \\
    &= \int_\M\Bigl(\tfrac{\lambda_l}{2} u_l(x)^2+\tfrac{\lambda_l}{2}  - |\nabla u_l(x) |^2\rho(x)\Bigr)^2\rho(x)\dd x+O(\eps).
\end{split}
\label{eq:clt:variance_convergence}
\end{equation}
In particular 
\[ s_n^2:= \frac{1}{n}\sum_{i=1}^n \Var[Y_{i,n}] = \Var[Y_{1,n}] \rightarrow \frac{\sigma_l^2}{4}   , \quad \text{ as } n \rightarrow \infty.\]
To conclude that 
\[ 2\sqrt{n} \left(\frac{1}{n}\sum_{i=1}^n \Bigl(Y_{i,n} - \E[Y_{i,n}]\Bigr)\right)  \dto \mathcal{N}(0, \sigma_l^2 ),\] 
and by virtue of \eqref{eq:clt:u-statistic_reduction_result} deduce the desired result, it thus suffices to check Lindeberg's condition, which, recall, requires that 
\begin{equation}
    \frac{1}{s_n^2} \sum_{i=1}^n \E\bigl[(Y_{i,n} - \E[Y_{i,n}])^2\mathbbm{1}_{\{\abs{Y_{i,n} - \E[Y_{i,n}]} > \delta s_n\}}\bigr] \nto 0,
\label{eq:lindeberg}
\end{equation}
for every $\delta > 0$. This, however, follows immediately from the fact that the random variables $Y_{i,n}$ are uniformly bounded in $n$. Indeed, note that
\begin{align*}
|Y_{i,n}| & = \left| \E\Bigl[\tfrac{\lambda_l}{2} (u_l(x_i)^2 + u_l(x_j)^2) - \tfrac{1}{2}\eps^{-(m+2)}\eta\Bigl(\tfrac{d(x_i , x_j)}{\eps}\Bigr) (u_l(x_i) - u_l(x_j))^2\Big\lvert x_i\Bigr] \right|
\\ & \leq C_l,
\end{align*}
where in the above we have used the fact that $u_l$ is Lipschitz to argue that $(u_l(x)- u_l(y))^2 \leq C_l \veps^2$ for all $x,y$ within distance $\veps$ from each other.

\end{proof}

\subsection{Proof of \cref{lem:b_n control} and Analysis of the Error Term}\label{secLlem-b_n control}

In this section, we present the proof of \cref{lem:b_n control} and of (\ref{enu:1.a}) in \cref{thm}. As our analysis will reveal, (\ref{enu:1.a}) in \cref{thm} will follow once we establish the bound \eqref{eq:b_n upper bound by a_n} for $b_n= \mathrm{Var}(\EE_n)$.

\subsubsection{Preliminary decomposition for the error term}

We start our analysis by recalling the following perturbation inequality used to control the variance of a function of independent random variables. 
    \begin{lemma}[Efron-Stein inequality]\label{lem:efron stein}
        Suppose $x_1,\dots,x_n,x_1',\dots,x_n'$ are independent with $x_i,x_i'$ having the same distribution. Let $\X_n=(x_1,\dots,x_n)$ and $\X_n^{(i)}=(x_1,\dots,x_{i-1},x_i',x_{i+1},\dots,x_n)$ and let $\beta:\M^{n}\to\R$ be a measurable function. Then
        \begin{equation*}
            \mathrm{Var}(\beta(x_1,\dots,x_n))\leq \frac{1}{2}\sum_{i=1}^n \E\left[\left(\beta(\X_n)-\beta(\X_n^{(i)})\right)^2\right].
        \end{equation*}
    \end{lemma}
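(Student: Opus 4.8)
The plan is to prove the Efron--Stein inequality by the classical martingale (Doob) decomposition argument, together with a symmetrization step that produces the constant $\tfrac12$. We may assume $\beta(x_1,\dots,x_n)\in L^2$, since otherwise both sides are infinite. For $0\le i\le n$ write $\E_i[\cdot]:=\E[\,\cdot\mid x_1,\dots,x_i\,]$, so $\E_0=\E$ and $\E_n$ is the identity on $\sigma(x_1,\dots,x_n)$-measurable functions; and for $1\le i\le n$ let $\E^{(i)}[\cdot]$ denote conditional expectation given all of $x_1,\dots,x_n$ except $x_i$, with $\mathrm{Var}^{(i)}$ the associated conditional variance. Setting $\Delta_i:=\E_i[\beta]-\E_{i-1}[\beta]$ gives the telescoping identity $\beta-\E[\beta]=\sum_{i=1}^n\Delta_i$. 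Since $\E[\Delta_i\mid x_1,\dots,x_{i-1}]=0$, the $\Delta_i$ are pairwise orthogonal in $L^2$, so that $\mathrm{Var}(\beta)=\sum_{i=1}^n\E[\Delta_i^2]$.

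Next I would bound each term by $\E[\Delta_i^2]\le\E[\mathrm{Var}^{(i)}(\beta)]$. The key point is that $\E^{(i)}[\beta]$ is $\sigma(x_1,\dots,x_{i-1},x_{i+1},\dots,x_n)$-measurable and in particular does not depend on $x_i$; hence, using independence of $x_i$ from the remaining variables (given $x_1,\dots,x_{i-1}$) together with the tower property, $\E_i[\E^{(i)}\beta]=\E_{i-1}[\E^{(i)}\beta]=\E_{i-1}[\beta]$. Therefore $\Delta_i=\E_i[\beta-\E^{(i)}\beta]$, and conditional Jensen yields $\Delta_i^2\le\E_i[(\beta-\E^{(i)}\beta)^2]$, so $\E[\Delta_i^2]\le\E[(\beta-\E^{(i)}\beta)^2]=\E[\mathrm{Var}^{(i)}(\beta)]$. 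Summing over $i$ gives the conditional-variance form $\mathrm{Var}(\beta)\le\sum_{i=1}^n\E[\mathrm{Var}^{(i)}(\beta)]$.

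Finally I would perform the symmetrization. Conditionally on $(x_j)_{j\ne i}$, the quantities $\beta(\X_n)$ and $\beta(\X_n^{(i)})$ are i.i.d.\ (they are the same measurable function applied to $x_i$, respectively to its independent copy $x_i'$). For any i.i.d.\ pair $Z,Z'$ one has $\E[(Z-Z')^2]=2\,\mathrm{Var}(Z)$, whence $\E[(\beta(\X_n)-\beta(\X_n^{(i)}))^2\mid (x_j)_{j\ne i}]=2\,\mathrm{Var}^{(i)}(\beta)$. Taking expectations and combining with the previous bound gives $\mathrm{Var}(\beta(x_1,\dots,x_n))\le\tfrac12\sum_{i=1}^n\E[(\beta(\X_n)-\beta(\X_n^{(i)}))^2]$, which is the asserted inequality.

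This is a classical result, so there is no genuine obstacle; the only points requiring care are the identity $\E_i[\E^{(i)}\beta]=\E_{i-1}[\beta]$ (which is exactly what makes the martingale increments comparable to the conditional variances) and the elementary symmetrization identity $\E[(Z-Z')^2]=2\,\mathrm{Var}(Z)$ that accounts for the factor $\tfrac12$.
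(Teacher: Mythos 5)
Your proof is correct. The paper does not prove this lemma at all—it simply cites \cite{bousquet2011advanced}—and your argument (Doob martingale decomposition, the identity $\E_i[\E^{(i)}\beta]=\E_{i-1}[\beta]$ plus conditional Jensen to get $\mathrm{Var}(\beta)\le\sum_i\E[\mathrm{Var}^{(i)}(\beta)]$, and the symmetrization identity $\E[(Z-Z')^2]=2\,\mathrm{Var}(Z)$ for the factor $\tfrac12$) is precisely the standard proof found in that reference, so it matches the intended route; the only point worth flagging is that the lemma is tacitly stated for square-integrable $\beta$, and your remark that otherwise both sides are infinite is the right (and harmless) way to dispose of that case.
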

    The proof of the Efron-Stein inequality is available in \cite{bousquet2011advanced}. Other versions of this inequality can be used to obtain sharper bounds on the tail probability of a function of independent random variables, but since our ultimate goal in \cref{lem:b_n control} is to prove the convergence in probability of a sequence of random variables toward zero, it will suffice to show that the sequence of associated variances goes to zero by using the version of the Efron-Stein inequality stated above.

\begin{proof}[Proof of \cref{lem:b_n control}]
    In what follows we consider a measurable function $\beta:\M^n\to \R$ satisfying
    \begin{equation*}
        \beta(x_1,\dots,x_n)= \langle \hat{u}^{\X_n}_l-u_l,\E[\hat{\lambda}_l^\veps] u_l - \Delta^{\X_n}_n u_l\rangle_{L^2(\X_n)},
    \end{equation*}
    where $\Delta^{\X_n}_n$ is the graph Laplacian constructed with the points $\X_n = \{ x_1, \dots, x_n \}$, and $\hat{u}^{\X_n}_l$ is a corresponding normalized $l$-th eigenvector satisfying 
    \[\langle \hat{u}_l^{\X_n}, u_l \rangle_{L^2(\X_n)} \geq 0 .\] 
    Existence of such measurable function follows from standard measurable selection theorems.    
    
    Notice that $b_n$ can be written as $b_n = n \mathrm{Var}(\beta(\X_n))$ for $\X_n$ an i.i.d sample of $\rho$. Therefore, using the Efron-Stein inequality, i.e., \cref{lem:efron stein}, to prove \cref{lem:b_n control} it will suffice to show that
    \begin{equation}\label{eq:claim b_nto 0}
        b_n\leq \frac{n}2\sum_{i=1}^n \E[(\beta(\X_n)-\beta(\X_n^{(i)}))^2]\leq \frac{C}{n\eps^{m+2}}a_n+c_n,
    \end{equation} 
    for $\X_n$ and $\X_n^{(i)}$ as in the Efron-Stein inequality and $c_n \rightarrow 0$ as $n \rightarrow \infty$.
    In turn, we can focus on establishing
    \begin{equation}\label{eq-A1}
        \begin{split}
            A_1&:=\frac{1}{n}\sum_{i=1}^n \E\Bigl[\Bigl(\sum_{j:j\not= i}\left(\hat{u}_l^{\X_n}(x_j)-u_l(x_j)\right)(\E[\hat{\lambda}_l^\veps] u_l(x_j) - \Delta^{\X_n}_n u_l(x_j))\\
            &\qquad \qquad -\sum_{j:j\not= i}\left(\hat{u}_l^{\X_n^{(i)}}(x_j)-u_l(x_j)\right)(\E[\hat{\lambda}_l^\veps] u_l(x_j) - \Delta^{\X_n^{(i)}}_n u_l(x_j))\Bigr)^2\Bigr]
            \\ & \le \frac{C}{n\eps^{m+2}}a_n+c_n,
        \end{split}
    \end{equation} 





    \begin{equation}\label{eq-A2}
        A_2:=\frac{1}{n}\sum_{i=1}^n\E\left[\left((\hat{u}_l^{\X_n}(x_i)-u_l(x_i))((\Delta_\rho-\Delta_n^{\X_n})u_l(x_i))\right)^2\right]\to 0,
    \end{equation} 
    \begin{equation}\label{eq-A3}
        A_3:=\frac{1}{n}\sum_{i=1}^n\E\left[\left((\hat{u}_l^{\X^{(i)}_n}(x_i')-u_l(x_i'))((\Delta_\rho-\Delta_n^{\X^{(i)}_n})u_l(x'_i))\right)^2\right]\to 0,
    \end{equation} 
    \begin{equation}\label{eq-A4}
        A_4:=\frac{(\E[\hat{\lambda}_l^\veps]-\lambda_l)^2}{n}\sum_{i=1}^n\E\left[\left((\hat{u}_l^{\X_n}(x_i)-u_l(x_i))u_l(x_i)\right)^2\right]\to 0,
    \end{equation}
    and 
    \begin{equation}\label{eq-A5}
        A_5:=\frac{(\E[\hat{\lambda}_l^\veps]-\lambda_l)^2}{n}\sum_{i=1}^n\E\left[\left((\hat{u}_l^{\X^{(i)}_n}(x_i')-u_l(x_i'))u_l(x'_i)\right)^2\right]\to 0,
    \end{equation}
    because a straightforward computation using the triangle inequality reveals that
    \begin{equation*}
         n\sum_{i=1}^n \E[(\beta(\X_n)-\beta(\X_n^{(i)}))^2] \leq 5(A_1+A_2+A_3+A_4+A_5).
    \end{equation*}
  Notice that $A_2=A_3$ and $A_4=A_5$.

In what follows we focus on proving \eqref{eq-A2}, \eqref{eq-A4}, and \eqref{eq-A1}.

\subsubsection{Proof of \eqref{eq-A2}}
For \eqref{eq-A2}, let us first apply Cauchy-Schwarz inequality to obtain
    \begin{equation*}
        \begin{split}
            &\frac{1}{n}\sum_{i=1}^n\E\left[\left((\hat{u}_l^{\X_n}(x_i)-u_l(x_i))((\Delta_\rho-\Delta_n^{\X_n})u_l(x_i))\right)^2\right]\\
        &\quad  \quad \quad   \leq \E[\|u_l-\hat{u}_l^{\X_n}\|_{L^2(\X_n)}^2\|(\Delta_\rho-\Delta_n^{\X_n})u_l\|_{L^\infty(\X_n)}^2]. 
        \end{split}
    \end{equation*}
   Now, thanks to \cref{lem:spectral convergence in literature}, with probability at least 
\[ 1-Cn\eps^{-m} \exp(-c n\eps^{m+1}t^2),\]
     we have
     \begin{equation*}
         \begin{split}
             \|(\Delta_\rho-\Delta_n^{\X_n})u_l\|_{L^\infty(\X_n)}\leq C_l \left(\frac{t}{\sqrt{\eps}}+\eps\right), \quad 
             \|u_l-\hat{u}_l^{\X_n}\|_{L^2(\X_n)}\leq C_l(t+\eps^2).
         \end{split}
     \end{equation*}
     In addition, from \eqref{eq:normalization of hatu} we have the deterministic bound
     \begin{equation*}
         \|\hat{u}^{\X_n}_l\|_{L^\infty(\X_n)}\leq \sqrt{n},
     \end{equation*}
     which implies 
     \begin{align*}
        & \left(\left(\hat{u}_l^{\X_n}(x_i)-u_l(x_i)\right) \left((\Delta_\rho-\Delta_n^{\X_n})u_l(x_i)\right)\right)^2 \leq Cn\max_{x_i\in\X_n}\left|(\Delta_\rho-\Delta_n^{\X_n})u_l(x_i)\right|^2 
         \\ & \qquad \leq Cn\max_{x_i\in\X_n}\left|\Delta_\rho u_l(x_i)\right|^2+n\max_{x_i\in\X_n}\left|\Delta_n^{\X_n} u_l(x_i)\right|^2
         \\ & \qquad \leq Cn + n\frac{C}{\eps^{2m+2}}\|\nabla u_l\|_{L^\infty(\M)}^2
         \\& \qquad \leq Cn \eps^{-(2m+2)},
     \end{align*}
     for all $i = 1, \dots, n$. Combining the above bounds, we obtain 
    \begin{equation}\label{eq:A_2 proof}
        \begin{split}
            A_2&= \E\Bigl[\left((\hat{u}_l^{\X_n}(x_i)-u_l(x_i))((\Delta_\rho-\Delta_n^{\X_n})u_l(x_i))\right)^2\\
            &\qquad\qquad  \times \mathbbm{1}_{ \|(\Delta_\rho-\Delta_n^{\X_n})u_l\|_{L^\infty(\X_n)}\leq  C_l(\frac{t}{\sqrt{\eps}}+\eps),
             \|u_l-\hat{u}_l\|_{L^2(\X_n)}\leq C_l(t+\eps^2)}\Bigr]\\
             &\quad +\E\Bigl[\left((\hat{u}_l^{\X_n}(x_i)-u_l(x_i))((\Delta_\rho-\Delta_n^{\X_n})u_l(x_i))\right)^2\\
             &\qquad\qquad  \times \mathbbm{1}_{ \|(\Delta_\rho-\Delta_n^{\X_n})u_l\|_{L^\infty(\X_n)}> C_l(\frac{t}{\sqrt{\eps}}+\eps) \text{ or }
             \|u_l-\hat{u}_l\|_{L^2(\X_n)}> C_l(t+\eps^2)}\Bigr]\\
            &\leq\frac{Cn^2}{\eps^{3m+2}}C\exp(-cn\eps^{m+1}t^2) + C_l \left(\frac{t^2}{\sqrt{\eps}}+ t\eps + t \veps^{3/2} + \veps^3\right)^2.
        \end{split}
    \end{equation}
Taking $t= \frac{\veps^{1/4}}{d_n}$, with $d_n$ a poly-logarithmic function in $n$ (in particular $d_n \rightarrow \infty$ as $n \rightarrow \infty$), and recalling that for our choice of $\veps$ we have $ (\frac{\log n}{n})^{\frac{1}{m+3/2}} \ll \veps \ll 1$, it follows that the right hand side of the above expression goes to zero as $n \rightarrow \infty$. 
    


\subsubsection{Proof of \eqref{eq-A4}}

That $A_4\to 0$ follows directly from \cref{lem:spectral convergence in literature} because we have $(\E[\hat{\lambda}^{\veps}_l]-\lambda_l)\to 0$ and 
\begin{align*}
\E\left[ \frac{1}{n} \sum_{i=1}^n\left((\hat{u}_l^{\X_n}(x_i)-u_l(x_i))u_l(x_i)\right)^2\right] & \leq  C_l \E\left[  \lVert \hat{u}_l^{\X_n} -u_l \rVert^2_{L^2(
\X_n
)}\right] 
\\& \leq C_l \E \left[ 2\lVert \hat{u}_l^{\X_n} \rVert^2_{L^2(\X_n)}  + 2 \lVert u_l \rVert^2_{L^2(\X_n)} \right] 
\\& \leq C_l. 
\end{align*}

\subsubsection{Further decomposition of $A_1$}
In order to prove \eqref{eq-A1}, and complete in this way the proof of \cref{lem:b_n control}, it will be convenient to decompose $A_1$ into two simpler terms that we will analyze one by one. 

To begin, for a given $i=1, \dots, n$ we introduce the graph Laplacian $\Delta_{n-1}^{(i)}$, which acts on a scalar function $f$ according to
    \begin{equation}\label{eq-def:Delta_n-1}
        \Delta^{(i)}_{n-1} f(x_j) := \frac{1}{n\eps^{m+2}}\sum_{k\in [n]\backslash \{i\}} \eta\left(\frac{|x_k-x_j|}{\eps}\right) (f(x_j)-f(x_k)).
    \end{equation}
    Note that $\Delta_{n-1}^{(i)}$ leaves out the terms coming from $x_i$ and $x_i'$ in the expressions for $\Delta_n^{\X_n}$ and $\Delta_n^{\X_n^{(i)}}$, respectively. More precisely, for any fixed $i$ and any $j\not= i $, we can write
    \begin{equation}\label{eq:Expansion}
        \begin{split}
            \Delta_n^{\X_n} f(x_j) = \Delta_{n-1}^{(i)} f(x_j) + \frac{1}{n\eps^{m+2}}\eta\left(\frac{|x_i-x_j|}{\eps}\right) (f(x_j)-f(x_i))  ,\\
            \Delta_n^{\X_n^{(i)}} f(x_j) = \Delta_{n-1}^{(i)} f(x_j) +\frac{1}{n\eps^{m+2}}\eta\left(\frac{|x_i'-x_j|}{\eps}\right) (f(x_j)-f(x_i')) .
        \end{split} 
    \end{equation}
   Using the definition for $\Delta_{n-1}^{(i)}$ and \eqref{eq:Expansion}, it follows
   \[ A_1\leq 2B_1 + 2B_2, \]
   where
    \begin{equation}\label{eq:B_1}
            B_1 := \frac{1}{n}\sum_{i=1}^n \E\Bigg[ \Bigg(\sum_{j:j\not= i} (\hat{u}^{\X_n^{(i)}}_l(x_j)-\hat{u}^{\X_n}_l(x_j))(\Delta^{(i)}_{n-1} u_l(x_j)-\E[\hat{\lambda}_l^\veps]u_l(x_j))\Bigg)^2\Bigg],
    \end{equation}
    and
    \begin{multline}\label{eq:B_2}
        B_2 := \frac{1}{n}\sum_{i=1}^n \E\Biggl[ \Biggr(\sum_{j:j\not= i}(u_l(x_j)-\hat{u}^{\X_n}_l(x_j))\frac{1}{n\eps^{m+2}}\eta\left(\frac{|x_i-x_j|}{\eps}\right) (u_l(x_i)-u_l(x_j))\\
        - \sum_{j:j\not= i}(u_l(x_j)-\hat{u}^{\X^{(i)}_n}_l(x_j))\frac{1}{n\eps^{m+2}}\eta\left(\frac{|x_i'-x_j|}{\eps}\right) (u_l(x_i')-u_l(x_j))\Biggr)^2\Biggr].
    \end{multline}
    We will show that 
    \begin{equation}\label{eq:claim B_1}
        B_1\leq \frac{C_l}{n\eps^{m+2}}a_n+c_n,
    \end{equation}
    and 
    \begin{equation}\label{eq:claim B_2}
        B_2\to 0.
    \end{equation}
   The verification of \eqref{eq:claim B_1} and \eqref{eq:claim B_2} will be the focus of the analysis in the remainder of this section, as they would immediately imply the desired bound for $A_1$.

\subsubsection{Proof of \eqref{eq:claim B_2}}
For $B_2$, observe that
    \begin{equation}
        \begin{split}\label{eq:B_3 upper bound}
            B_2 &\leq 4\E\Biggl[ \Biggr(\sum_{j:j\not= i}(u_l(x_j)-\hat{u}^{\X_n}_l(x_j))\frac{1}{n\eps^{m+2}}\eta\left(\frac{|x_i-x_j|}{\eps}\right) (u_l(x_i)-u_l(x_j))\Biggr)^2\Biggr] \\
        &\leq 8\E\Biggl[ \Biggr(\sum_{j:j\not= i}\bigl(u_l(x_j)-\hat{u}^{\X_n}_l(x_j)-(u_l(x_i)-\hat{u}^{\X_n}_l(x_i))\bigr)\frac{1}{n\eps^{m+2}}\eta\left(\frac{|x_i-x_j|}{\eps}\right) (u_l(x_i)-u_l(x_j))\Biggr)^2\Biggr]\\
        &\qquad + 8\E\Biggl[ \Biggr(\sum_{j:j\not= i}\bigl(u_l(x_i)-\hat{u}^{\X_n}_l(x_i)\bigr)\frac{1}{n\eps^{m+2}}\eta\left(\frac{|x_i-x_j|}{\eps}\right) (u_l(x_i)-u_l(x_j))\Biggr)^2\Biggr] \\
        &= 8\E\Biggl[ \Biggr(\sum_{j:j\not= i}\bigl(u_l(x_j)-\hat{u}^{\X_n}_l(x_j)-(u_l(x_i)-\hat{u}^{\X_n}_l(x_i))\bigr)\frac{1}{n\eps^{m+2}}\eta\left(\frac{|x_i-x_j|}{\eps}\right) (u_l(x_i)-u_l(x_j))\Biggr)^2\Biggr]\\
        &\qquad + 8\E\Biggl[ (u_l(x_i) - \hat{u}_l^{\X_n}(x_i) )^2 (\Delta_n^{\X_n} u_l(x_i))^2 \Biggr].
        \end{split}
    \end{equation}    
    
    For the second term on the right-hand side of \eqref{eq:B_3 upper bound}, using the fact that the $x_i$ are i.i.d., we have
    \begin{align*}
        & \E\Biggl[ (u_l(x_i) - \hat{u}_l^{\X_n}(x_i) )^2 (\Delta_n^{\X_n} u_l(x_i))^2 \Biggr] =     \frac{1}{n} \sum_{i=1}^n\E\Biggl[ (u_l(x_i) - \hat{u}_l^{\X_n}(x_i) )^2 (\Delta_n^{\X_n} u_l(x_i))^2 \Biggr]
\\ &   \quad \leq  \E\Biggl[ \lVert u_l - \hat{u}_l^{\X_n}  \rVert^2_{L^2(\X_n)} \lVert \Delta_n^{\X_n} u_l  \rVert_{L^\infty(\X_n)}^2  \Biggr]
\\&  \quad \leq  \E\Biggl[ \lVert u_l - \hat{u}_l^{\X_n}  \rVert^2_{L^2(\X_n)} \lVert \Delta_n^{\X_n} u_l  \rVert_{L^\infty(\X_n)}^2  \one_{\{ \lVert u_l - \hat{u}_l^{\X_n} \rVert_{L^2(\X_n)} \leq C_l t  \text{ and } \lVert \Delta_n^{\X_n} u_l \rVert_{L^\infty(\X_n)} \leq C_l \} } \Biggr]
\\&  \qquad +   \E\Biggl[ \lVert u_l - \hat{u}_l^{\X_n}  \rVert^2_{L^2(\X_n)} \lVert \Delta_n^{\X_n} u_l  \rVert_{L^\infty(\X_n)}^2  \one_{\{ \lVert u_l - \hat{u}_l^{\X_n} \rVert_{L^2(\X_n)} > C_l t  \text{ or } \lVert \Delta_n^{\X_n} u_l \rVert_{L^\infty(\X_n)} > C_l \} } \Biggr]
\\&  \quad \leq  C_lt^2 + C_ln\eps^{-(m+1)}(\exp(-c n\eps^{m}t^2) + \exp(-c n\veps^{m+2})),
\end{align*}
where the last inequality follows from \cref{lem:spectral convergence in literature}. When we choose $t=\frac{1}{d_n}$ for a suitable $d_n\to \infty$, and use the fact that $1\gg \eps\gg \frac{(\log n)^2}{n^{\frac{1}{m+2}}}$, the above upper bound can be seen to converge to $0$ as $n \rightarrow \infty$.

For the first term on the right-hand side of \eqref{eq:B_3 upper bound}, we use Jensen's inequality, the fact that $u_l$ is Lipschitz (in particular, $\sup_{x,y:|x-y|\leq \eps} |u_l(x)-u_l(y)|\leq C_l\eps$), the fact that $\eta$ is bounded, and the fact that the $x_i$ are i.i.d., to deduce
    \begin{equation*}
        \begin{split}
            &\E\Biggl[ \Biggr(\sum_{j:j\not= i}\bigl(u_l(x_j)-\hat{u}^{\X_n}_l(x_j)-(u_l(x_i)-\hat{u}^{\X_n}_l(x_i))\bigr)\frac{1}{n\eps^{m+2}}\eta\left(\frac{|x_i-x_j|}{\eps}\right) (u_l(x_i)-u_l(x_j))\Biggr)^2\Biggr]
            \\ &\quad\leq C_l\E\Biggl[ \left(\frac{N_i}{n \veps^m }\right) \frac{1}{n\veps^{m+2}}\sum_{j:j\not= i}\eta\left(\frac{|x_i-x_j|}{\eps}\right) \biggl(u_l(x_j)-\hat{u}^{\X_n}_l(x_j)-(u_l(x_i)-\hat{u}^{\X_n}_l(x_i))\biggr)^2\Biggr]
            \\ &\quad \leq C_l\E\Biggl[ \left(\frac{N}{n \veps^m }\right) \frac{1}{n^2\veps^{m+2}} \sum_{i=1}^n\sum_{j:j\not= i}\eta\left(\frac{|x_i-x_j|}{\eps}\right) \biggl(u_l(x_j)-\hat{u}^{\X_n}_l(x_j)-(u_l(x_i)-\hat{u}^{\X_n}_l(x_i))\biggr)^2\Biggr]
            \\&\quad =C_l \E\left[ \left(\frac{N}{n \veps^m }\right) \|u_l-\hat{u}_l^{\X_n}\|^2_{H^1(\X_n)}\right],
        \end{split}
    \end{equation*}
where $N_i := \# \{ j \text{ s.t. } |x_i- x_j| \leq \veps \}$ and $N := \max_{i=1, \dots, n} N_i$. In turn, the latter expression can be bounded by
\begin{align*}
          C_l\E\left[ \left(\frac{N}{n \veps^m }\right) \|u_l-\hat{u}_l^{\X_n}\|^2_{H^1(\X_n)}\right]  & =C_l \E\left[\|u_l-\hat{u}_l^{\X_n}\|^2_{H^1(\X_n)}\mathbbm{1}_{ \{ \|u_l-\hat{u}_l^{\X_n}\|_{H^1(\X_n)}\leq C\eps^2+t \text{ and } N\leq C n\veps^m \}}\right]\\
            &\quad +C_l \E\left[\|u_l-\hat{u}_l^{\X_n}\|^2_{H^1(\X_n)}\mathbbm{1}_{\{\|u_l-\hat{u}_l^{\X_n}\|^2_{H^1(\X_n)}> C\eps^2+t \text{ or } N> C n\veps^m \}}\right]\\
            & \leq C_l(t+\eps^2)^2+C_ln\eps^{-3m-2} \exp(-cn\eps^{m}t^2) , 
\end{align*}
 where the last inequality follows from \cref{lem:spectral convergence in literature}, a standard concentration argument to control $N$ by a constant times $n \veps^m$, and the deterministic estimate
    $$\|u_l-\hat{u}_l\|_{H^1(\X_n)}^2\leq \frac{C_l}{n^2\eps^{m+2}}\sum_{i,j} (|u_l(x_i)|+|\hat{u}_l(x_i)|+|u_l(x_j)|+|\hat{u}_l(x_j)|)^2\leq \frac{C_l}{\eps^{m+2}}.$$
    Choosing $t=\frac{1}{d_n}$ for proper $d_n\to \infty$ and $1\gg \eps\gg \frac{(\log n)^2}{n^\frac{1}{m}}$, we conclude that $C_l\E\left[ \left(\frac{N}{n \veps^m }\right) \|u_l-\hat{u}_l^{\X_n}\|^2_{H^1(\X_n)}\right]  \rightarrow 0$. Putting everything together, we deduce \eqref{eq:claim B_2}.

\subsubsection{Leave-one-out stability estimates of graph Laplacians}


In order to analyze $B_1$, we observe that, since the summands in the outer sum in the definition of $B_1$ are all equal, we can write
    \begin{equation}\label{eq:B_1 computation}
        B_1
         =\E\Biggl[ \Biggl| \sum_{j:j\not= i} \left(\hat{u}_l^{\X_n^{(i)}}(x_j)-\hat{u}^{\X_n}_l(x_j)\right)\Bigl(\Delta^{(i)}_{n-1}u_l(x_j)-\E[\hat{\lambda}_l^\veps] u_l(x_j)\Bigr)\Biggr|^2\Biggr],
    \end{equation}
    for any fixed $i$, which we can thus assume to be, without the loss of generality, $i=n$. In order to analyze $B_1$ in the above form,  a crucial step in our proof is the development of sufficiently sharp estimates for how the spectrum of a graph Laplacian changes when removing a single point from the data set. The estimates that we present in this subsection are of interest in their own right.

 To state our stability estimates precisely, we begin by introducing some notation. Notice that the operator $\Delta_{n-1}^{(i)}$ acts on functions whose support is $\X_{n-1}:=\X_n\backslash\{x_i\}$. We will denote by $L^2(\X_{n-1})$ this space of functions. Given $k=1, \dots, n-1$, we denote by $f_k$ the $k$-th eigenvector of $\Delta_{n-1}^{(i)}$ (normalized according to the $L^2(\X_{n-1})$ inner product), and denote its corresponding eigenvalue as $\overline{\lambda}_k.$



In our first stability result, we quantify how eigenvectors may change by the removal of a data point.
\begin{proposition}[Stability of eigenvectors]
\label{lem:EigenVectorStability}
Let $f_l$ be the $l$-th eigenvector of $\Delta_{n-1}^{(i)}$ and let 
\begin{equation}
     \hat{v}^{\X_n}_l :=  \frac{\hat{u}^{\X_n}_l}{\lVert  \hat{u}^{\X_n}_l \rVert_{L^2(\X_{n-1})}  } \quad  
    \label{eqn:DefHatV}
\end{equation}
be the normalized version of $\hat{u}^{\X_n}_l$  with respect to the $\lVert \cdot \rVert_{L^2(\X_{n-1})}$-norm. Then, with probability at least $1-C(\eps^{-6m}+n)\exp(-cn\eps^{m+4})$,
\begin{equation}\label{eq:angle}
    \lVert f_l - \hat{v}_l^{\X_n} \rVert_{L^2(\X_{n-1})} \leq \frac{C_l}{n\eps^{\frac{m+2}{2}}}. 
\end{equation}

\end{proposition}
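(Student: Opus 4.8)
The plan is to exhibit $\hat v_l^{\X_n}$ as an \emph{approximate} eigenvector of $\Delta_{n-1}^{(i)}$ and then invoke a quantitative Davis--Kahan type argument that uses the eigengap of $\Delta_{n-1}^{(i)}$ near $\lambda_l$. Starting from the eigenvalue equation $\Delta_n^{\X_n}\hat u_l^{\X_n} = \hat\lambda_l^\eps\hat u_l^{\X_n}$ and the decomposition \eqref{eq:Expansion} applied to $f = \hat u_l^{\X_n}$, one obtains, for every $j\neq i$,
\[
\Delta_{n-1}^{(i)}\hat v_l^{\X_n}(x_j) = \hat\lambda_l^\eps\,\hat v_l^{\X_n}(x_j) + \tilde r(x_j), \qquad \tilde r(x_j) := -\frac{1}{\lVert \hat u_l^{\X_n}\rVert_{L^2(\X_{n-1})}}\,\frac{1}{n\eps^{m+2}}\eta\Bigl(\tfrac{|x_i-x_j|}{\eps}\Bigr)\bigl(\hat u_l^{\X_n}(x_j) - \hat u_l^{\X_n}(x_i)\bigr),
\]
so that $\hat v_l^{\X_n}$, which has $\lVert \hat v_l^{\X_n}\rVert_{L^2(\X_{n-1})}=1$ by construction, solves the perturbed eigenproblem with residual $\tilde r$ supported on the $N_i := \#\{j : |x_i-x_j|\leq\eps\}$ indices with $x_j\in B_\eps(x_i)$.

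The key estimate is that $\lVert\tilde r\rVert_{L^2(\X_{n-1})}\leq C_l/(n\eps^{(m+2)/2})$ with the asserted probability. For the contributing indices, the Lipschitz regularity of $u_l$ together with the $L^\infty$ estimate \eqref{eq:L_infty control} (or the sharper almost-Lipschitz estimate \eqref{eq:almost Lipschitz}) from \cref{lem:spectral convergence in literature} gives $|\hat u_l^{\X_n}(x_j) - \hat u_l^{\X_n}(x_i)| \leq |u_l(x_j) - u_l(x_i)| + 2\lVert\hat u_l^{\X_n} - u_l\rVert_{L^\infty(\X_n)} \leq C_l\eps$; the same $L^\infty$ bound yields $|\hat u_l^{\X_n}(x_i)|\leq C_l$ and hence $\lVert\hat u_l^{\X_n}\rVert_{L^2(\X_{n-1})}^2 = \tfrac{n}{n-1}\bigl(1 - \tfrac1n\hat u_l^{\X_n}(x_i)^2\bigr)\in[\tfrac12,2]$ for $n$ large. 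Thus $|\tilde r(x_j)|\leq C_l/(n\eps^{m+1})$, and combining with the standard concentration bound $N_i\leq Cn\eps^m$ (which holds with probability at least $1-Cn\exp(-cn\eps^m)$),
\[
\lVert\tilde r\rVert_{L^2(\X_{n-1})}^2 = \frac{1}{n-1}\sum_{j\neq i}\tilde r(x_j)^2 \leq \frac{C\,N_i}{n}\cdot\frac{C_l^2}{n^2\eps^{2m+2}} \leq \frac{C_l^2}{n^2\eps^{m+2}}.
\]

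Next I would establish the eigengap for $\Delta_{n-1}^{(i)}$. Since $\Delta_{n-1}^{(i)}$ equals $\tfrac{n-1}{n}$ times the standard graph Laplacian built from the i.i.d.\ sample $\X_n\setminus\{x_i\}$, the eigenvalue convergence \eqref{eqn:ConRateEugenvalues} of \cref{lem:spectral convergence in literature} (applied with $n-1$ points), together with the same convergence for $\hat\lambda_l^\eps$ and \cref{assum:eigengap for l}, shows that with high probability $\overline\lambda_l$ is indeed the $l$-th eigenvalue, $|\hat\lambda_l^\eps - \overline\lambda_l|\leq C_l(\eps^2 + t) + C_l/n$, and $|\hat\lambda_l^\eps - \overline\lambda_k| \geq \gamma_l/2$ for all $k\neq l$, provided $\eps$ is small and $t$ is a small constant (for indices $k$ above a fixed finite range one only needs the one-sided bound $\overline\lambda_k \geq \lambda_k - C_l(\eps^2+t)$, which always holds by the min--max characterization). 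The perturbation step is then routine: expanding $\hat v_l^{\X_n} = \sum_k c_k f_k$ in the orthonormal eigenbasis $\{f_k\}$ of $\Delta_{n-1}^{(i)}$, the identity $\tilde r = \sum_k c_k(\overline\lambda_k - \hat\lambda_l^\eps)f_k$ gives $\sum_{k\neq l}c_k^2 \leq 4\lVert\tilde r\rVert_{L^2(\X_{n-1})}^2/\gamma_l^2$, whence, after replacing $f_l$ by $-f_l$ if necessary so that $c_l>0$,
\[
\lVert \hat v_l^{\X_n} - f_l\rVert_{L^2(\X_{n-1})}^2 = (1-c_l)^2 + \sum_{k\neq l}c_k^2 \leq \frac{C_l}{\gamma_l^2}\lVert\tilde r\rVert_{L^2(\X_{n-1})}^2 \leq \frac{C_l}{n^2\eps^{m+2}}.
\]
A union bound over the events used --- the eigenvalue convergence for $\hat\lambda_l^\eps$ and for $\Delta_{n-1}^{(i)}$, the estimate \eqref{eq:L_infty control}, and the concentration of $N_i$ --- whose smallest success probability is the one in \eqref{eq:L_infty control}, produces the stated bound $1-C(\eps^{-6m}+n)\exp(-cn\eps^{m+4})$.

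\textbf{Main obstacle.} The delicate point is the $1/(n\eps^{(m+2)/2})$ rate for $\lVert\tilde r\rVert_{L^2(\X_{n-1})}$: the naive estimate $|\hat u_l^{\X_n}(x_j)-\hat u_l^{\X_n}(x_i)|\leq 2\lVert\hat u_l^{\X_n}\rVert_{L^\infty(\X_n)}$ only yields the strictly worse rate $1/(n\eps^{(m+4)/2})$, so it is essential to exploit the (almost) Lipschitz regularity of $\hat u_l^{\X_n}$ --- i.e.\ its closeness in a Lipschitz sense to the smooth function $u_l$ --- together with the sharp count $N_i \leq Cn\eps^m$. The remaining steps are bookkeeping with \cref{lem:spectral convergence in literature} and a union bound.
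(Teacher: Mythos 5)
Your proposal is correct and follows essentially the same route as the paper's proof: you treat $\hat v_l^{\X_n}$ as an approximate eigenvector of $\Delta_{n-1}^{(i)}$, bound the one-point residual by $C_l/(n\eps^{(m+2)/2})$ using the closeness of $\hat u_l^{\X_n}$ to the regular $u_l$ together with the neighbor count $N_i\lesssim n\eps^m$, invoke the eigengap (from \cref{assum:eigengap for l} plus spectral convergence, applied to $\Delta_{n-1}^{(i)}$ after the $\tfrac{n-1}{n}$ rescaling) in a Davis--Kahan step, and convert the cross-term bound into the full $L^2(\X_{n-1})$ estimate via sign alignment. The only cosmetic differences are that you expand in the eigenbasis of $\Delta_{n-1}^{(i)}$ where the paper works with the projection $P_S^\perp$, and you rely solely on the $L^\infty$ estimate \eqref{eq:L_infty control} where the paper also uses the almost-Lipschitz bound \eqref{eq:almost Lipschitz} --- this makes no difference to the final rate, since the dominant contribution comes from the Lipschitz bound $|u_l(x_i)-u_l(x_j)|\le C_l\eps$ in either argument.
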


\begin{proof}

Let $S$ be the subspace of $L^2\left(\X_{n-1}\right)$ spanned by the eigenvector $f_l$ of $\Delta_{n-1}^{(i)}$. Let $P_S$ be the orthogonal projection onto $S$ and $P_S^{\perp}$ be the orthogonal projection onto the orthogonal complement of $S$ in $L^2(\X_{n-1})$. In what follows, whenever we consider the action of one of these operators on a function defined over $\X_n$, we will interpret it as the action of the operator on the restriction of the function to $\X_{n-1}$. 

From the spectral theorem and the fact that $\hat{v}_l^{\X_n}$ is an eigenvector of $\Delta_n^{\X_n}$, it follows
$$
P_S^{\perp} \Delta_{n}^{\X_n} \hat{v}_l^{\X_n}=\hat{\lambda}_l^{\X_n} P_S^{\perp} \hat{v}_l^{\X_n}= \hat{\lambda}_l^{\X_n} \sum_{j : j \neq l}\langle \hat{v}_l^{\X_n},f_j\rangle_{L^2\left(\X_{n-1}\right)} f_j.
$$
Likewise, we can write $\Delta_{n-1}^{(i)} \hat{v}_l^{\X_n}$ as
\[ P_S^{\perp} \Delta_{n-1}^{(i)} \hat{v}_l^{\X_n} = \sum_{j:j \neq l} \overline{\lambda}_j\langle \hat{v}_l^{\X_n}, f_j\rangle_{L^2\left(\X_{n-1}\right)} f_j.\]
Subtracting these two expressions, we deduce
\begin{align}
\begin{split}
    \min_{j:j\neq l} \left\{\left|\overline{\lambda}_j-\hat{\lambda}_l^{\X_n}\right|\right\}\left\|P_S^{\perp} \hat{v}_l^{\X_n} \right\|_{L^2\left(\X_{n-1}\right)} 
    & \leq\left\|P_S^{\perp}\left(\Delta_n^{\X_n} \hat{v}_l^{\X_n} -\Delta_{n-1}^{(i)} \hat{v}_l^{\X_n}\right)\right\|_{L^2\left(\X_{n-1}\right)}
    \\ & \leq \left\|\Delta_n^{\X_n} \hat{v}_l^{\X_n} -\Delta_{n-1}^{(i)} \hat{v}_l^{\X_n}\right\|_{L^2\left(\X_{n-1}\right)}.
\end{split}
\label{eq:projection inequality}
\end{align}
To bound the right-hand side, note that, thanks to \eqref{eq:Expansion}, 
\begin{align}\label{eq:L2 norm of different Delta}
    \begin{split}
       & \left\|  \Delta_n^{\X_n} \hat{v}_l^{\X_n} -\Delta_{n-1}^{(i)} \hat{v}_l^{\X_n}\right\|^2_{L^2\left(\X_{n-1}\right)}
        \\& \qquad  \leq \frac{1}{n-1 }\sum_{j=1}^{n-1}\left(\frac{1}{n\eps^{m+2}} \eta\left(\frac{|x_j-x_i|}{\eps}\right) \left(\hat{v}_l^{\X_n}(x_i)-\hat{v}_l^{\X_n}(x_j)\right)\right)^2
        \\&\qquad  \leq \frac{4}{n^3\eps^{2m+4}}\sum_{j=1}^{n-1} \eta\left(\frac{|x_j-x_i|}{\eps}\right)^2\left(\hat{v}_l^{\X_n}(x_i)-\hat{v}_l^{\X_n}(x_j)-(u_l(x_i)-u_l(x_j))\right)^2
        \\&\qquad \quad + \frac{4}{n^3\eps^{2m+4}}\sum_{j=1}^{n-1} \eta\left(\frac{|x_j-x_i|}{\eps}\right)^2\left(u_l(x_i)-u_l(x_j)\right)^2.
    \end{split}
\end{align}
Using the fact that $u_l$ is Lipschitz, it is straightforward to deduce that, with probability at least $1-C\eps^{-m}\exp(-cn\eps^{m}),$ the second term on the right hand side of the above expression is bounded by
\begin{equation*}
    \frac{4}{n^3\eps^{2m+4}}\sum_{j} \eta\left(\frac{|x_j-x_i|}{\eps}\right)^2(u_l(x_i)-u_l(x_j))^2\leq \frac{C_l}{n^2\eps^{m+2}}.
\end{equation*}

On the other hand, using \cite[Theorem 2.6]{calder2022lipschitz}, with probability at least $1-C(\eps^{-6m}+n)\exp(-cn\eps^{m+4})$ we have $\|\hat{u}_l^{\X_n}\|_{L^\infty(\X_n)}\leq C_l$, and, in particular, also
\begin{equation}\label{eq:control on  to n-1}
    1-\frac{C_l}{n}\leq \|\hat{u}_l^{\X_n}\|_{L^2(\X_{n-1})}\leq \frac{n}{n-1}.
\end{equation}
In the event where $\|\hat{u}_l^{\X_n}\|_{L^\infty(\X_n)}\leq C_l$ we thus have
\begin{equation}
\left| \hat{v}_{l}^{\X_n}(x_i) - \hat{v}_{l}^{\X_n}(x_j)  - \left(\hat{u}_{l}^{\X_n}(x_i) - \hat{u}_{l}^{\X_n}(x_j)\right)    \right| \leq \frac{C_l}{n}
\label{eq:almost lischitzHatv}
\end{equation}
for all $j$. In addition, using \cref{lem:spectral convergence in literature}, with probability at least $1-C(\eps^{-6m}+n)\exp(-cn\eps^{m+4}),$ we have 
\begin{equation}\label{eq:almost lischitz}
    \left|\hat{u}_l^{\X_n}(x_i)-\hat{u}_l^{\X_n}(x_j)-(u_l(x_i)-u_l(x_j))\right|\leq C_l \eps^2,
\end{equation}
for all $|x_i-x_j|\leq \eps$. The bottom line is that, with probability at least $1-C(\eps^{-6m}+n)\exp(-cn\eps^{m+4}),$ 
\begin{multline*}
    \frac{4}{n^3\eps^{2m+4}}\sum_{j} \eta\left(\frac{|x_j-x_i|}{\eps}\right)^2\left(\hat{u}_l^{\X_n}(x_i)-\hat{u}_l^{\X_n}(x_j)-(u_l(x_i)-u_l(x_j))\right)^2
    \leq \frac{C_l}{n^2\eps^{m}}.
\end{multline*}

Combining the above, we deduce that, with probability at least $1-C(\eps^{-6m}+n)\exp(-cn\eps^{m+4})$,
\begin{equation*}
    \left\|\Delta_n^{\X_n}\hat{v}_l^{\X_n}-\Delta_{n-1}^{(i)} \hat{v}_l^{\X_n}\right\|_{L^2\left(\X_{n-1}\right)}\leq \frac{C_l}{n\eps^{\frac{m+2}{2}}}.
\end{equation*}
\cref{assum:eigengap for l} and \cref{lem:spectral convergence in literature} imply that, with probability at least $1-C\eps^{-m}\exp(-c_l n\eps^{m})$, 
\begin{equation}\label{eq:eigengap assumption}
    \min_{j\neq l} \left\{\left|\overline{\lambda}_j-\hat{\lambda}_l^{\X_n}\right|\right\}\ge \frac{\gamma_l}{2} >0.
\end{equation}
From the above, \eqref{eq:projection inequality}, \eqref{eq:L2 norm of different Delta}, and \eqref{eq:eigengap assumption} we deduce
\begin{equation}\label{eq:Davis Kahan projection}
    \|P_S^\perp \hat{v}_l^{\X_n}\|_{L^2(\X_{n-1})}\leq \frac{C_l}{n\eps^{\frac{m+2}{2}}},
\end{equation}
 with probability at least $1-C(\eps^{-6m}+n)\exp(-cn\eps^{m+4})$.

Next, we use \eqref{eq:Davis Kahan projection} to control $\|f_l-\hat{v}_l^{\X_n}\|_{L^2(\X_{n-1})}^2$. First, without the loss of generality we may assume that the sign of $f_l$ has been chosen so that $f_l$ is aligned with $P_S\hat{v}_l^{\X_n}$. Precisely, we may assume that
\begin{equation*}
    \langle f_l, P_S\hat{v}_l^{\X_n}\rangle_{L^2(\X_{n-1})} = \| f_l\|_{L^2(\X_{n-1})} \| P_S \hat{v}_l^{\X_n}\|_{L^2(\X_{n-1})} = \| P_S \hat{v}_l^{\X_n}\|_{L^2(\X_{n-1})} .
\end{equation*}
A direct computation then reveals that, with probability at least $1-C(\eps^{-6m}+n)\exp(-cn\eps^{m+4})$,
\begin{align}
\begin{split}
    \|f_l-\hat{v}_l^{\X_n}\|_{L^2(\X_{n-1})}^2 &= \|f_l-P_S\hat{v}_l^{\X_n}\|_{L^2(\X_{n-1})}^2 + \|P_S^\perp\hat{v}_l^{\X_n}\|_{L^2(\X_{n-1})}^2
    \\ & = \|f_l\|_{L^2(\X_{n-1})}^2 + \|P_S\hat{v}_l^{\X_n}\|_{L^2(\X_{n-1})}^2 - 2 \| f_l\|_{L^2(\X_{n-1})} \| P_S \hat{v}_l^{\X_n}\|_{L^2(\X_{n-1})} \\ & \quad +\|P_S^\perp\hat{v}_l^{\X_n}\|_{L^2(\X_{n-1})}^2
    \\ & =1+\|P_S\hat{v}_l^{\X_n}\|_{L^2(\X_{n-1})}^2- 2\|P_S\hat{v}_l^{\X_n}\|_{L^2(\X_{n-1})} +\|P_S^\perp\hat{v}_l^{\X_n}\|_{L^2(\X_{n-1})}^2
    \\ &  = 1+ \lVert \hat{v}_l^{\X_n}  \rVert^2_{L^2(\X_{n-1})}- 2\sqrt{  \lVert \hat{v}_l^{\X_n}  \rVert^2_{L^2(\X_{n-1})}- \|P_S^\perp\hat{v}_l^{\X_n}\|_{L^2(\X_{n-1})}^2 } 
    \\ & \leq \frac{C_l}{n^2\eps^{m+2}},
    \end{split}
    \label{eqn:EignevcHighProbStability}
\end{align}
where the last inequality follows from \eqref{eq:Davis Kahan projection} and the elementary inequality $ (1-t)  \leq \sqrt{1- t} $ that holds for all $t \in [0,1]$.

\end{proof}

In our second stability result, we quantify how the eigenvalues of the graph Laplacian may change by the removal of a data point. As we discuss in \cref{rem:AboutStabiltyEigenvalues}, the bound that we obtain below is much sharper than the bound that could be obtained from more standard estimates.

\begin{proposition}[Eigenvalue stability]
\label{lem:EigenvalueStability}
Recall that $\overline{\lambda}_l$ is the $l$-th eigenvalue of $\Delta_{n-1}^{(i)}$. Then, with probability at least $1-Cn(\eps^{-6m}+n)\exp(-n\eps^{m+4})$,
\begin{multline}\label{eq:eigenvalue gap}
  \left|   (n-1) (\overline{\lambda}_l - \hat{\lambda}^{\X_n}_l)  - \hat{\lambda}^{\X_n}_l u_l^2(x_i)  + \frac{1}{n\eps^{m+2}}\sum_{k:k\not= i}\eta\left(\frac{|x_i-x_k|}{\eps}\right)  (u_l(x_i)-u_l(x_k))^2 \right| \\
    \leq C_l \left(\eps+\frac{1}{n\eps^{\frac{m+2}{2}}}\right).
\end{multline}
\end{proposition}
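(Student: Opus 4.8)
The plan is to treat $\hat{u}_l^{\X_n}$, restricted to $\X_{n-1}$, as an \emph{approximate} eigenvector of $\Delta_{n-1}^{(i)}$, derive an exact perturbation identity, extract its leading term, and bound the remainders using the spectral-convergence estimates and the leave-one-out eigenvector bound. Throughout I would work on the intersection of the high-probability events furnished by \cref{lem:spectral convergence in literature} (the estimates \eqref{eqn:ConRateEugenvalues}, \eqref{eq:L_infty control}, \eqref{eq:almost Lipschitz} and the eigengap \eqref{eq:eigengap assumption}), by \cref{lem:EigenVectorStability}, and by the $L^\infty$/almost-Lipschitz bounds of \cite{calder2022lipschitz} for $\hat{u}_l^{\X_n}$ (applied both to $\X_n$ and to the i.i.d.\ sample $\X_{n-1}$), together with a standard concentration bound $N_i := \#\{j : |x_i - x_j| \le \eps\} \le C n\eps^m$; this intersection has the probability claimed in the proposition. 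On it, \eqref{eq:Expansion} gives the exact identity $\Delta_{n-1}^{(i)}\hat{u}_l^{\X_n} = \hat{\lambda}_l^{\X_n}\hat{u}_l^{\X_n} - g$ on $\X_{n-1}$, with residual
\[
g(x_j) := \frac{1}{n\eps^{m+2}}\,\eta\!\left(\frac{|x_i - x_j|}{\eps}\right)\bigl(\hat{u}_l^{\X_n}(x_j) - \hat{u}_l^{\X_n}(x_i)\bigr),
\]
supported on the $\eps$-ball around $x_i$. Pairing with the $l$-th eigenvector $f_l$ of $\Delta_{n-1}^{(i)}$ and using self-adjointness of $\Delta_{n-1}^{(i)}$ in $L^2(\X_{n-1})$ yields the exact relation
\[
(\overline{\lambda}_l - \hat{\lambda}_l^{\X_n})\,\langle \hat{u}_l^{\X_n}, f_l\rangle_{L^2(\X_{n-1})} = -\,\langle g, f_l\rangle_{L^2(\X_{n-1})}.
\]
(Applying \eqref{eqn:ConRateEugenvalues} to $\X_{n-1}$, and recalling that $\Delta_{n-1}^{(i)}$ is $\tfrac{n-1}{n}$ times the standard graph Laplacian on $\X_{n-1}$, gives $\overline{\lambda}_l \le C_l$, so $\overline{\lambda}_l - \hat{\lambda}_l^{\X_n} = O(1/n)$ and $(n-1)(\overline{\lambda}_l - \hat{\lambda}_l^{\X_n})$ is $O(1)$.)

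To extract the leading term, set $\hat{v}_l^{\X_n} = \hat{u}_l^{\X_n}/\|\hat{u}_l^{\X_n}\|_{L^2(\X_{n-1})}$ as in \eqref{eqn:DefHatV}; since $f_l$ and $\hat{v}_l^{\X_n}$ are aligned unit vectors, $\langle \hat{u}_l^{\X_n}, f_l\rangle_{L^2(\X_{n-1})} = \|\hat{u}_l^{\X_n}\|_{L^2(\X_{n-1})}\, r$ with $r := 1 - \tfrac{1}{2}\|f_l - \hat{v}_l^{\X_n}\|_{L^2(\X_{n-1})}^2$, and $\langle g, f_l\rangle_{L^2(\X_{n-1})} = \langle g, \hat{v}_l^{\X_n}\rangle_{L^2(\X_{n-1})} + \langle g, f_l - \hat{v}_l^{\X_n}\rangle_{L^2(\X_{n-1})}$. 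The main piece is algebraic: substituting $\hat{u}_l^{\X_n}(x_j) = \hat{u}_l^{\X_n}(x_i) + (\hat{u}_l^{\X_n}(x_j) - \hat{u}_l^{\X_n}(x_i))$ into $\langle g, \hat{u}_l^{\X_n}\rangle_{L^2(\X_{n-1})}$ and using $\Delta_n^{\X_n}\hat{u}_l^{\X_n}(x_i) = \hat{\lambda}_l^{\X_n}\hat{u}_l^{\X_n}(x_i)$ gives
\[
(n-1)\,\langle g, \hat{u}_l^{\X_n}\rangle_{L^2(\X_{n-1})} = -\,\hat{\lambda}_l^{\X_n}\,\hat{u}_l^{\X_n}(x_i)^2 + \widehat{D}_i,
\]
where $\widehat{D}_i := \frac{1}{n\eps^{m+2}}\sum_{k\neq i}\eta\!\left(\frac{|x_i - x_k|}{\eps}\right)(\hat{u}_l^{\X_n}(x_i) - \hat{u}_l^{\X_n}(x_k))^2$ --- precisely the two leading terms in \eqref{eq:eigenvalue gap}, but with $\hat{u}_l^{\X_n}$ in place of $u_l$. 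Combining the three displays, and abbreviating $c := \|\hat{u}_l^{\X_n}\|_{L^2(\X_{n-1})}$,
\[
(n-1)(\overline{\lambda}_l - \hat{\lambda}_l^{\X_n}) = \frac{\hat{\lambda}_l^{\X_n}\hat{u}_l^{\X_n}(x_i)^2 - \widehat{D}_i}{c^2 r} - \frac{(n-1)\,\langle g, f_l - \hat{v}_l^{\X_n}\rangle_{L^2(\X_{n-1})}}{c\, r}.
\]

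It then remains to bound the error terms. First, $c^2 = \tfrac{n}{n-1}(1 - \tfrac{1}{n}\hat{u}_l^{\X_n}(x_i)^2) = 1 + O(1/n)$ on the event $\|\hat{u}_l^{\X_n}\|_{L^\infty(\X_n)} \le C_l$, and $r = 1 + O(\|f_l - \hat{v}_l^{\X_n}\|_{L^2(\X_{n-1})}^2)$; since $\hat{\lambda}_l^{\X_n}\hat{u}_l^{\X_n}(x_i)^2 - \widehat{D}_i = O(1)$ (here $\widehat{D}_i \le C_l$ via $N_i \le Cn\eps^m$ and $|\hat{u}_l^{\X_n}(x_i) - \hat{u}_l^{\X_n}(x_j)| \le C_l\eps$ for $|x_i-x_j|\le\eps$, the latter from the Lipschitzness of $u_l$ and \eqref{eq:almost Lipschitz}), the first fraction equals $\hat{\lambda}_l^{\X_n}\hat{u}_l^{\X_n}(x_i)^2 - \widehat{D}_i + O(1/n)$. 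The cross term I would control by Cauchy--Schwarz, using $\|g\|_{L^2(\X_{n-1})} \le C_l/(n\eps^{(m+2)/2})$ (again from $N_i \le Cn\eps^m$ and the same pointwise bound on $\hat{u}_l^{\X_n}$) and $\|f_l - \hat{v}_l^{\X_n}\|_{L^2(\X_{n-1})} \le C_l/(n\eps^{(m+2)/2})$ from \cref{lem:EigenVectorStability}. Finally, replacing $\hat{u}_l^{\X_n}$ by $u_l$ in the two leading terms costs $\le C_l\eps$ each: $\hat{\lambda}_l^{\X_n}|\hat{u}_l^{\X_n}(x_i)^2 - u_l(x_i)^2| \le C_l\eps$ by \eqref{eq:L_infty control} and $\hat{\lambda}_l^{\X_n}\le C_l$, and $\bigl|\widehat{D}_i - \frac{1}{n\eps^{m+2}}\sum_{k\neq i}\eta(|x_i-x_k|/\eps)(u_l(x_i) - u_l(x_k))^2\bigr| \le C_l\eps$ by factoring each difference of squares and applying \eqref{eq:almost Lipschitz}, the Lipschitzness of $u_l$, and $N_i \le Cn\eps^m$. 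Assembling these estimates gives \eqref{eq:eigenvalue gap}.

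The delicate step is the cross-term estimate. The plain Cauchy--Schwarz bound gives $(n-1)|\langle g, f_l - \hat{v}_l^{\X_n}\rangle_{L^2(\X_{n-1})}| \le (n-1)\|g\|_{L^2(\X_{n-1})}\|f_l - \hat{v}_l^{\X_n}\|_{L^2(\X_{n-1})} \le C_l/(n\eps^{m+2})$, which in the $\eps$-range where this proposition is used (there $n\eps^{m+4}\to\infty$) is already negligible relative to $\eps$, but is coarser than the $1/(n\eps^{(m+2)/2})$ recorded in the statement; to obtain the sharper form one should exploit that $g$ is supported on $B_\eps(x_i)$ and pair it against $f_l - \hat{v}_l^{\X_n}$ through only its \emph{local} $L^2$-mass near $x_i$, which needs a pointwise/almost-Lipschitz refinement of the eigenvector-stability bound in \cref{lem:EigenVectorStability} rather than the global $L^2$ bound. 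Making this localized estimate precise and matching it with the support of $g$ is the main obstacle; the remaining steps are routine, modulo the customary care that the \cite{calder2022lipschitz}-type regularity bounds and the eigengap estimate \eqref{eq:eigengap assumption} are invoked at the probability level claimed in the proposition.
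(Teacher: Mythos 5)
Your overall skeleton is sound and is a genuine variant of the paper's argument: you treat the restriction of $\hat{u}_l^{\X_n}$ to $\X_{n-1}$ as an approximate eigenvector of $\Delta_{n-1}^{(i)}$ and pair the residual $g$ with $f_l$, whereas the paper pairs $f_l$ with $(\Delta_n^{\X_n}-\Delta_{n-1}^{(i)})\hat{v}_l^{\X_n}$ and normalizes by $\langle f_l,\hat{v}_l^{\X_n}\rangle_{L^2(\X_{n-1})}$. Your exact identity $(n-1)\langle g,\hat{u}_l^{\X_n}\rangle_{L^2(\X_{n-1})}=-\hat{\lambda}_l^{\X_n}\hat{u}_l^{\X_n}(x_i)^2+\widehat{D}_i$ is correct and is a clean way to produce the two leading terms, and your replacement of $\hat{u}_l^{\X_n}$ by $u_l$ at cost $C_l\eps$ via \eqref{eq:L_infty control} and \eqref{eq:almost Lipschitz}, together with the $c^2$ and $r$ corrections, matches the paper's error bookkeeping. (One aside: your parenthetical claim that $\overline{\lambda}_l\le C_l$ already gives $\overline{\lambda}_l-\hat{\lambda}_l^{\X_n}=O(1/n)$ is a non sequitur — that smallness is a consequence of the estimate being proved, not an a priori input — but nothing in your derivation actually uses it.)

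The genuine gap is exactly where you flag it: the cross term $(n-1)\langle g,f_l-\hat{v}_l^{\X_n}\rangle_{L^2(\X_{n-1})}$. Plain Cauchy--Schwarz with \cref{lem:EigenVectorStability} gives only $C_l/(n\eps^{m+2})$, which is weaker than the $C_l/(n\eps^{(m+2)/2})$ in \eqref{eq:eigenvalue gap}, and the pointwise $L^\infty$ bound $|f_l-\hat{v}_l^{\X_n}|\le C_l\eps$ only gives $O(1)$; your proposed fix (a localized refinement of the eigenvector stability near $x_i$) is not available in the paper and you do not supply it, so as written the proposal does not establish the stated inequality. The paper avoids this term altogether by a different split of the numerator in \eqref{eq:decomposition for Davis Kahan for eigenvalue}: it writes $f_l(x_k)=f_l(x_i)+(f_l(x_k)-f_l(x_i))$, uses $\Delta_n^{\X_n}\hat{v}_l^{\X_n}(x_i)=\hat{\lambda}_l^{\X_n}\hat{v}_l^{\X_n}(x_i)$ for the $f_l(x_i)$ part, and for the remaining difference-times-difference terms compares both $f_l(x_k)-f_l(x_i)$ and $\hat{v}_l^{\X_n}(x_i)-\hat{v}_l^{\X_n}(x_k)$ to the corresponding $u_l$ differences pointwise (almost-Lipschitz estimates, each comparison costing $O(\eps^2)$ per neighbor, hence $O(\eps)$ after summing over the $O(n\eps^m)$ neighbors); the $1/(n\eps^{(m+2)/2})$ term then enters only through the denominator $\langle f_l,\hat{v}_l^{\X_n}\rangle_{L^2(\X_{n-1})}=1+O(1/(n\eps^{(m+2)/2}))$, for which the global $L^2$ bound of \cref{lem:EigenVectorStability} suffices. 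Two mitigating remarks: in the regime where the stated probability $1-Cn(\eps^{-6m}+n)\exp(-cn\eps^{m+4})$ is non-vacuous one has $n\eps^{m+3}\to\infty$, so $1/(n\eps^{m+2})\le\eps$ and your cruder bound would in fact imply \eqref{eq:eigenvalue gap} there, and it would also suffice for the downstream use in the bound on $B_1$; but you neither make this observation nor carry out the localized estimate, so the key quantitative step remains open in your write-up.
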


\begin{proof}
Consider $\hat{v}_l^{\X_n}$ as in \eqref{eqn:DefHatV} and recall that
\begin{equation*}
    \begin{split}
        \Delta_{n-1}^{(i)} \hat{v}_l^{\X_n} = \sum_{j=1}^{n-1} \overline{\lambda}_j \langle f_j , \hat{v}^{\X_n}_l\rangle_{L^2(\X_{n-1})} f_j, \quad \Delta_{n}^{\X_n} \hat{v}_l^{\X_n} = \hat{\lambda}^{\X_n}_l \hat{v}_l^{\X_n}.
    \end{split}
\end{equation*}
Subtracting the above equations, we obtain
\begin{equation}\label{eq:Delta-Delta}
    (\Delta_n^{\X_n}-\Delta_{n-1}^{(i)})\hat{v}_l^{\X_n} = \sum_{j=1}^{n-1} (\hat{\lambda}^{\X_n}_l-\overline{\lambda}_j) \langle f_l ,\hat{v}^{\X_n}_j\rangle_{L^2(\X_{n-1})} f_j.
\end{equation}
Taking the $L^2(\X_{n-1})$-inner product against $f_l$ on both sides of the above equation and recalling \eqref{eq:Expansion}, we obtain
\begin{align*}
   (n-1) (\hat{\lambda}^{\X_n}_l-\overline{\lambda}_l) &= \frac{ (n-1) \langle f_l,(\Delta_n^{\X_n}-\Delta_{n-1}^{(i)})\hat{v}^{\X_n}_l\rangle_{L^2(\X_{n-1})}}{\langle f_l ,\hat{v}^{\X_n}_l\rangle_{L^2(\X_{n-1})}} 
    \\ & =-\frac{\frac{1}{n\eps^{m+2}}\sum_{k:k\not= i}\eta(\frac{|x_i-x_k|}{\eps}) f_l(x_k) (\hat{v}^{\X_n}_l(x_i)-\hat{v}^{\X_n}_l(x_k))}{\langle f_l ,\hat{v}^{\X_n}_l\rangle_{L^2(\X_{n-1})}}.
\end{align*}
    In what follows, we extend $f_l$ to the whole $\X_n$ by defining 
    \[ f_l(x_i) := \frac{\sum_{j:j\not =i}\mathbbm{1}_{|x_i-x_j|\leq \eps} f_l(x_j)}{\sum_{j:j\not= i} \mathbbm{1}_{|x_i-x_j|\leq \eps}}.\] 
    Note that the denominator does not vanish with very high probability. Adding and subtracting terms, we deduce
    \begin{align}\label{eq:decomposition for Davis Kahan for eigenvalue}
    \begin{split}
    & \frac{\frac{1}{n\eps^{m+2}}  \sum_{k:k\not= i}\eta(\frac{|x_i-x_k|}{\eps}) f_l(x_k) (\hat{v}^{\X_n}_l(x_i)-\hat{v}^{\X_n}_l(x_k))}{\langle f_l ,\hat{v}^{\X_n}_l\rangle_{L^2(\X_{n-1})}}
    \\& =\frac{\frac{1}{n\eps^{m+2}}\sum_{k:k\not= i}\eta(\frac{|x_i-x_k|}{\eps}) f_l(x_i) (\hat{v}^{\X_n}_l(x_i)-\hat{v}^{\X_n}_l(x_k))}{\langle f_l ,\hat{v}^{\X_n}_l\rangle_{L^2(\X_{n-1})}}
    \\ & \quad +\frac{\frac{1}{n\eps^{m+2}}\sum_{k:k\not= i}\eta(\frac{|x_i-x_k|}{\eps}) (u_l(x_k)-u_l(x_i)) (u_l(x_i)-u_l(x_k))}{\langle f_l ,\hat{v}^{\X_n}_l\rangle_{L^2(\X_{n-1})}}
    \\ & \quad +\frac{\frac{1}{n\eps^{m+2}}\sum_{k:k\not= i}\eta(\frac{|x_i-x_k|}{\eps}) (f_l(x_k)-f_l(x_i)-u_l(x_k)+u_l(x_i)) (u_l(x_i)-u_l(x_k))}{\langle f_l ,\hat{v}^{\X_n}_l\rangle_{L^2(\X_{n-1})}}
    \\ & \quad +\frac{\frac{1}{n\eps^{m+2}}\sum_{k:k\not= i}\eta(\frac{|x_i-x_k|}{\eps}) (f_l(x_k)-f_l(x_i)) (\hat{v}^{\X_n}_l(x_i)-\hat{v}^{\X_n}_l(x_k)-u_l(x_i)+u_l(x_k))}{\langle f_l ,\hat{v}^{\X_n}_l\rangle_{L^2(\X_{n-1})}},
    \end{split}
\end{align}
 and our goal is now to analyze the right-hand side of the above expression term by term.

For the first term in \eqref{eq:decomposition for Davis Kahan for eigenvalue}, using the definition of $\Delta_{n}^{\X_n}$ we have, with probability at least $1-Cn(\eps^{-6m}+n)\exp(-n\eps^{m+4})$,
\begin{multline*}
     \frac{\frac{1}{n\eps^{m+2}}\sum_{k:k\not= i}\eta(\frac{|x_i-x_k|}{\eps}) f_l(x_i) (\hat{v}^{\X_n}_l(x_i)-\hat{v}^{\X_n}_l(x_k))}{\langle f_l ,\hat{v}^{\X_n}_l\rangle_{L^2(\X_{n-1})}}= \frac{f_l(x_i)\hat{\lambda}^{\X_n}_l \hat{v}^{\X_n}_l(x_i)}{\langle f_l ,\hat{v}^{\X_n}_l\rangle_{L^2(\X_{n-1})}}\\
     =O(\eps) + \frac{u_l(x_i)\hat{\lambda}^{\X_n}_l u_l(x_i)}{1-\langle f_l ,f_l-\hat{v}^{\X_n}_l\rangle_{L^2(\X_{n-1})}}= O(\eps)+ \left(1+O\left(\frac{1}{n\eps^{\frac{m+2}{2}}}\right)\right)\hat{\lambda}^{\X_n}_l u_l^2(x_i),
\end{multline*}
where in the second equality we used the $L^\infty$ control between $f_l$ and $u_l$ in \cref{lem:spectral convergence in literature} and in the last equality we used \eqref{eqn:EignevcHighProbStability}.

For the second term in \eqref{eq:decomposition for Davis Kahan for eigenvalue}, using \eqref{eqn:EignevcHighProbStability} we obtain
\begin{align*}
    & \frac{\frac{1}{n\eps^{m+2}} \sum_{k:k\not= i}\eta(\frac{|x_i-x_k|}{\eps})  (u_l(x_k)-u_l(x_i)) (u_l(x_i)-u_l(x_k))}{\langle f_l ,\hat{v}^{\X_n}_l\rangle_{L^2(\X_{n-1})}}
    \\& \quad  =-\frac{\frac{1}{n\eps^{m+2}}\sum_{k:k\not= i}\eta(\frac{|x_i-x_k|}{\eps}) (u_l(x_i)-u_l(x_k))^2}{\langle f_l ,\hat{v}^{\X_n}_l\rangle_{L^2(\X_{n-1})}}
    \\ & \quad =-\frac{\frac{1}{n\eps^{m+2}}\sum_{k:k\not= i}\eta(\frac{|x_i-x_k|}{\eps}) (u_l(x_i)-u_l(x_k))^2}{1-\langle f_l ,f_l-\hat{v}^{\X_n}_l\rangle_{L^2(\X_{n-1})}}
    \\ & \quad = - \left(1+O\left(\frac{1}{n\eps^{\frac{m+2}{2}}}\right)\right) \frac{1}{n\eps^{m+2}}\sum_{k:k\not= i}\eta\left(\frac{|x_i-x_k|}{\eps}\right)  (u_l(x_i)-u_l(x_k))^2.
\end{align*}

For the third term in \eqref{eq:decomposition for Davis Kahan for eigenvalue}, note that, similarly to \eqref{eq:almost lischitz}, using \cref{lem:spectral convergence in literature} we have, with probability at least $1-C(\eps^{-6m}+n)\exp(-cn\eps^{m+4}),$
\begin{equation}\label{eq:almost lipschitz2}
    |f_l(x_i)-f_l(x_k)-(u_l(x_i)-u_l(x_k))|\leq C_l \eps^2,
\end{equation}
 for $|x_k- x_i| \leq \veps$. Thanks to this and the regularity of $u_l$, we deduce that, with probability at least $1-C(\eps^{-6m}+n)\exp(-cn\eps^{m+4}),$ 
\begin{equation*}
    \left|\frac{\frac{1}{n\eps^{m+2}}\sum_{k:k\not= i}\eta(\frac{|x_i-x_k|}{\eps}) (f_l(x_k)-f_l(x_i)-u_l(x_k)+u_l(x_i)) (u_l(x_i)-u_l(x_k))}{\langle f_l ,\hat{v}^{\X_n}_l\rangle_{L^2(\X_{n-1})}}\right|
    \leq C_l\eps.
\end{equation*}

Finally, for the fourth term in \eqref{eq:decomposition for Davis Kahan for eigenvalue}, we may use \eqref{eq:almost lischitzHatv} and \eqref{eq:almost lischitz} to conclude that 
\begin{equation*}
    \left|\frac{\frac{1}{n\eps^{m+2}}\sum_{k:k\not= i}\eta(\frac{|x_i-x_k|}{\eps}) (f_l(x_k)-f_l(x_i)) (\hat{v}^{\X_n}_l(x_i)-\hat{v}^{\X_n}_l(x_k)-u_l(x_i)+u_l(x_k))}{\langle f_l ,\hat{v}^{\X_n}_l\rangle_{L^2(\X_{n-1})}}\right| \leq C_l \eps.
\end{equation*}

Combining the above computations yields the desired bound.
\end{proof}

\begin{remark}
    Note that in \cref{lem:EigenVectorStability} and \cref{lem:EigenvalueStability}, we crucially use  \cref{assum:eigengap for l}. 
\end{remark}

\begin{remark}
\label{rem:AboutStabiltyEigenvalues}    \cref{lem:EigenVectorStability} and \cref{lem:EigenvalueStability} are the desired versions of the Davis-Kahan theorems that we use to quantify the differences between eigenvalues and eigenvectors of graph Laplacians. Note that our leave-one-out analysis is much sharper than the more standard Davis-Khan theorem presented in \cite{calder2019improved}, which compares eigenpairs of graph Laplacian and Laplace-Beltrami operators. Indeed, in \cite{calder2019improved} it is shown that $\|\hat{u}_l-u_l\|_{L^2(\X_n)}\leq C_l\eps$ (in \cite{trillos2024} this was upgraded to quadratic convergence in the regime $\left(\frac{(\log n)^2}{n}\right)^{\frac{1}{m+4}}\ll\eps\ll 1$; see \cref{lem:spectral convergence in literature}) while for our leave-one-out analysis we obtain upper bounds that are $o\left(\frac{1}{\sqrt{n}}\right)$. These estimates are crucial for our central limit theorems to hold.
\end{remark}

\nc

\subsubsection{Control of \eqref{eq:B_1 computation} and proof of \eqref{eq:claim B_1}}
    
    With our stability results from the previous subsection in hand, we are ready to analyze the term \eqref{eq:B_1 computation}. First, observe that 
  \begin{align*}
    \begin{split}
      B_1  & \leq 4 \E\Biggl[ \Biggl|\sum_{j:j\not= i}  (f_l(x_j)-\hat{u}^{\X_n}_l(x_j))\Bigl(\Delta_{n-1}^{(i)} u_l(x_j)-\E[\hat{\lambda}_l^{\veps}] u_l(x_j)\Bigr)\Biggr|^2\Biggr]
      \\ & = 4 \E\Biggl[ \Biggl|\sum_{j:j\not= i}  (f_l(x_j)-\hat{u}^{\X_n}_l(x_j))\Bigl(\Delta_{n-1}^{(i)} u_l(x_j)-\E[\hat{\lambda}_l^{\veps}] u_l(x_j)\Bigr)\Biggr|^2 \one_{G_n} \Biggr]
      \\ &  \quad + 4 \E\Biggl[ \Biggl|\sum_{j:j\not= i}  (f_l(x_j)-\hat{u}^{\X_n}_l(x_j))\Bigl(\Delta_{n-1}^{(i)} u_l(x_j)-\E[\hat{\lambda}_l^{\veps}] u_l(x_j)\Bigr)\Biggr|^2 \one_{G_n^c}\Biggr],
             \end{split}
    \end{align*} 
where $G_n $ is the event where \eqref{eq:angle}, \eqref{eq:eigenvalue gap}, \eqref{eq:almost Lipschitz}, \eqref{eq:L_infty control}, and \eqref{eq:PointwsieConsistency} (with $t=\veps^{3/2}$) hold. Note that, thanks to Propositions \ref{lem:EigenVectorStability} and \ref{lem:EigenvalueStability}, as well as Lemma \ref{lem:spectral convergence in literature}, we have
\[ \mathbb{P}( G_n^c) \leq C_ln(\eps^{-6m}+n)\exp(-cn\eps^{m+4})   .\]
Now, it is straightforward to derive the deterministic bound  
\[ \Biggl|\sum_{j:j\not= i}  (f_l(x_j)-\hat{u}^{\X_n}_l(x_j))\Bigl(\Delta_{n-1}^{(i)} u_l(x_j)-\E[\hat{\lambda}_l^{\veps}] u_l(x_j)\Bigr)\Biggr|^2\leq C_l \frac{n^3}{\veps^{2m+2}},   \]
which can be used to bound the second term on the right-hand side of \eqref{eq:claim 1} by $\frac{C_l n^3}{ \veps^{2m +2}} \mathbb{P}(G_n^c)$. Therefore,
\begin{align}
\label{eq:claim 1}
\begin{split}
B_1  & \leq  4 \E\Biggl[ \Biggl|\sum_{j:j\not= i}  (f_l(x_j)-\hat{u}^{\X_n}_l(x_j))\Bigl(\Delta_{n-1}^{(i)} u_l(x_j)-\E[\hat{\lambda}_l^{\veps}] u_l(x_j)\Bigr)\Biggr|^2 \one_{G_n} \Biggr] 
\\ & \quad + C_l \frac{n^4}{\veps^{2m +2}} (\eps^{-6m}+n)\exp(-n\eps^{m+4}).
\end{split}
\end{align}

In what follows we will control the term 
\[\Biggl|\sum_{j:j\not= i}  (f_l(x_j)-\hat{u}^{\X_n}_l(x_j))\Bigl(\Delta_{n-1}^{(i)} u_l(x_j)-\E[\hat{\lambda}_l^{\veps}] u_l(x_j)\Bigr)\Biggr|^2\] 
in the event $G_n$. First, observe that
\begin{align}
\label{eqn: Aux234}
\begin{split}
& \Biggl|\sum_{j:j\not= i}  (f_l(x_j)-\hat{u}^{\X_n}_l(x_j))\Bigl(\Delta_{n-1}^{(i)} u_l(x_j)-\E[\hat{\lambda}_l^{\veps}] u_l(x_j)\Bigr)\Biggr|^2
\\ & \quad \leq  2\Biggl|\sum_{j:j\not= i}  (f_l(x_j)-\hat{v}^{\X_n}_l(x_j))\Bigl(\Delta_{n-1}^{(i)} u_l(x_j)-\E[\hat{\lambda}_l^{\veps}] u_l(x_j)\Bigr)\Biggr|^2
\\ & \qquad  + 2\Biggl|\sum_{j:j\not= i}  (\hat{v}^{\X_n}_l(x_j)-\hat{u}^{\X_n}_l(x_j))\Bigl(\Delta_{n-1}^{(i)} u_l(x_j)-\E[\hat{\lambda}_l^{\veps}] u_l(x_j)\Bigr)\Biggr|^2,
\end{split}
\end{align}
where $\hat{v}_l^{\X_n}$ is the rescaled version of $\hat{u}_l^{\X_n}$ from \eqref{eqn:DefHatV}. To bound the second term, observe that, since we are in the event $G_n$, we have  $\lVert \hat{v}_n^{\X_n} -  \hat{u}_n^{\X_n}  \rVert_{L^\infty(\X_{n})} \leq \frac{C_l}{n}$ and thus 
\begin{align*}
\Biggl|\sum_{j:j\not= i}  (\hat{v}^{\X_n}_l(x_j)-\hat{u}^{\X_n}_l(x_j))\Bigl(\Delta_{n-1}^{(i)} u_l(x_j)-\E[\hat{\lambda}_l^{\veps}] u_l(x_j)\Bigr)\Biggr|^2 &  \leq  C_l  \lVert \Delta_{n-1}^{(i)} u_l  - \E[\hat{\lambda}_l^\veps] u_l \rVert_{L^2(\X_n)}^2   
\\ & \leq  C_l\veps^2 + c_n'',   
\end{align*}
for some $c_n'' \rightarrow 0$. The last inequality follows from \eqref{eq:PointwsieConsistency} and the fact that $|\lambda_l - \E[\lambda_l^\veps]| \rightarrow 0$.

Next, we seek to bound the first term on the right hand side of \eqref{eqn: Aux234}. We
use the fact that $f_l$ is an eigenvector of $\Delta_{n-1}^{(i)}$, that $\Delta_{n-1}^{(i)}$ is self-adjoint with respect to the $L^2(\X_{n-1})$ inner product, and  \eqref{eq:Expansion}, to deduce
    \begin{equation}\label{eq:Davis Kahan}
        \begin{split}
            &\sum_{j:j\not= i} (f_l(x_j)-\hat{v}^{\X_n}_l(x_j))\Bigl(\Delta^{(i)}_{n-1}u_l(x_j)-\E[\hat{\lambda}_l^\veps] u_l(x_j)\Bigr)\\
            =&(n-1) (\overline{\lambda}_l-\E[\hat{\lambda}_l^\veps])\langle f_l,u_l \rangle_{L^2(\X_{n-1})} + (n-1)\E[\hat{\lambda}_l^\veps] \langle\hat{v}^{\X_n}_l,u_l \rangle_{L^2(\X_{n-1})}  \\
            &- (n-1)\langle\hat{v}^{\X_n}_l,\Delta_{n-1}^{(i)} u_l \rangle_{L^2(\X_{n-1})} \\
            =&(n-1) (\overline{\lambda}_l-\E[\hat{\lambda}_l^\veps])\langle f_l,u_l \rangle_{L^2(\X_{n-1})} + (n-1)\E[\hat{\lambda}_l^\veps] \langle\hat{v}^{\X_n}_l,u_l \rangle_{L^2(\X_{n-1})}  \\
            &- (n-1)\langle \Delta_{n-1}^{(i)}\hat{v}^{\X_n}_l, u_l \rangle_{L^2(\X_{n-1})} \\
            =&(n-1) (\overline{\lambda}_l-\E[\hat{\lambda}_l^\veps])\langle f_l,u_l \rangle_{L^2(\X_{n-1})} + (n-1)\E[\hat{\lambda}_l^\veps] \langle\hat{v}^{\X_n}_l,u_l \rangle_{L^2(\X_{n-1})}  \\
            &- (n-1)\langle \Delta_{n}^{\X_n}\hat{v}^{\X_n}_l, u_l \rangle_{L^2(\X_{n-1})} \\
            &+\sum_{j:j\not= i} u_l(x_j) \frac{1}{n\eps^{m+2}} \eta\Big(\frac{|x_i-x_j|}{\eps}\Big) \bigl(\hat{v}^{\X_n}_l(x_j)-\hat{v}^{\X_n}_l(x_i)\bigr)\\
             =&(n-1) (\overline{\lambda}_l-\E[\hat{\lambda}_l^\veps])\langle f_l,u_l \rangle_{L^2(\X_{n-1})} + (n-1)\E[\hat{\lambda}_l^\veps] \langle\hat{v}^{\X_n}_l,u_l \rangle_{L^2(\X_{n-1})}  \\
            &- (n-1)\hat{\lambda}^{\X_n}_l\langle\hat{v}^{\X_n}_l, u_l \rangle_{L^2(\X_{n-1})} \\
            &+\sum_{j:j\not= i} u_l(x_j) \frac{1}{n\eps^{m+2}} \eta\Big(\frac{|x_i-x_j|}{\eps}\Big) \bigl(\hat{v}^{\X_n}_l(x_j)-\hat{v}^{\X_n}_l(x_i)\bigr)\\
            =&(n-1) (\overline{\lambda}_l-\E[\hat{\lambda}_l^\veps])\langle f_l,u_l \rangle_{L^2(\X_{n-1})} + (n-1)(\E[\hat{\lambda}_l^\veps]-\hat{\lambda}^{\X_n}_l) \langle\hat{v}^{\X_n}_l,u_l \rangle_{L^2(\X_{n-1})} \\
            &+\sum_{j:j\not= i} u_l(x_i) \frac{1}{n\eps^{m+2}} \eta\Big(\frac{|x_i-x_j|}{\eps}\Big) \bigl(\hat{v}^{\X_n}_l(x_j)-\hat{v}^{\X_n}_l(x_i)\bigr)\\
            &-\sum_{j:j\not= i} (u_l(x_i)-u_l(x_j)) \frac{1}{n\eps^{m+2}} \eta\Big(\frac{|x_i-x_j|}{\eps}\Big) \bigl(\hat{v}^{\X_n}_l(x_j)-\hat{v}^{\X_n}_l(x_i)\bigr)\\
            =&(n-1) (\overline{\lambda}_l-\E[\hat{\lambda}_l^\veps])\langle f_l,u_l \rangle_{L^2(\X_{n-1})} + (n-1)(\E[\hat{\lambda}_l^\veps]-\hat{\lambda}^{\X_n}_l) \langle\hat{v}^{\X_n}_l,u_l \rangle_{L^2(\X_{n-1})} \\
            &-\hat{\lambda}_l^{\X_n}u_l(x_i)    \hat{v}_l^{\X_n}(x_i) \\
            &-\sum_{j:j\not= i} (u_l(x_i)-u_l(x_j)) \frac{1}{n\eps^{m+2}} \eta\Big(\frac{|x_i-x_j|}{\eps}\Big) \bigl(\hat{v}^{\X_n}_l(x_j)-\hat{v}^{\X_n}_l(x_i)\bigr),        
            \end{split}
    \end{equation}
    where the last equality follows from
    \begin{equation*}
        \sum_{j:j\not= i} \frac{1}{n\eps^{m+2}} \eta\Big(\frac{|x_i-x_j|}{\eps}\Big) (\hat{v}^{\X_n}_l(x_i)-\hat{v}^{\X_n}_l(x_j)) = \Delta_n^{\X_n} \hat{v}_l^{\X_n}(x_i)  =\hat{\lambda}_l^{\X_n}\hat{v}_l^{\X_n}(x_i).
    \end{equation*}
   We can rewrite the first two terms on the right hand side of \eqref{eq:Davis Kahan} as
    \begin{equation*}
        \begin{split}
            &(n-1) (\overline{\lambda}_l-\E[\hat{\lambda}_l^\veps])\langle f_l,u_l \rangle_{L^2(\X_{n-1})} + (n-1)(\E[\hat{\lambda}_l^\veps]-\hat{\lambda}^{\X_n}_l) \langle\hat{v}^{\X_n}_l,u_l \rangle_{L^2(\X_{n-1})}\\
            =& (n-1) (\overline{\lambda}_l-\E[\hat{\lambda}_l^\veps])\langle f_l,u_l \rangle_{L^2(\X_{n-1})} -(n-1) (\overline{\lambda}_l-\E[\hat{\lambda}_l^\veps])\langle\hat{v}_l^{\X_n},u_l \rangle_{L^2(\X_{n-1})}\\
            & + (n-1)(\E[\hat{\lambda}_l^\veps]-\hat{\lambda}^{\X_n}_l) \langle\hat{v}^{\X_n}_l,u_l \rangle_{L^2(\X_{n-1})}+(n-1) (\overline{\lambda}_l-\E[\hat{\lambda}_l^\veps])\langle\hat{v}_l^{\X_n},u_l \rangle_{L^2(\X_{n-1})}\\
            =& (n-1) (\overline{\lambda}_l-\E[\hat{\lambda}_l^\veps])\langle f_l-\hat{v}_l^{\X_n},u_l \rangle_{L^2(\X_{n-1})} +(n-1) (\overline{\lambda}_l-\hat{\lambda}^{\X_n}_l) \langle\hat{v}^{\X_n}_l,u_l \rangle_{L^2(\X_{n-1})}\\
            = &(n-1) (\hat{\lambda}_l^{\X_n}-\E[\hat{\lambda}_l^\veps])\langle f_l-\hat{v}_l^{\X_n},u_l \rangle_{L^2(\X_{n-1})} +(n-1) (\overline{\lambda}_l-\hat{\lambda}^{\X_n}_l) \langle\hat{v}^{\X_n}_l,u_l \rangle_{L^2(\X_{n-1})}\\
            & + (n-1) (\overline{\lambda}_l-\hat{\lambda}_l^{\X_n})\langle f_l-\hat{v}_l^{\X_n},u_l \rangle_{L^2(\X_{n-1})}.
        \end{split}
    \end{equation*}
    Using \eqref{eq:angle}, we obtain
    \begin{align*}
    \begin{split}
       & \left|(n-1) (\hat{\lambda}_l^{\X_n}-\E[\hat{\lambda}_l^{\veps}])\langle f_l-\hat{v}_l^{\X_n},u_l \rangle_{L^2(\X_{n-1})}\right|
        \\ & \quad  \leq (n-1) |\hat{\lambda}_l^{\X_n}-\E[\hat{\lambda}_l^{\veps}]|\cdot \| f_l-\hat{v}_l^{\X_n}\|_{L^2(\X_{n-1})} \|u_l\|_{L^2(\X_{n-1})} 
        \\ & \quad  \leq \frac{C_l}{n\eps^{\frac{m+2}{2}}} n|\hat{\lambda}_l^{\X_n}-\E[\hat{\lambda}_l^{\veps}]|
        \\& \quad = \frac{C_l}{\sqrt{n\eps^{m+2}}} \sqrt{n}|\hat{\lambda}_l^{\X_n}-\E[\hat{\lambda}_l^{\veps}]|,
        \end{split}
    \end{align*}
   while
    \begin{equation*}
        \begin{split}
            \left| (n-1) (\overline{\lambda}_l-\hat{\lambda}_l^{\X_n})\langle f_l-\hat{v}_l^{\X_n},u_l \rangle_{L^2(\X_{n-1})}\right|\leq C_l\|f_l-\hat{v}_l^{\X_n}\|_{L^2(\X_{n-1})}\leq \frac{C_l}{n \veps^{\frac{m+2}{2}}},
        \end{split}
    \end{equation*}
     thanks to \eqref{eq:eigenvalue gap} (since, in particular, \eqref{eq:eigenvalue gap} implies $(n-1) |\hat{\lambda}^{\X_n}_l-\overline{\lambda}_l| \leq C_l.$)
    
  On the other hand, thanks to \cref{lem:spectral convergence in literature}, \cref{lem:EigenvalueStability}, with vey high probaility we have
    \begin{multline*}
        \Biggl|(n-1) (\overline{\lambda}_l-\hat{\lambda}^{\X_n}_l) \langle\hat{v}^{\X_n}_l,u_l \rangle_{L^2(\X_{n-1})}
        -\hat{\lambda}_l^{\X_n}u_l(x_i)    \hat{v}_l^{\X_n}(x_i) \\
            -\sum_{j:j\not= i} (u_l(x_i)-u_l(x_j)) \frac{1}{n\eps^{m+2}} \eta\Big(\frac{|x_i-x_j|}{\eps}\Big) \bigl(\hat{v}^{\X_n}_l(x_j)-\hat{v}^{\X_n}_l(x_i)\bigr)\Biggr|\\
            =\Biggl|(n-1) (\overline{\lambda}_l-\hat{\lambda}^{\X_n}_l) \left(\|u_l\|^2_{L^2(\X_{n-1})}+\langle\hat{v}_l^{\X_n}-u_l,u_l\rangle_{L^2(\X_{n-1})}\right)
        -\hat{\lambda}_l^{\X_n}u_l(x_i)    \hat{v}_l^{\X_n}(x_i) \\
            -\sum_{j:j\not= i} (u_l(x_i)-u_l(x_j)) \frac{1}{n\eps^{m+2}} \eta\Big(\frac{|x_i-x_j|}{\eps}\Big) \bigl(\hat{v}^{\X_n}_l(x_j)-\hat{v}^{\X_n}_l(x_i)\bigr)\Biggr|\\
            =\Biggl|(n-1) (\overline{\lambda}_l-\hat{\lambda}^{\X_n}_l) \left(1+O(\eps)\right)
        -\hat{\lambda}_l^{\X_n}u_l(x_i)     (\hat{v}_l^{\X_n}(x_i)-u_l(x_i))-\hat{\lambda}_l^{\X_n}u^2_l(x_i) \\
            -\sum_{j:j\not= i} (u_l(x_i)-u_l(x_j)) \frac{1}{n\eps^{m+2}} \eta\Big(\frac{|x_i-x_j|}{\eps}\Big) (\hat{v}^{\X_n}_l(x_j)-\hat{v}^{\X_n}_l(x_i)-(u_l(x_j)-u_l(x_i)))\\
         +\sum_{j:j\not= i} (u_l(x_i)-u_l(x_j))^2 \frac{1}{n\eps^{m+2}} \eta\Big(\frac{|x_i-x_j|}{\eps}\Big)\Biggr| \\
           =\Biggl|(n-1) (\overline{\lambda}_l-\hat{\lambda}^{\X_n}_l) \left(1+O(\eps)\right)
        +O(\eps)-\hat{\lambda}_l^{\X_n}u^2_l(x_i) \\
            +O(\eps)  +  \sum_{j:j\not= i} (u_l(x_i)-u_l(x_j))^2 \frac{1}{n\eps^{m+2}} \eta\Big(\frac{|x_i-x_j|}{\eps}\Big)\Biggr| \leq C_l \eps.
    \end{multline*}
Returning to \eqref{eq:Davis Kahan}, the above computations imply that in the event $G_n$ we have
\begin{align*}
\begin{split}
  \left| \sum_{j:j\not= i} (f_l(x_j)-\hat{v}^{\X_n}_l(x_j))\Bigl(\Delta^{(i)}_{n-1}u_l(x_j)-\E[\hat{\lambda}_l^\veps] u_l(x_j)\Bigr)\right| & \leq \frac{C_l}{\sqrt{n \veps^{m+2}}} \sqrt{n} | \hat{\lambda}_l^{\X_n} - \E[\hat{\lambda}_l^\veps]| 
  \\ & \quad + C_l \left( \veps + \frac{1}{n\veps^{\frac{m+2}{2}}} \right). 
\end{split}
\end{align*}
In turn, plugging this in \eqref{eqn: Aux234} and then in \eqref{eq:claim 1}, we obtain
\[ B_1 \leq  \frac{C_la_n}{n \veps^{m+2}} + c_n,\]
where 
\[ c_n :=  C_l \left(  \veps + \frac{1}{n \veps^{\frac{m+2}{2}}}\right)^2 + c_n''  + C_l \frac{n^4}{\veps^{2m +2}} (\eps^{-6m}+n)\exp(-cn\eps^{m+4}).  \]
Note that, by our assumptions on $\veps$, we indeed have $c_n \rightarrow 0$ as $n \rightarrow \infty$. This finishes the proof of \eqref{eq:claim B_1} and hence also of \cref{lem:b_n control}.


\end{proof}

\subsubsection{Proof of (\ref{enu:1.a}) in Theorem \ref{thm}} 
Recall that \eqref{eq:decompose of variance} (which follows from \cref{lem:spectral convergence in literature}) gives us 
\[ c a_n \leq 2 \mathrm{Var}(\BB_n) + 2 b_n + c_n'.  \]
Combining the above with \eqref{eq:b_n upper bound by a_n}, we deduce
\begin{equation}
\label{eqn:Aux_2_Thm}
   c a_n \leq 2 \mathrm{Var}(\BB_n) + \frac{2C}{n \eps^{m+2}} a_n + c_n + c_n',
\end{equation}
for $c_n \rightarrow 0$ as $n \rightarrow \infty$. Now, since for $\eps$ as in \cref{thm1} we have $\frac{C}{n\eps^{m+2}}\to 0$, it follows that for all large enough $n$ we have $\frac{2C}{n \eps^{m+2}} a_n \leq \frac{c}{2} a_n$. Plugging this back in \eqref{eqn:Aux_2_Thm} and reorganizing we obtain
\[ \frac{c}{2} a_n \leq 2 \mathrm{Var}(\BB_n) + 2c_n + c_n' \]
for all large enough $n$. Hence,
\[ \limsup_{n \rightarrow \infty} a_n \leq C \sigma_l^2, \]
where we used (\ref{enu:3}) in \cref{thm}. Returning to \eqref{eq:b_n upper bound by a_n}, we can now deduce 
\[  \limsup_{n \rightarrow \infty} b_n \leq \limsup_{n \rightarrow \infty} \left( \frac{C}{n \eps^{m+2}} a_n + c_n   \right) =0. \]
That is, 
\[ \lim_{n \rightarrow \infty} \mathrm{Var}(\EE_n) =0.\]
Since $\E[\EE_n]=0$, it follows that $\EE_n \pto 0$ as $n \rightarrow \infty$, concluding in this way the proof.




    \nc

\subsection{Proof of \cref{thm1}.}\label{sec:main proof}

\subsection{CLT for Multiple Eigenvalues}\label{sec:proof for multi eigenvalue CLT}
In this section, we present the proof of \cref{thm:multi normal distribution}, which is based on a quite straightforward adaptation of the proof of \cref{thm1}. 

First, note that it suffices to prove that, for fixed scalars $a_1, \dots, a_l \in \R$,
\[ \sqrt{n}\sum_{k=1}^l a_k ( \hat{\lambda}_k^\veps - \E[ \hat{\lambda}_k^\veps ]  ) \dto \mathcal{N}\left (0, \sum_{j,k} \overline{\sigma}_{j,k} a_ja_k  \right).  \] 
To see this, we may first follow the same steps as in the proof of Theorem \ref{thm1} to conclude that it suffices to prove that
\[  \sqrt{n} \sum_{k=1}^l a_k (  (\hat{\lambda}_k^\veps  - \E[\hat{\lambda}_k^\veps] ) \langle u_k , \hat{u}_k \rangle_{L^2(\X_n)} - \E[(\hat{\lambda}_k^\veps  - \E[\hat{\lambda}_k^\veps] ) \langle u_k , \hat{u}_k \rangle_{L^2(\X_n)}]   )  \dto \mathcal{N} \left(0, \sum_{j,k} \overline{\sigma}_{j,k} a_ja_k  \right). \]
Using \eqref{eq:lambda-hatlambda}, the left-hand side in the above can be written as the sum of a bulk term:
\[ \sum_{k=1}^l  a_k \sqrt{n}\left(    \langle u_k , \Delta_n u_k - \E[\hat{\lambda}_k^\veps] u_k \rangle_{L^2(\X_n)}  - \E[ \langle u_k , \Delta_n u_k - \E[\hat{\lambda}_k^\veps] u_k \rangle_{L^2(\X_n)}  ]  \right)\]
and an error term:
\[ \sum_{k=1}^l  a_k \sqrt{n}\left(    \langle  \hat{u}_k - u_k , \Delta_n u_k - \E[\hat{\lambda}_k^\veps] u_k \rangle_{L^2(\X_n)}  - \E[ \langle \hat{u}_k -  u_k , \Delta_n u_k - \E[\hat{\lambda}_k^\veps] u_k \rangle_{L^2(\X_n)}  ]  \right).\]
However, by \eqref{enu:1.a} in \cref{thm}, each of the summands in the above error term goes to zero in probability, and thus the full error term vanishes asymptotically.  By Slutsky's theorem, it would thus suffice to analyze the bulk term, but this is done in a completely analogous way as in the proof of \eqref{enu:3} in \cref{thm}.

\subsection{Consistent Estimators for Asymptotic Variance}
\label{sec:ConsistentEstimatorVariance}

In this section we prove Theorem \ref{thm:ConsistentEstimator}. First, we establish the following consistency result.

\begin{lemma}
\label{lem:EstimatorNormGradient}
Under Assumptions \ref{assump:DataGenerating}, \ref{assump:Kernel}, \ref{assum:eigengap for l},  and provided $\left(\frac{(\log n)^2}{n}\right)^{\frac{1}{m+4}}\ll\eps\ll 1$, we have
\[\max_{i=1, \dots, n} \left| \widehat{G}_{l,n}(x_i) - |\nabla u_l (x_i)|^2 \rho(x_i) \right| \asto 0. \]
\end{lemma}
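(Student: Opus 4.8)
The strategy is to compare $\widehat{G}_{l,n}(x_i)$ with $|\nabla u_l(x_i)|^2\rho(x_i)$ by inserting the "oracle" quantity built from the true eigenfunction $u_l$ in place of $\hat u_l$, namely
\[
  \widetilde{G}_{l,n}(x_i) := \frac{1}{2n\veps^{m+2}}\sum_{j=1}^n |u_l(x_j)-u_l(x_i)|^2\,\eta\!\left(\frac{|x_i-x_j|}{\veps}\right).
\]
Then I would bound $\max_i |\widehat{G}_{l,n}(x_i)-|\nabla u_l(x_i)|^2\rho(x_i)|$ by the sum of two terms: (i) a \emph{perturbation} term $\max_i|\widehat{G}_{l,n}(x_i)-\widetilde{G}_{l,n}(x_i)|$, and (ii) a \emph{bias/concentration} term $\max_i|\widetilde{G}_{l,n}(x_i)-|\nabla u_l(x_i)|^2\rho(x_i)|$.

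For term (ii), this is exactly the pointwise consistency of the "carré du champ"/gradient estimator, which is standard and essentially contained in the same circle of ideas as \cref{lem:spectral convergence in literature}: conditionally on $x_i$, the inner sum is an average of i.i.d.\ bounded random variables whose conditional expectation is, by a normal-coordinates Taylor expansion (as carried out in the proof of \ref{enu:3} in \cref{thm} and using \eqref{eqn:AssumpKernel}), equal to $|\nabla u_l(x_i)|^2\rho(x_i)+O(\veps)$; each summand is bounded by $C_l\veps^2/\veps^{m+2}=C_l\veps^{-m}$ since $u_l$ is Lipschitz, so a Bernstein/Hoeffding bound gives deviations of order $\sqrt{\tfrac{\log n}{n\veps^{m+2}}}$ with probability at least $1-Cn^{-2}$, and a union bound over $i=1,\dots,n$ together with Borel–Cantelli yields almost sure convergence to $0$ provided $n\veps^{m+2}\gg\log n$, which holds in our regime. (Here I would use $\frac{d(x_i,x_j)}{\veps}$ vs.\ $\frac{|x_i-x_j|}{\veps}$ interchangeably up to $O(\veps^2)$ using \eqref{e.comparegeodesic} and the Lipschitz continuity of $\eta$, exactly as in the bulk-term computation.)

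For term (i), write $|\hat u_l(x_j)-\hat u_l(x_i)|^2-|u_l(x_j)-u_l(x_i)|^2 = \big(\hat u_l(x_j)-\hat u_l(x_i)-(u_l(x_j)-u_l(x_i))\big)\cdot\big(\hat u_l(x_j)-\hat u_l(x_i)+u_l(x_j)-u_l(x_i)\big)$. On the high-probability event of \cref{lem:spectral convergence in literature}, the almost-Lipschitz estimate \eqref{eq:almost Lipschitz} controls the first factor by $C_l\veps(|x_i-x_j|+\veps)\le C_l\veps^2$ when $|x_i-x_j|\le\veps$, while the second factor is $\le C_l\veps$ (Lipschitzness of $u_l$ plus \eqref{eq:L_infty control} and \eqref{eq:almost Lipschitz}). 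Hence each summand of $\widehat{G}_{l,n}(x_i)-\widetilde{G}_{l,n}(x_i)$ is $O(\veps^3/\veps^{m+2})=O(\veps^{1-m})$, and summing the $N_i=O(n\veps^m)$ nonzero terms (controlled by a standard concentration argument) gives $\max_i|\widehat{G}_{l,n}(x_i)-\widetilde{G}_{l,n}(x_i)|\le C_l\veps\to0$ on this event; since the complementary event has probability at most $C(\veps^{-6m}+n)\exp(-cn\veps^{m+4})$, which is summable under $\left(\tfrac{(\log n)^2}{n}\right)^{1/(m+4)}\ll\veps$, Borel–Cantelli again gives almost sure convergence.

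The main obstacle is term (i): one needs the \emph{almost-Lipschitz} control \eqref{eq:almost Lipschitz} rather than merely the $L^\infty$ or $L^2$ bounds on $\hat u_l-u_l$, because the naive bound $|\hat u_l(x_j)-\hat u_l(x_i)|^2-|u_l(x_j)-u_l(x_i)|^2=O(\veps)\cdot O(\|\hat u_l-u_l\|_\infty)$ would only give $O(\veps)\cdot O(\veps)/\veps^{m+2}$ per term times $O(n\veps^m)$ terms $=O(1)$, which is not enough. Using \eqref{eq:almost Lipschitz} gains the extra factor of $\veps$ that makes the perturbation term vanish. Everything else is a routine combination of normal-coordinate Taylor expansion, Bernstein-type concentration, and union bounds; I would present term (ii) first (bias + concentration) and then term (i) (perturbation), and conclude with Borel–Cantelli.
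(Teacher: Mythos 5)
Your proposal is correct and follows essentially the same route as the paper's proof: the same decomposition into an oracle term $\widetilde{G}_{l,n}$ built from $u_l$, with the perturbation term controlled via the almost-Lipschitz estimate \eqref{eq:almost Lipschitz} (yielding the $O(\veps)$ bound), the oracle term handled by Bernstein concentration plus a normal-coordinates Taylor expansion, and Borel--Cantelli to conclude almost sure convergence. The only cosmetic difference is where the Euclidean-versus-geodesic comparison via \eqref{e.comparegeodesic} is absorbed, which does not affect the argument.
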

\begin{proof}
Thanks to Lemma \ref{lem:spectral convergence in literature} (specifically \eqref{eq:almost Lipschitz}), with probability at least $1-C(\eps^{-6m}+n)\exp(-cn\eps^{m+4})$ we have
\[ |\hat{u}_l(x_i) - \hat{u}_l(x_j) - ( u_l(x_i) - u_l(x_j))| \leq C \veps(|x_i-x_j| + \veps), \quad \forall i,j.    \]
This, together with the assumptions on $\eta$ and \eqref{e.comparegeodesic}, implies that, with probability at least $1-C(\eps^{-6m}+n)\exp(-cn\eps^{m+4})$, we have
\[  \max_{i=1, \dots, n } | \widehat{G}_{l,n}(x_i) - \widetilde{G}_{l,n}(x_i)   |  \leq C \veps,   \]
where
\[     \widetilde{G}_{l,n}(x_i):= \frac{1}{2n\veps^{m+2}} \sum_{j=1}^n | {u}_l (x_j) - {u}_l(x_i)|^2 \eta\left(\frac{d(x_i,x_j)}{\veps} \right), \quad i=1, \dots, n. \]

Now, a standard application of Bernstein's inequality combined with a union bound reveals that, with probability at least $1- Cn \exp(-cnt^2\veps^m)$,
\[  \left| \widetilde{G}_{l,n}(x_i)  - \frac{1}{2\veps^{m+2}} \int_{\M} | u_l(x_i) - u_l(y)|^2\eta\left(\frac{d(x_i, y)}{\veps} \right) \rho(y) \dd y    \right| \leq t, \]
for all $i=1, \dots, n$. 

Finally, note that, for any given $x \in \M$, we have
\begin{align*}
\frac{1}{2\veps^{m+2}} & \int_{\M} | u_l(x) - u_l(y)|^2\eta\left(\frac{d(x, y)}{\veps} \right) \rho(y) \dd y 
\\ & =  \frac{1}{2\veps^{2}} \int_{B_1(0) \subseteq T_x\M } | u_l(x) - u_l(\exp_x(\veps v))|^2 \eta(|v|) \rho(\exp_x(\veps v)) J_x(\veps v) \dd v
\\ & =  \frac{1}{2} \rho(x) \int_{B_1(0) \subseteq T_x\M } | \langle \nabla u_l(x), v \rangle   |^2 \eta(|v|)\dd v    + O(\veps)
\\ & = |\nabla u_l(x)|^2 \rho(x) + O(\veps),
\end{align*}
where to go from the second to the third line we used a Taylor expansion for $\rho$, $u_l$ and $J_x$ (see \eqref{e.Jactaylor}). 

Putting together the above estimates, we conclude that
\[ \max_{i=1, \dots, n} \left| \widehat{G}_{l,n}(x_i) - |\nabla u_l (x_i)|^2 \rho(x_i) \right| \leq C(t+ \veps)    \]
with probability at least $1-C(\eps^{-6m}+n)\exp(-cn\eps^{m+4}) - Cn \exp(-cnt^2\veps^m).$ It remains to use the assumptions on $\veps$ to combine the above concentration bound with the Borel-Cantelli lemma and obtain the desired consistency.
\end{proof}

With the above lemma in hand, we are now ready to prove Theorem \ref{thm:ConsistentEstimator}.

\begin{proof}[Proof of Theorem \ref{thm:ConsistentEstimator}]
Thanks to Lemma \ref{lem:spectral convergence in literature} (specifically \eqref{eq:L_infty control}), Lemma \ref{lem:EstimatorNormGradient}, and the Borel-Cantelli lemma we have
\[ \left| \widehat{\sigma}_l^2 -  \frac{1}{n}\sum_{i=1}^n\Bigl({\lambda}_l {u}_l(x_i)^2+{\lambda}_l - 2  |\nabla u_l (x_i)|^2 \rho(x_i) \Bigr)^2 \right| \asto 0.  \]
On the other hand, by the law of large numbers we have
\[   \left|   \frac{1}{n}\sum_{i=1}^n\Bigl({\lambda}_l{u}_l(x_i)^2+{\lambda}_l - 2  |\nabla u_l (x_i)|^2 \rho(x_i) \Bigr)^2  -{\sigma}_l^2  \right| \asto 0.    \]
This implies $\widehat{\sigma}_l^2 \asto \sigma_l^2$. The same argument can be used to prove the consistency of $\widehat{\Sigma}$.
\end{proof}

\subsection{On Centralization Around $\lambda_l$}\label{sec:bias}
Here we present the (very short) proof of Theorem \ref{lem:spectral convergence bias control} and provide an additional remark on centralization of $\hat{\lambda}_l^\veps$ around $\lambda_l$. We also provide the proof of Corollary \ref{cor:BiasEstimate}, which provides an upper bound on the bias $|\lambda_l - \E[\hat{\lambda}_l^\veps]|$ when $\veps$ scales appropriately.  

\begin{proof}[Proof of Theorem \ref{lem:spectral convergence bias control}]
Using \cite[Proposition 3.8]{trillos2024} with $g= u_l$, and noticing that due to the regularity of $u_l$ we have $[u_l]_{1, \veps} \leq C_l$, it follows that
\begin{equation*}
    \left|\E\left[\langle u_l, (\Delta_\rho-\Delta_n) u_l \rangle_{L^2(\X_n)}\right]\right|\leq C_l\eps^2,
\end{equation*}   
proving in this way \eqref{eqn:UpperBias1}. 
\end{proof}

\begin{remark}
From the analysis in the proof of \cite[Proposition 3.8]{trillos2024}, it follows that, unless there are additional cancellations in the computation in the proof of that proposition (which won't occur generically), the leading order term of 
\[\E\left[\langle u_l, (\Delta_\rho-\Delta_n) u_l \rangle_{L^2(\X_n)}\right]\] is indeed $O(\veps^2)$ and not something smaller. 
\label{rem:LowerBoundBias}
\end{remark}

\begin{proof}[Proof of Corollary \ref{cor:BiasEstimate}]
Using \eqref{eqn:BiasDecomp}, we deduce 
\begin{align*}
\left| (\E[\hat{\lambda}_l^\veps] -\lambda_l)  - \E \left[  \langle \Delta_n u_l - \Delta_\rho u_l , u_l  \rangle_{L^2(\X_n)}  \right]\right| & \leq 
\E \left[ |\hat{\lambda}_l^\veps - \lambda_l| | \langle u_l- \hat{u}_l, u_l  \rangle_{L^2(\X_n)}|  \right]
\\& \quad  + \E\left[ |\langle \Delta_n u_l - \Delta_\rho u_l , \hat{u}_l - u_l  \rangle_{L^2(\X_n)}|  \right].
\end{align*}
Using Lemma \ref{lem:spectral convergence in literature} and the assumptions on $\veps$ it is straightforward to verify that the right hand side of the above expression is $o(\veps^2)$. Combining with \eqref{eqn:UpperBias1}, we deduce the desired  upper bound on the bias.

\end{proof}

\nc

\section{Geometric and statistical interpretations of asymptotic variance}\label{sec:simulation-geometry}

In this section, we explore the different interpretations for the asymptotic variance $\sigma_l^2$ that we announced in the introduction. In the first two subsections, which are largely informal, we present a geometric characterization for $\sigma_l^2$. This description, in turn, suggests why a Cramer-Rao lower bound like the one enunciated in Theorem \ref{thm:CramerRao} should hold. We present the rigurous proof of Theorem \ref{thm:CramerRao} at the end of section \ref{sec:CramerRaoAnalysis}. 

\subsection{Background on Fisher-Rao Geometry}
\label{sec:FisherRao}

Consider the space of Borel probability measures $\mathcal{P}(\M)$ over $\M$ endowed with the following Fisher-Rao metric.

\begin{definition}
Given $\mu$ and $\nu$ two Borel probability measures over $\M$, we define
\[ \mathrm{FR}^2(\mu, \nu):= \inf_{t \in [0,1] \mapsto (\mu_t, \xi_t)} \int_0^1  \int _{\M} \xi_t^2 d\mu_t(x) dt, \]
where the $\inf$ ranges over all solutions $ t\mapsto (\mu_t, \xi_t)$ to the equation
\begin{equation}
 \partial_t \mu_t = \xi_t \mu_t, 
 \label{eqn:NewCont}
 \end{equation}
where $\xi_t$ is a scalar function in $L^2(\mu_t)$ such that $\int \xi_t d \mu_t=0 $ for all $t$, and $\mu_0=\mu$ and $\mu_1 = \nu$. Equation \eqref{eqn:NewCont} must be interpreted in the distributional sense. That is, for every test function $\phi$ we must have
\[ \frac{d}{dt}\int_\M \phi(x) d \mu_t(x) =\int_\M \xi_t \phi(x) d\mu_t(x). \]
\end{definition}

It is not difficult to show that $\mathrm{FR}$ defined in this way induces a metric in the space of Borel probability measures over $\M$. Moreover, from the definition, we can see that $\mathrm{FR}$ has an associated Riemannian-like structure. Indeed, for a given $\mu \in \mathcal{P}(\M)$ we can formally define its \textit{tangent plane} as
\[\mathcal{T}_\mu \mathcal{P}(\M) := \{ \xi \in L^2(\mu) \: : \: \int_\M  \xi d\mu =0 \}, \] 
which we endow with the inner product
\[\langle \xi , \beta \rangle_\mu := \int_\M \xi \beta d\mu, \quad \xi, \beta \in \mathcal{T}_\mu\mathcal{P}(\M).\]

\subsubsection{Gradient flows w.r.t.\ Fisher-Rao geometry}

Thanks to the formal Riemannian structure of the Fisher-Rao geometry mentioned above, we can introduce a formal notion of gradient for an energy $E : \mathcal{P}(\M) \rightarrow \R \cup \{ \infty \}$ at a given $\mu$. In particular, the gradient of $E$ at $\mu$ is the element $\nabla_{\mathrm{FR}} E(\mu) $  in $ \mathcal{T}_\mu \mathcal{P}(\M)$ for which the chain rule 
\[ \frac{d}{dt} E(\mu_t) = \langle \nabla_{\mathrm{FR}} E(\mu_t), \xi_t \rangle_{\mu_t}   \]
holds for all trajectories
\[ \partial_t \mu_t = \xi_t \mu_t.\]

\begin{definition}[First variation of an energy; see Definition 7.12 in \cite{santambrogio2015optimal}]
Given $E : \mathcal{P}(\M) \rightarrow \R \cup \{ \infty \}$, we say that $\frac{\delta E}{\delta \mu} $ is the first variation of $E$ at $\mu$ if 
\[ \frac{d}{dt}\mid_{t=0} E(\mu + t \zeta) = \int_\M \frac{\delta E}{\delta \mu}(x) d \zeta(x)   \]
for all perturbations $\zeta$ of the measure $\mu$.
\end{definition}

From the above definition, and under suitable assumptions on the energy $E$ (that we avoid making precise), one can obtain the following relation between first variation and gradient of an energy $E$:
\[ \nabla_{\mathrm{FR}} E(\mu) := \frac{\delta E }{\delta \mu} - \overline{\frac{\delta E }{\delta \mu}}, \]
where
\[ \overline{\frac{\delta E }{\delta \mu}}:= \int_\M \frac{\delta E }{\delta \mu} (x) d \mu(x); \]
note that the term $\overline{\frac{\delta E }{\delta \mu}}$ is subtracted from the first variation to guarantee that the gradient indeed belongs to $\mathcal{T}_\mu \mathcal{P}(\M)$. In particular, the gradient flow of $E$ with respect to the Fisher-Rao geometry is characterized by the equation
\[ \partial_t \mu_t = - \left ( \frac{\delta {E}}{\delta \mu}(\cdot; \mu_t) - \int \frac{\delta {E}}{\delta \mu}(x; \mu_t) d\mu_t(x) \right) \mu_t,\]
 where we have added an argument to the first variation to emphasize that it depends on the measure at which we evaluate it. 
 
\subsection{Geometric Interpretation of the Asymptotic Variance}
\label{sec:GeometricInterp}
We continue our informal discussion and consider the objective function
\[ F_l(\rho):= \lambda_l(\rho), \]
defined for those measures with a density $\rho$ for which the operator $\Delta_\rho$ makes sense. From now on, we identify measures with their densities.

Following the computation in \cite[Appendix C.1]{trillos2024} (which we present again in our Appendix \ref{app:EstimatesPerturbationTheory} for the convenience of the reader), we can observe that the first variation of $F_l$ at $\rho$ is given by
\[ \frac{\delta F_l }{\delta \rho} = - \lambda_l u_l^2 + 2 |\nabla u_l |^2 \rho^2,   \]
where $u_l= u_l(\rho)$ is the $l$-th eigenfunction of $\Delta_\rho$. Thus, from our discussion in section \ref{sec:FisherRao}, we should have
\begin{equation}
 \nabla_{\mathrm{FR}} F_l(\rho)=  - \lambda_l u_l^2 - \lambda_l  + 2 |\nabla u_l|^2 \rho.    
 \label{eq:GradEqn}
\end{equation}
So, if we consider the formal gradient flow equation
\begin{equation}
   \dot{\rho}_t= -\nabla_{\mathrm{FR}} F_l(\rho_t) \rho_t , \quad t >0,
   \label{eqn:GradFlow}
\end{equation}
then we obtain, from the chain rule, 
\[ \frac{d}{dt}  F_l(\rho_t) :=  -\langle \nabla_{\mathrm{FR} } F_l(\rho)   ,  \nabla_{\mathrm{FR}}  F_l(\rho)  \rangle_{\rho} =- \int_\M (   - \lambda_l u_l^2 - \lambda_l  + 2 |\nabla u_l|^2 \rho  )^2 \rho \dx = -\sigma_l^2(\rho),     \]
where we dropped the dependence on $t$ for simplicity. In other words, $\sigma_l^2(\rho)$ is the \textit{dissipation of the energy} $F_l$ \textit{along the gradient flow equation \eqref{eqn:GradFlow}}.

There is a more intuitive way to interpret the above computation. Indeed, suppose that, around a given $\rho$, we linearize the energy $F_l$ according to
\[ \mathcal{L}F_l(\xi) :=  F_l(\rho) + \langle  \nabla_{\mathrm{FR}} F_l (\rho) , \xi \rangle_\rho, \quad \xi \in \mathcal{T}_\rho \mathcal{P}(\M). \]
Here, the variable $\xi$ can be interpreted as an infinitesimal perturbation of $\rho$. In this interpretation, it makes sense to ask for the perturbation of $\rho$ that reduces the (linearization of the) energy $F_l$ the most. Precisely, one could consider the problem
\[ \min _{\xi \in \T_\rho \mathcal{P}(\M)  \text{ s.t. } \lVert\xi  \rVert_\rho \leq 1} \mathcal{L}F_l(\xi) - F_l(\rho) . \]
which is minimized when we take $\xi =- \frac{ \nabla_{\mathrm{FR}} F_l (\rho) }{\lVert \nabla_{\mathrm{FR}} F_l (\rho) \rVert_\rho}$. For this perturbation, the infinitesimal change in energy is precisely $\sigma^2_l(\rho)$.

\subsection{Cramer-Rao Lower Bound for Eigenvalue Estimation Problem}
\label{sec:CramerRaoAnalysis}
In this section, we prove Theorem \ref{thm:CramerRao}. However, before we provide a rigurous proof, we first discuss how the formal computations from the previous sections already suggest the desired Cramer-Rao lower bound. In the actual proof of the theorem at the end of this section, we justify some of the following steps.

Suppose $\tilde \lambda_l$ is an \textit{unbiased} estimator for $\lambda_l$. It follows that
\[ F_l(\rho)= \lambda_l(\rho) = \E_{\rho}[ \tilde{\lambda}_l(x_1, \dots, x_n)  ]   = \int_\M \cdots \int_\M  \tilde{\lambda}_l(x_1, \dots, x_n) \rho(x_1) \dots \rho(x_n ) \dd x_1 \dots \dd x_n \]
for all sufficiently regular $\rho$. Consider now an arbitrary trajectory 
\[ \partial_t {\rho}_t = \xi_t \rho_t, \quad  \xi_t \in \mathcal{T}_{\rho_t} \mathcal{P}(\M).\]
Then, on the one hand, by the chain rule we have
\[ \frac{d}{dt} F_l(\rho_t) =  \langle \nabla_{\mathrm{FR}} F_l(\rho_t) , \xi_t \rangle_{\rho_t}.   \]
On the other hand, a direct computation reveals
\[ \frac{d}{dt} \int_\M \cdots \int_\M  \tilde{\lambda}_l(x_1, \dots, x_n) \rho_t(x_1) \dots \rho_t(x_n ) \dd x_1 \dots \dd x_n  = \E_{\rho_t}\left[\tilde{\lambda}_l(x_1, \dots, x_n) \left(\sum_{i=1}^n \xi_t(x_i)\right)    \right] . \]
That is, 
\[   \langle \nabla_{\mathrm{FR}} F_l(\rho) , \xi \rangle_{\rho}   =  \E_\rho\left[\tilde{\lambda}_l(x_1, \dots, x_n) \left(\sum_{i=1}^n \xi(x_i)\right)\right],  \]
where we drop the $t$ dependence for simplicity. Using the fact that $ \E_\rho[\xi] =0$ (because $\xi \in \T_\rho \mathcal{P}(\M) $) and the Cauchy-Schwartz inequality on the right-hand side of the above expression, we obtain
\[ \langle \nabla_{\mathrm{FR}} F_l(\rho) , \xi \rangle_{\rho}   \leq  \sqrt{n} \sqrt{\Var_\rho(\tilde{\lambda}_l)} \lVert \xi \rVert_\rho.        \]
Reorganizing, we obtain
\begin{equation*}
\sqrt{\Var_\rho(\tilde{\lambda}_l)} \geq \frac{1}{\sqrt{n}} \frac{\langle \nabla_{\mathrm{FR}} F_l(\rho) , \xi \rangle_{\rho}}{ \lVert \xi \rVert_\rho}. 
\end{equation*}
Since $\xi$ was arbitrary, we can maximize on the right hand side of the above expression to deduce
\begin{equation}
\Var_\rho(\tilde{\lambda}_l) \geq  \frac{ \lVert \nabla_{\mathrm{FR}} F_l(\rho) \rVert_\rho^2}{n} = \frac{\sigma_l^2(\rho)}{n}. 
\label{eq:CramerRao}
\end{equation}

After the above informal discussion, which was intended to highlight the geometric nature of the Cramer-Rao lower bound, we proceed to present the rather direct proof of Theorem \ref{thm:CramerRao}. 

\begin{proof}[Proof of Theorem \ref{thm:CramerRao}]
Suppose that $\tilde{\lambda}_l$ is an unbiased estimator for $\lambda_l$ as in Definition \ref{def:UnbiasedEstimator} and let $\rho \in \mathcal{P}_\M^\alpha$ be such that $\lambda_l(\rho)$ is simple. For $t$ such that $|t|$ is sufficiently small, consider
\[\rho_t:= \rho + t ( - \lambda_l u_l^2 - \lambda_l  + 2 |\nabla u_l|^2 \rho) \rho ,  \]
where $\lambda_l = \lambda_l(\rho)$ and $u_l= u_l(\rho)$. By the discussion in Appendix \ref{sec:Regularity}, if $\rho \in C^{2, \alpha}(\M)$, then $u_l \in C^{3,\alpha}(\M)$. It follows that $\rho_t \in \mathcal{P}_\M^\alpha$ for all $t$ for which $|t|$ is small enough.

We use the fact that $\tilde \lambda_l$ is unbiased to conclude that
\[\lambda_l(\rho_t) = \int_\M \cdots \int_\M  \tilde{\lambda}_l(x_1, \dots, x_n) \rho_t(x_1) \dots \rho_t(x_n ) \dd x_1 \dots \dd x_n,\]
for all $t$ such that $|t|$ is sufficiently small. Taking derivative in $t$ on both sides of the above equality and evaluating at $t=0$, we deduce
\[ \sigma^2_l(\rho) = \E_{\rho}\left[ \tilde{\lambda}_l(x_1 \dots, x_n) \left( \sum_{i=1}^n \xi(x_i) \right) \right], \]
where the left hand side follows from \eqref{eqn:PerturbationEigenvalues}, and where $\xi(x):= -\lambda_l u_l^2(x) - \lambda_l + 2 |\nabla u_l(x)|^2 \rho(x)$. Using the fact that $\int_\M \xi(x) \rho(x) \dx =0$ and Cauchy-Schwartz inequality, it follows that
\[ \sigma^2_l(\rho) \leq \sqrt{n} \sqrt{\Var_\rho(\tilde{\lambda}_l)} \sigma_l(\rho).\]     
This establishes the desired lower bound for the variance of $\tilde{\lambda}_l$.
\end{proof}


\section{Numerical Experiments}\label{sec:simulation}

In this section, we validate our theoretical findings with empirical evidence from finite-sample simulations and explore some potential relaxations of our theoretical assumptions. As representative examples, we consider data independently and uniformly sampled from an $m$-torus and an $m$-sphere of (intrinsic) dimension $m\in \mathbb{N}$. For every experimental setting, we independently repeat the simulation $1000$ times. In each repetition, an $\eps$-graph is constructed from $n$ data points, using $\eps = \eps_n \propto n^{-1/(m+4)}$ (unless stated otherwise), which lies on the lower boundary of the required range of $\eps$ in our theoretical analysis; we use the kernel $\eta= \mathbf{1}_{[0,1]}$. The code used to perform the numerical experiments is publicly available on GitHub at \url{https://github.com/chl781/GraphLaplacianCLT}.



\subsection{Asymptotic Normality and Rate of Convergence}\label{ss:gaus} 
We begin by examining the shape of the finite-sample distribution of $\sqrt{n}\hat{\lambda}^\varepsilon_l$ (in particular, for $l = 2$) and how quickly it converges to a normal distribution. Specifically, we consider data sampled from an ellipse (i.e., stretched 1-sphere), a 2-sphere and a 2-torus embedded in the ambient space $\mathbb{R}^3$, with varying sample sizes $n \in \{1000, 2000, 4000\}$. The radii of the ellipse are $1$ and $\sqrt{2}$, and the radii of the sphere and the torus are the same, 
and equal to one. The shapes of distributions are estimated using standard kernel density estimators (with default parameters in MATLAB), and their proximity to a normal distribution is assessed via QQ-plots; see Figures~\ref{fig:den-normal} and~\ref{fig:qq-normal}.

\begin{figure}[htbp]
  \centering
  \begin{subfigure}[t]{0.32\linewidth}
    \centering
    \includegraphics[width=\linewidth]{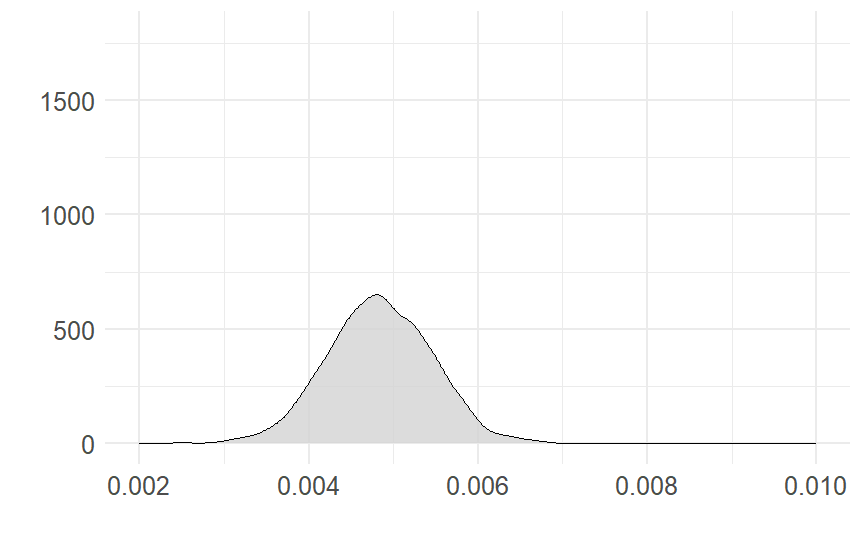}
    \caption{Ellipse, $ n = 1000$}
    \label{fig:circle-den-1k}
  \end{subfigure}\hfill
  \begin{subfigure}[t]{0.32\linewidth}
    \centering
    \includegraphics[width=\linewidth]{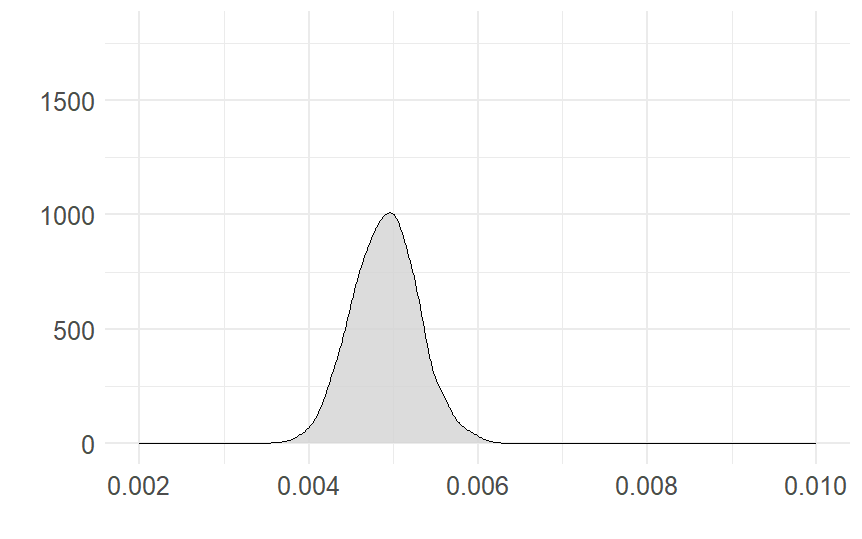}
    \caption{Ellipse, $ n = 2000$}
    \label{fig:circle-den-2k}
  \end{subfigure}\hfill
  \begin{subfigure}[t]{0.32\linewidth}
    \centering
    \includegraphics[width=\linewidth]{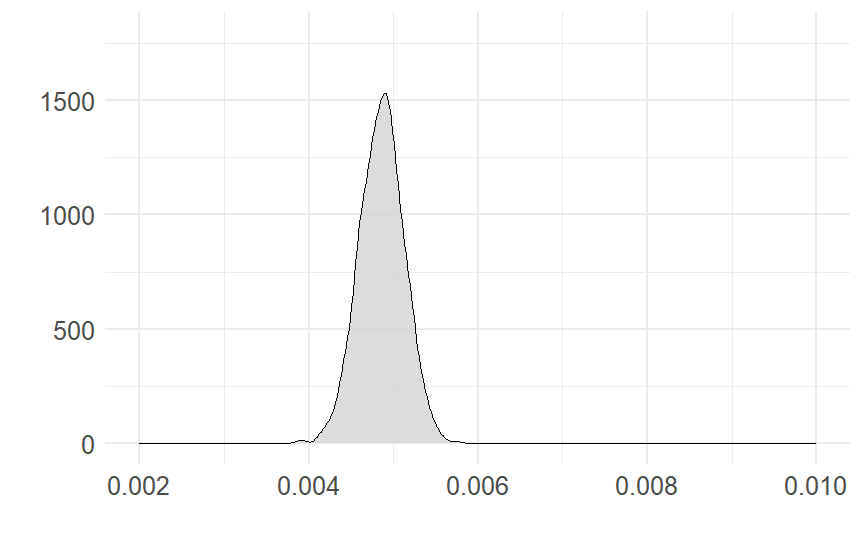}
    \caption{Ellipse, $ n = 4000$}
    \label{fig:circle-den-4k}
  \end{subfigure}
   \vspace{1em} 
  \begin{subfigure}[t]{0.32\linewidth}
    \centering
    \includegraphics[width=\linewidth]{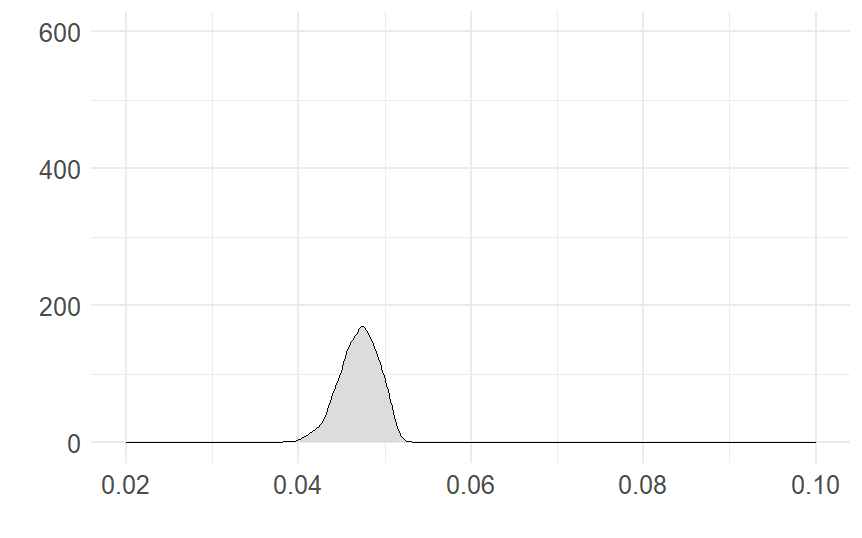}
    \caption{Sphere, $ n = 1000$}
    \label{fig:sphere-den-1k}
  \end{subfigure}\hfill
  \begin{subfigure}[t]{0.32\linewidth}
    \centering
    \includegraphics[width=\linewidth]{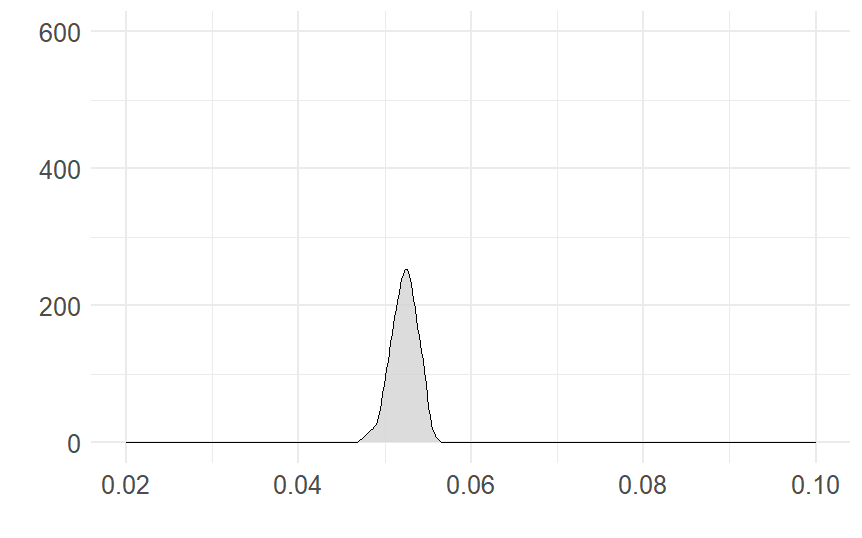}
    \caption{Sphere, $ n = 2000$}
    \label{fig:sphere-den-2k}
  \end{subfigure}\hfill
  \begin{subfigure}[t]{0.32\linewidth}
    \centering
    \includegraphics[width=\linewidth]{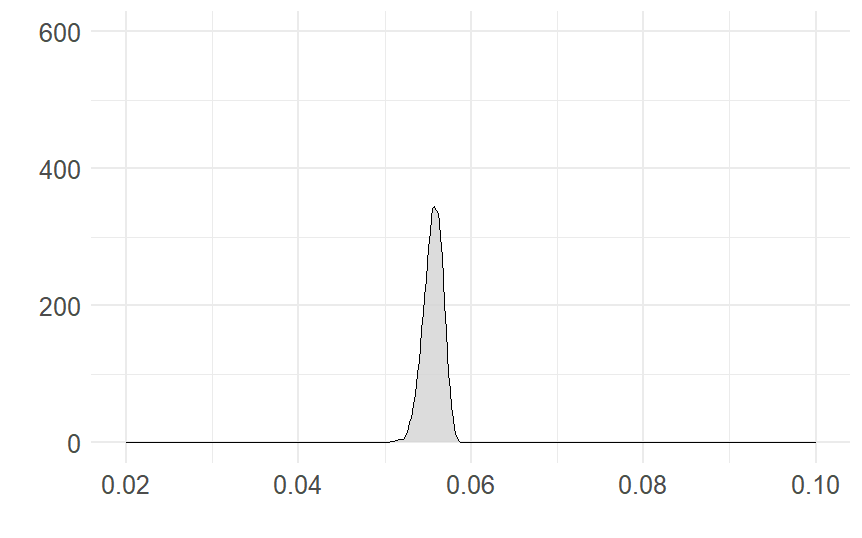}
    \caption{Sphere, $ n = 4000$}
    \label{fig:sphere-den-4k}
  \end{subfigure}
   \vspace{1em} 
  \begin{subfigure}[t]{0.32\linewidth}
    \centering
    \includegraphics[width=\linewidth]{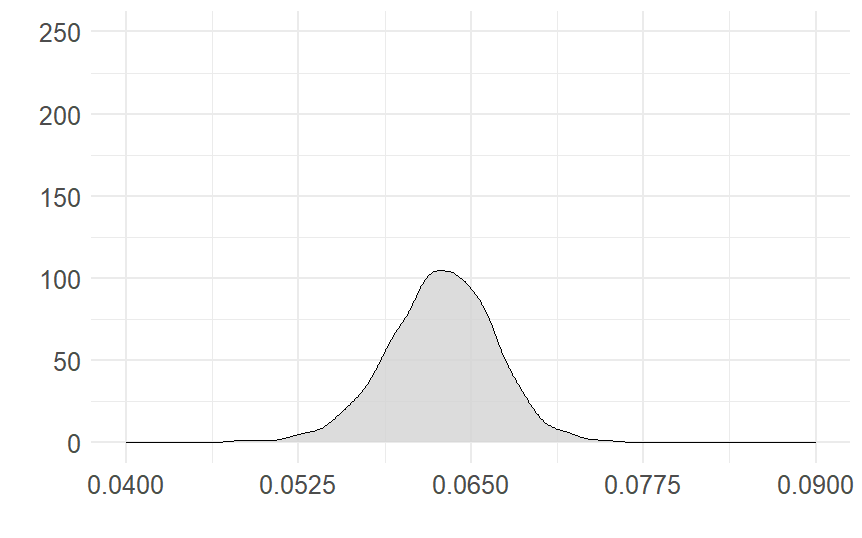}
    \caption{Torus, $n=1000$}
    \label{fig:torus-den-1k}
  \end{subfigure}\hfill
  \begin{subfigure}[t]{0.32\linewidth}
    \centering
    \includegraphics[width=\linewidth]{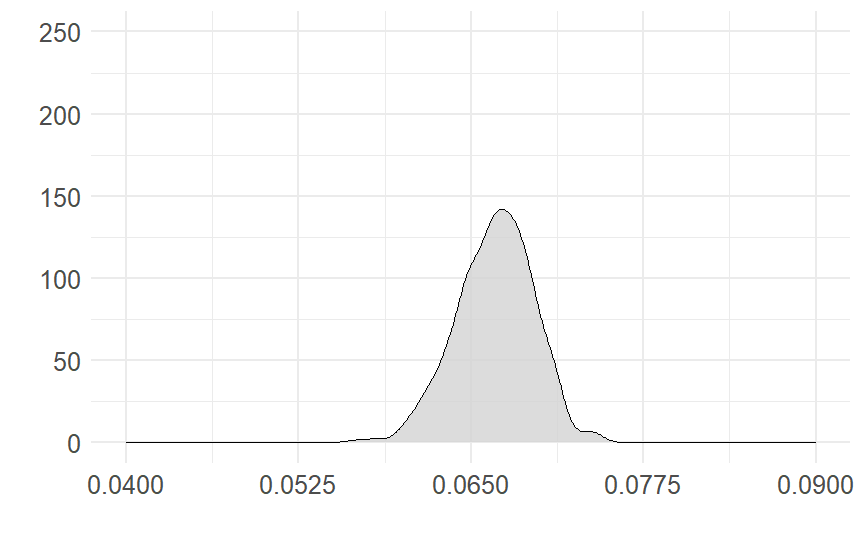}
    \caption{Torus, $n=2000$}
    \label{fig:torus-den-2k}
  \end{subfigure}\hfill
  \begin{subfigure}[t]{0.32\linewidth}
    \centering
    \includegraphics[width=\linewidth]{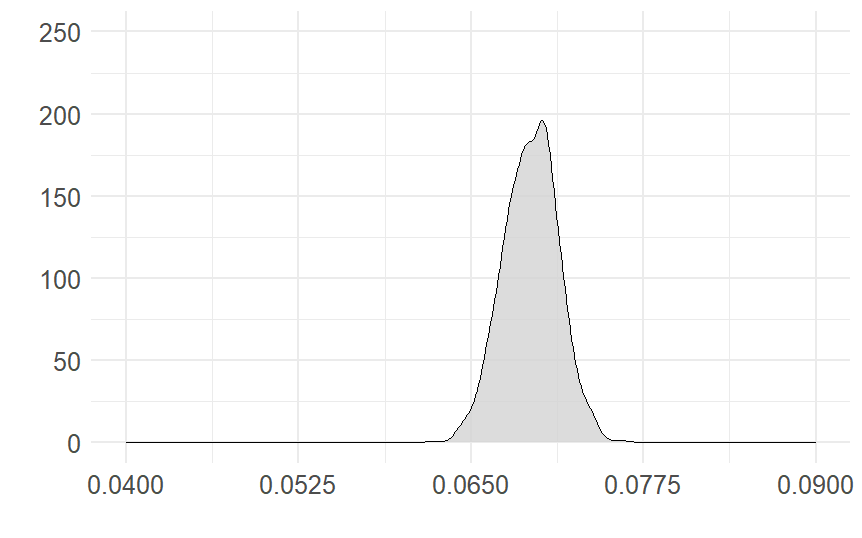}
    \caption{Torus, $n=4000$}
    \label{fig:torus-den-4k}
  \end{subfigure}
  \caption{Kernel density estimates of $\sqrt{n}\hat{\lambda}_2^\varepsilon$ at different sample sizes~$n$, computed for data sampled from an ellipse (top) with axes of size 1 and $\sqrt{2}$, a 2-sphere (middle) and a 2-torus (bottom). Here the $\eps$-graphs are constructed using $\eps = n^{-1/(m+4)}$ for the ellipse and 2-sphere, and $\eps = 6 n^{-1/(m+4)}$ for the 2-torus.}
  \label{fig:den-normal}
\end{figure}

\begin{figure}[htbp]
  \centering
  \begin{subfigure}[t]{0.32\linewidth}
    \centering
    \includegraphics[width=\linewidth]{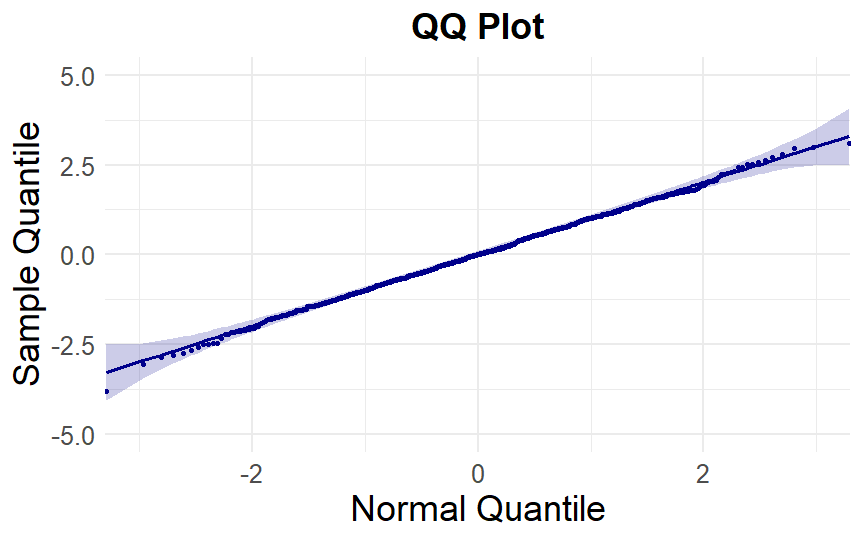}
    \caption{Ellipse, $n = 1000$}
    \label{fig:circle-qq-1k}
  \end{subfigure}\hfill
  \begin{subfigure}[t]{0.32\linewidth}
    \centering
    \includegraphics[width=\linewidth]{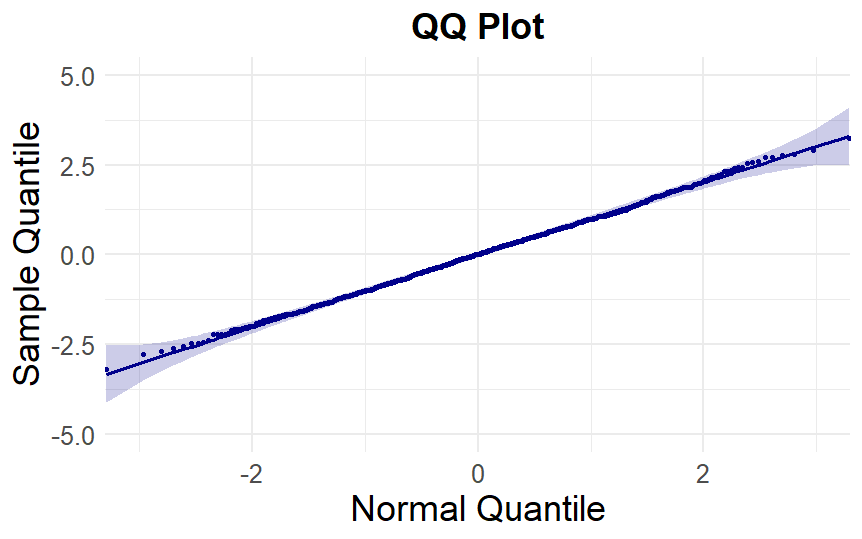}
    \caption{Ellipse, $n = 2000$}
    \label{fig:circle-qq-2k}
  \end{subfigure}\hfill
  \begin{subfigure}[t]{0.32\linewidth}
    \centering
    \includegraphics[width=\linewidth]{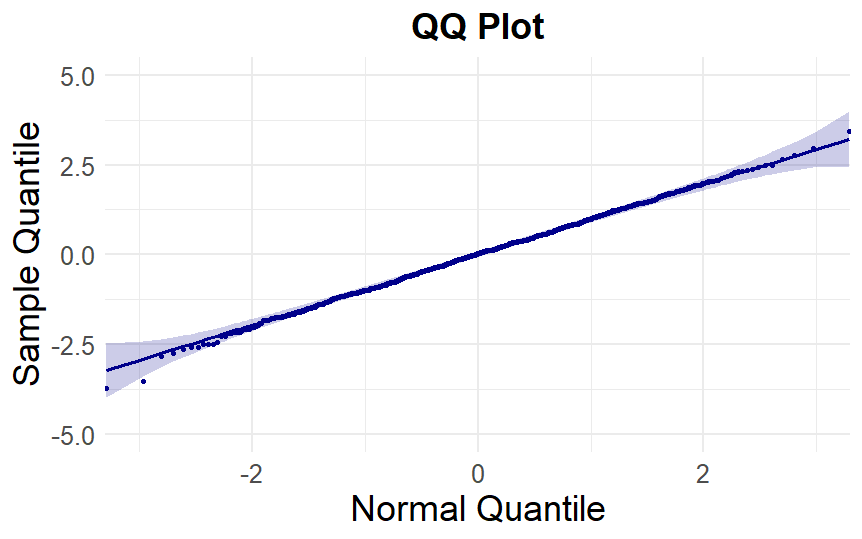}
    \caption{Ellipse, $n = 4000$}
    \label{fig:circle-qq-4k}
  \end{subfigure}
   \vspace{1em}
  \begin{subfigure}[t]{0.32\linewidth}
    \centering
    \includegraphics[width=\linewidth]{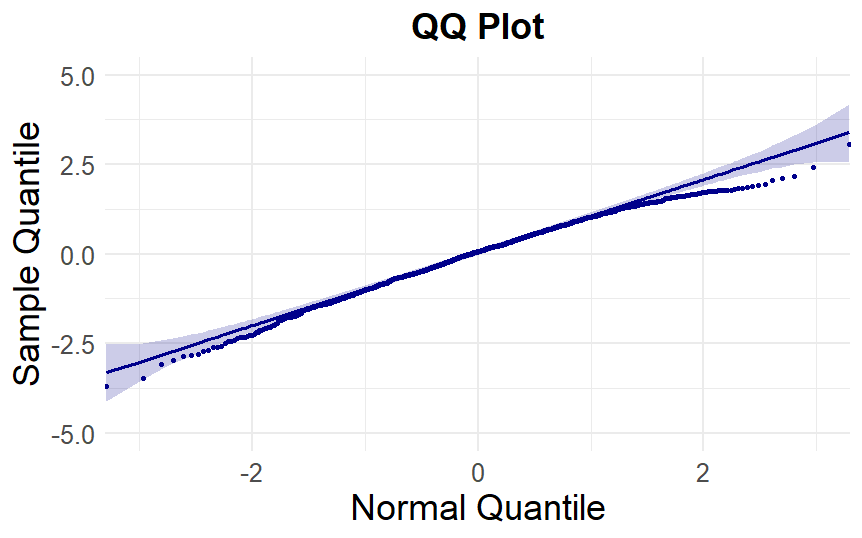}
    \caption{Sphere, $n = 1000$}
    \label{fig:sphere-qq-1k}
  \end{subfigure}\hfill
  \begin{subfigure}[t]{0.32\linewidth}
    \centering
    \includegraphics[width=\linewidth]{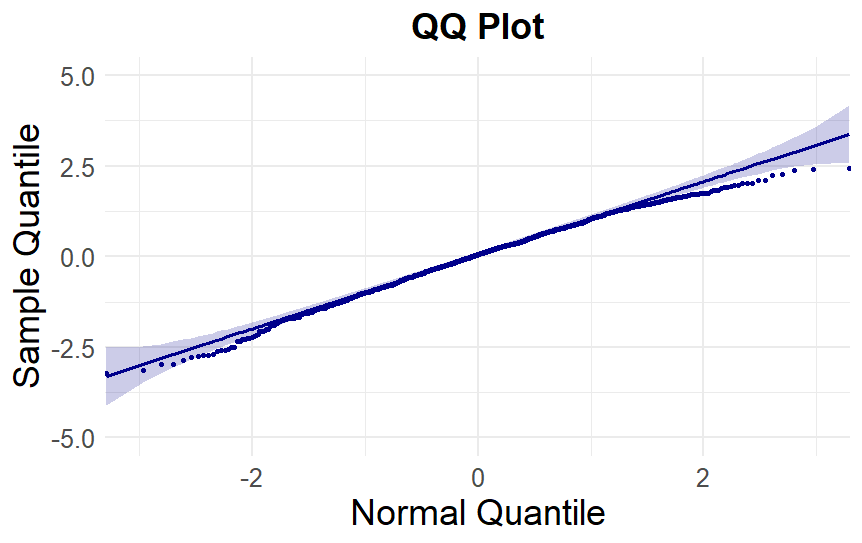}
    \caption{Sphere, $n = 2000$}
    \label{fig:sphere-qq-2k}
  \end{subfigure}\hfill
  \begin{subfigure}[t]{0.32\linewidth}
    \centering
    \includegraphics[width=\linewidth]{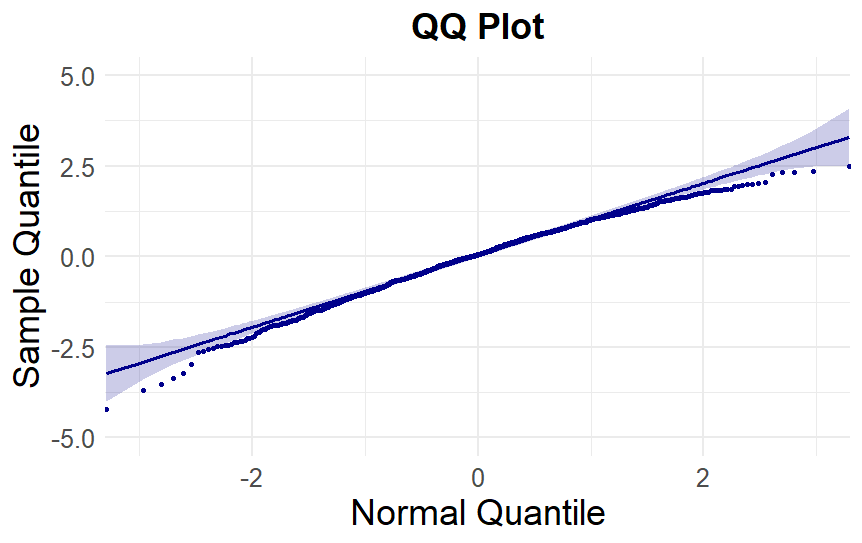}
    \caption{Sphere, $n = 4000$}
    \label{fig:sphere-qq-4k}
  \end{subfigure}
  \vspace{1em} 
  \begin{subfigure}[t]{0.32\linewidth}
    \centering
    \includegraphics[width=\linewidth]{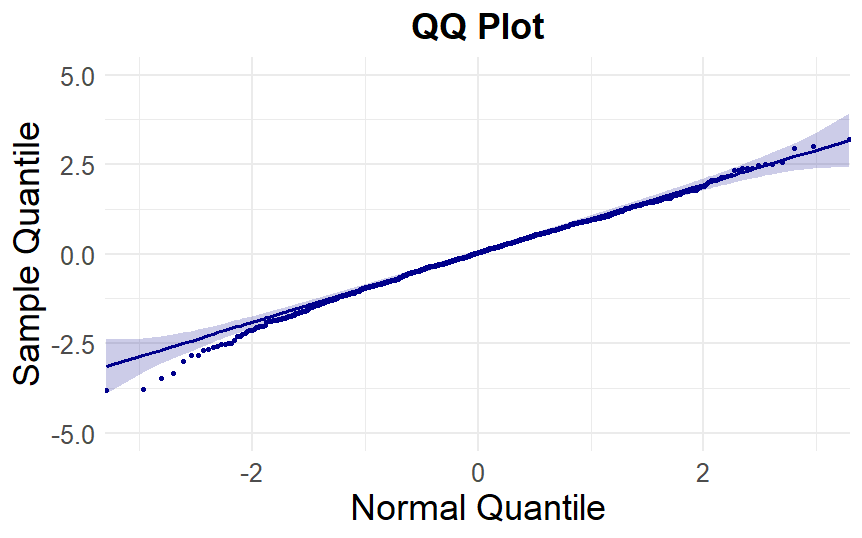}
    \caption{Torus, $n = 1000$}
    \label{fig:torus-qq-1k}
  \end{subfigure}\hfill
  \begin{subfigure}[t]{0.32\linewidth}
    \centering
    \includegraphics[width=\linewidth]{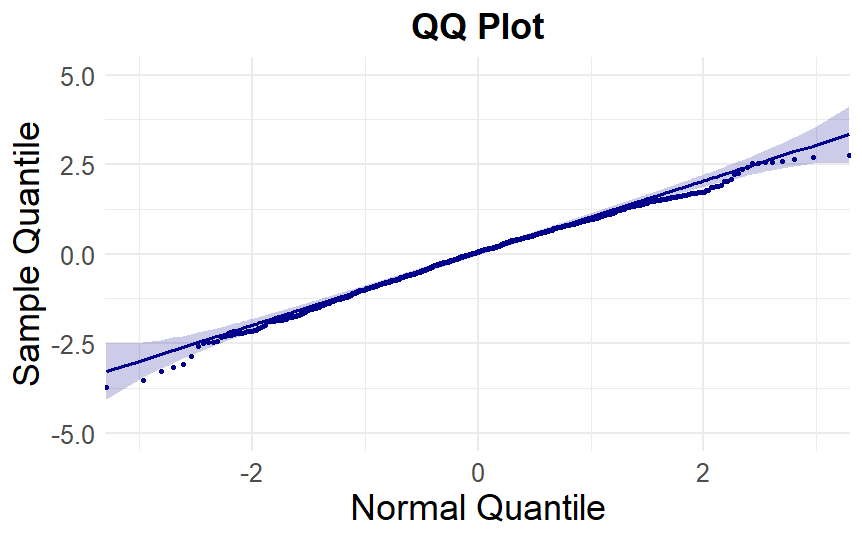}
    \caption{Torus, $n = 2000$}
    \label{fig:torus-qq-2k}
  \end{subfigure}\hfill
  \begin{subfigure}[t]{0.32\linewidth}
    \centering
    \includegraphics[width=\linewidth]{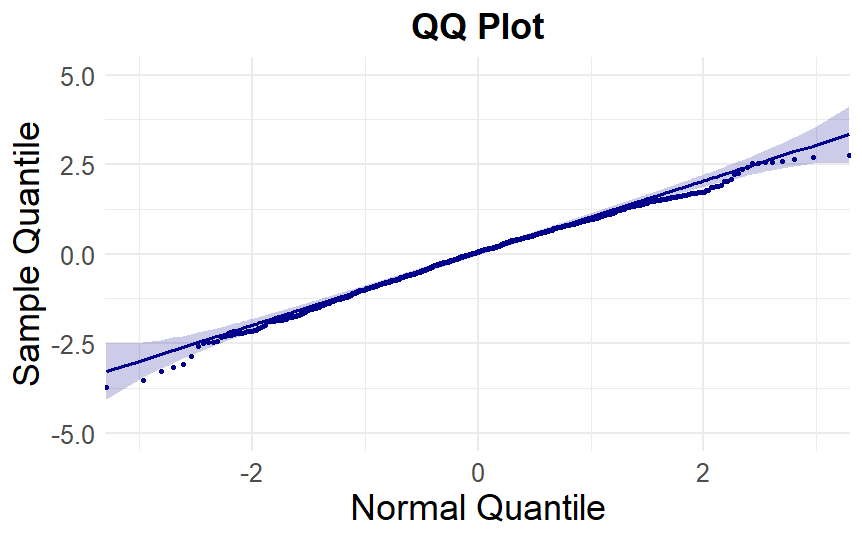}
    \caption{Torus, $n = 4000$}
    \label{fig:torus-qq-4k}
  \end{subfigure}
  \caption{QQ-plots of $\sqrt{n}\hat{\lambda}_2^\varepsilon$ versus the standard normal distribution for different sample sizes~$n$ on an ellipse (top) with radii being 1 and $\sqrt{2}$, a unit 2-sphere (middle) and a unit 2-torus (bottom). Here the $\eps$-graphs are constructed using $\eps = n^{-1/(m+4)}$ for the ellipse and 2-sphere and $\eps = 6 n^{-1/(m+4)}$ for the 2-torus.}
  \label{fig:qq-normal}
\end{figure}

As illustrated in the figures, the distribution of $\sqrt{n}\hat{\lambda}^\varepsilon_2$ tends toward normality, in accordance with our theoretical finding in \Cref{thm1}.  Unlike the ellipse, the 2-sphere and 2-torus exhibit repeated nontrivial eigenvalues in their Laplace-Beltrami operators (cf.~\Cref{tab:LB_2sphere} for the 2-sphere), thereby violating the eigengap condition stated in \cref{assum:eigengap for l}. Nevertheless, the simulation results suggest that this condition might be relaxed.


In addition, we explore the choice of $\varepsilon$ beyond the range specified in \cref{thm1}, and examine whether the finite-sample distribution of $\sqrt{n}\hat{\lambda}^\varepsilon_2$ continues to converge to a limiting distribution. In particular, we consider a small value $\varepsilon = n^{-1/(m+1)}$ and focus on the case of the 2-sphere. The QQ-plot and kernel density estimate shown in \Cref{fig:sm-eps} indicate that the normal approximation remains valid in practice. This observation suggests the potential to further extend the admissible range of $\varepsilon$ in our main theorems.

\begin{figure}[htbp]
\begin{subfigure}[t]{0.4\textwidth}
 \centering  	        \includegraphics[width=\linewidth]{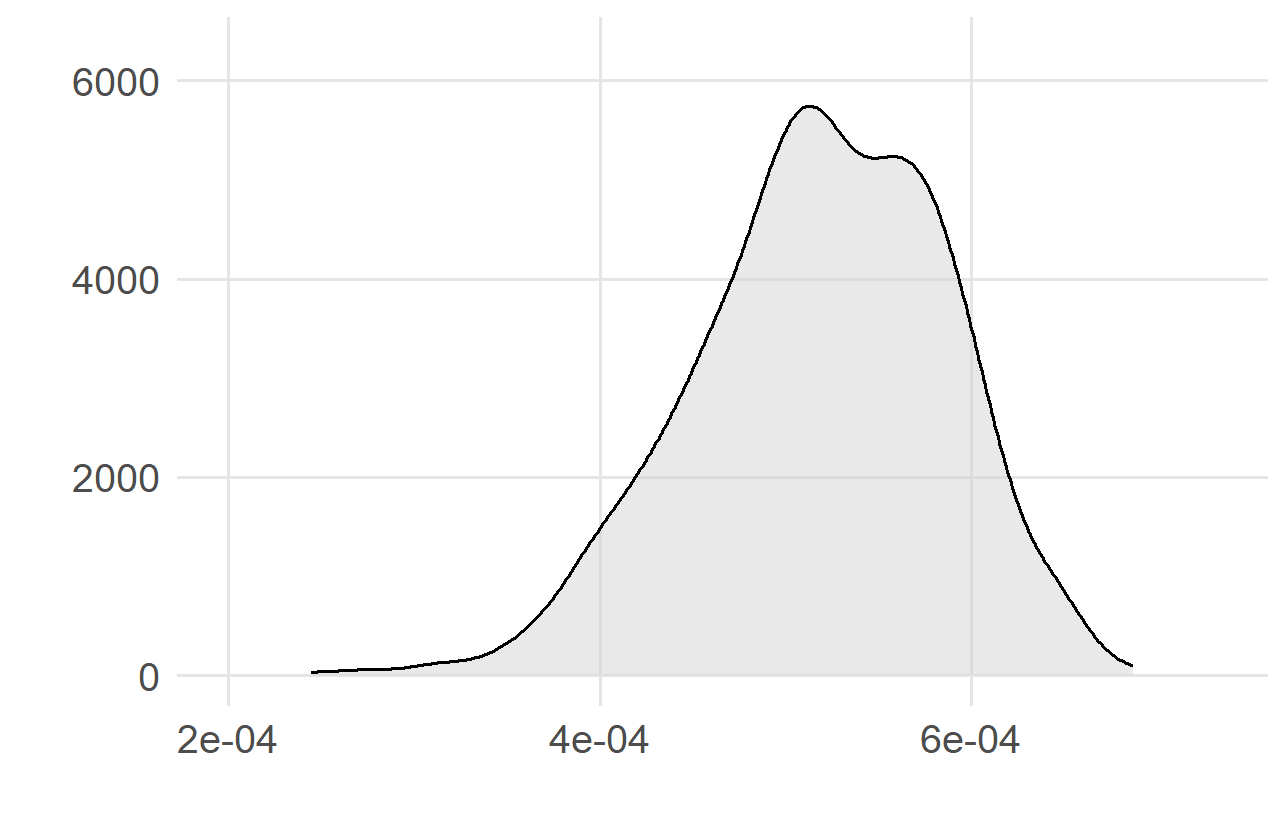}
			\caption{Kernel density estimate}
			\label{fig:den-sm-eps}
\end{subfigure}
%
%
\begin{subfigure}[t]{0.4\textwidth}
\centering
    	         \includegraphics[width=\linewidth]{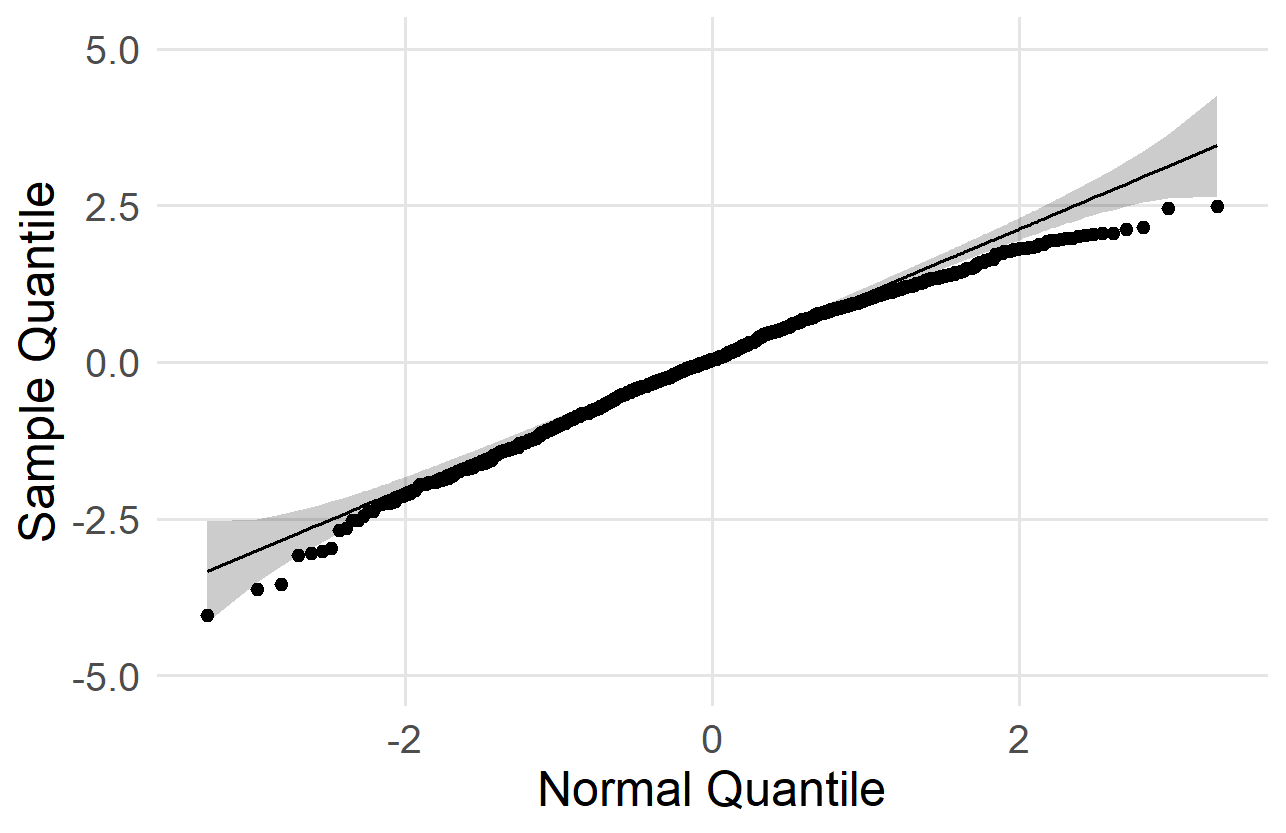}
			\caption{QQ-plot}
			\label{fig:qq-sm-eps}
\end{subfigure}
   \caption{Kernel density estimate (left) and QQ-plot against the standard normal distribution (right) of $\sqrt{n}\hat{\lambda}^\varepsilon_2$ on a 2-sphere with sample size $n = 12000$. Here the $\varepsilon$-graphs are constructed using $\varepsilon = n^{-1/(m+1)}$, which lies outside the admissible range in our main theorems.}\label{fig:sm-eps}
	\end{figure}

\subsection{Asymptotic Variance and Its Estimator}\label{ss:var} As an indicator of convergence, we investigate the finite-sample variance of $\sqrt{n}\hat{\lambda}^\varepsilon_l$ and its closeness to the asymptotic variance $\sigma_l^2$ defined in~\eqref{eq-def:sigma^2}. We focus again on the 2-sphere of unit radius and consider the smallest non-trivial eigenvalue (i.e.\ $l = 2$) of the unnormalized graph Laplacian. \Cref{fig:var-2sphere} displays how the empirical variance of $\sqrt{n}\hat{\lambda}^\varepsilon_2$ scale across varying sample sizes $n \in \{2000,4000,\dots,14000\}$. The stable empirical variance estimates across different sample sizes support the claim in \Cref{thm1}. Although getting a closed form expression for $\sigma_2^2$ is not possible in this setting, we can use numerical integration to deduce that $\sigma_2^2 \approx 12.80$ (see \cref{tab:LB_2sphere}), which is in agreement with the plot in \cref{fig:var-2sphere}.

\begin{figure}[htbp]
\centering
\includegraphics[width=.8\linewidth]{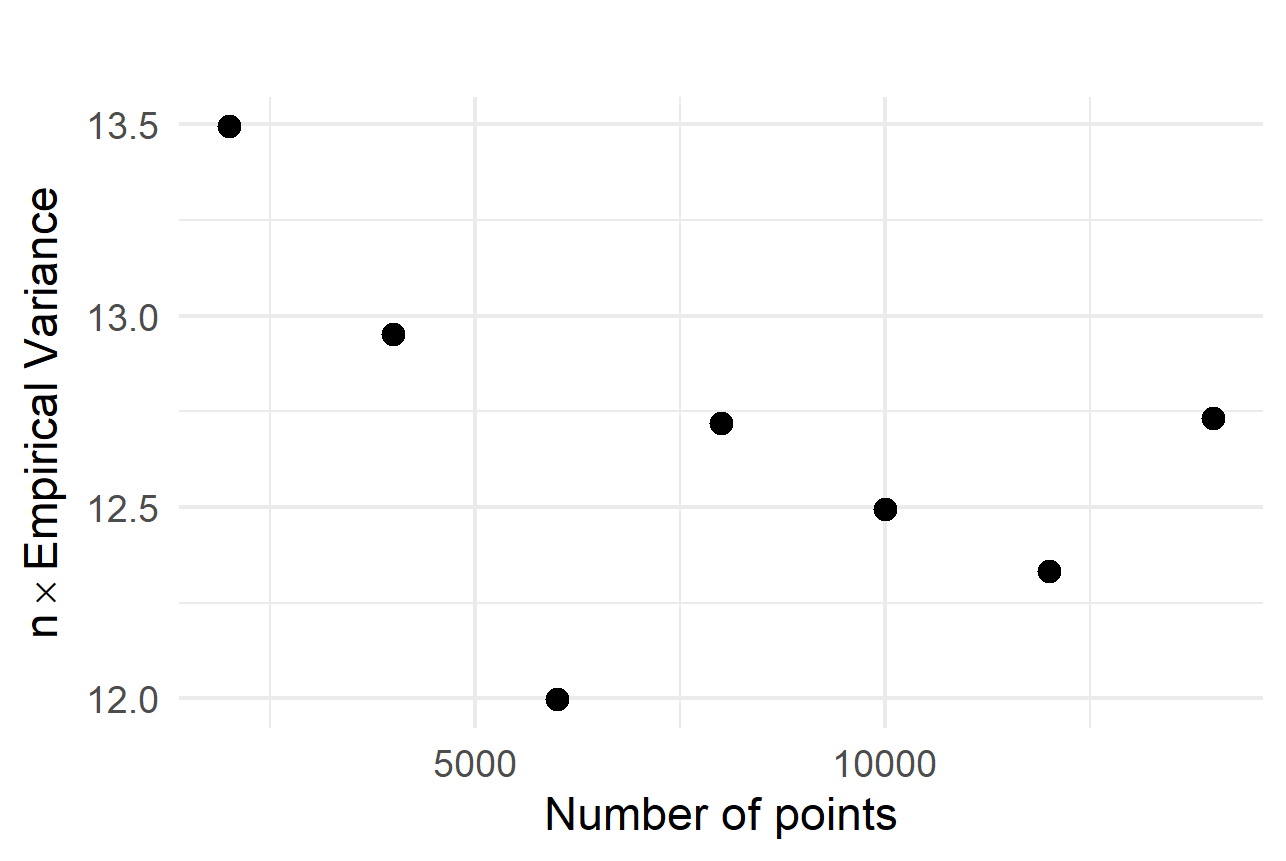}
\caption{The empirical finite-sample variance of $\sqrt{n} \hat{\lambda}^\varepsilon_2$ of graph Laplacian constructed with $n$ points drawn uniformly from a 2-sphere for the values $n=\{2000,4000,6000,8000,10000,12000,14000\}$.}
\label{fig:var-2sphere}
\end{figure}

To enable a more precise comparison of the two variances (the estimated one and the theoretical one), we consider the example of a 1-sphere (i.e., a unit circle) of unit radius, which permits exact computation of the asymptotic variance of the smallest non-trivial eigenvalue (namely, $\hat{\lambda}^\varepsilon_2$) of the graph Laplacian. In this setting, elementary calculations yield the scaling factor $\sigma_\eta = 2/3$ for the kernel $\eta = \mathbf{1}_{[0,1]}$ (which we have used throughout), the smallest non-trivial eigenvalue of the weighted Laplace-Beltrami operator $\lambda_2 = 1/(6\pi)$, and the corresponding eigenfunction $u_2(\theta) = \sqrt{2} \sin \theta$ for $\theta \in [0, 2\pi]$, which is a particular case of the spherical harmonics for the circle. From the definition in \eqref{eq-def:sigma^2}, it follows that the asymptotic variance is $\sigma_2^2 = 81/(8\pi^2)$. In \Cref{fig:var-circle}, we plot the ratio of the finite-sample variance of $\sqrt{n}\hat{\lambda}^\varepsilon_2$ to its asymptotic counterpart $\sigma_2^2$ for various sample sizes $n \in \{1000, 2000, \ldots, 7000\}$.  The resulting curve exhibits a clear pattern of convergence, with the finite-sample and asymptotic variances aligning closely for sample sizes $n\ge 5000$. This observation reflects a commendable rate of convergence toward the derived theoretical limit. 

\begin{figure}[htbp]
		\begin{subfigure}[t]{0.45\textwidth}
			\centering
    	         \includegraphics[width=\linewidth]{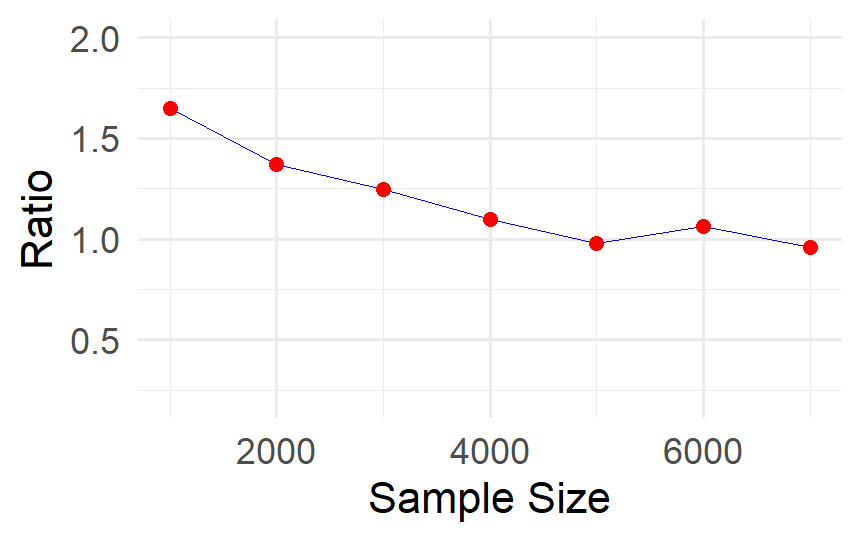}
			\caption{$n\mathrm{Var}(\hat{\lambda}^\varepsilon_2)/\sigma_2^2$}
			\label{fig:var-circle}
			\end{subfigure}
            \begin{subfigure}[t]{0.45\textwidth}
			\centering
    	         \includegraphics[width=\linewidth]{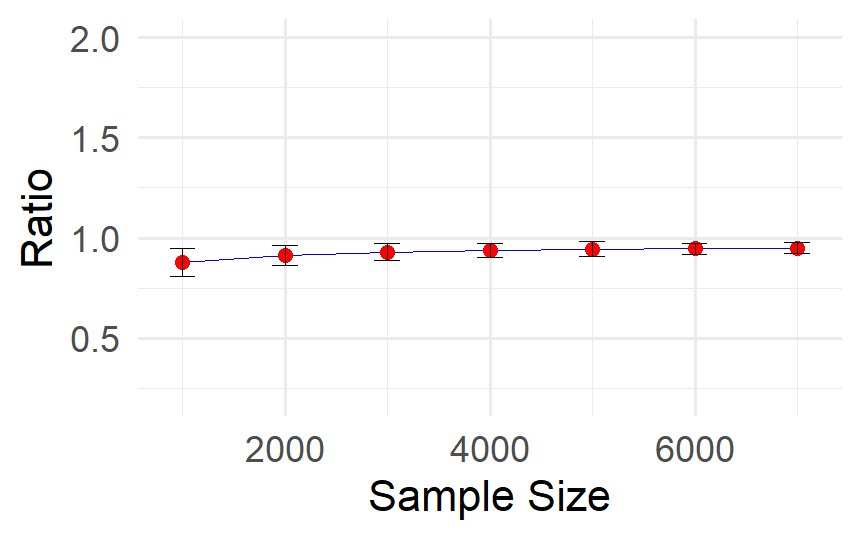}
			\caption{$\hat{\sigma}_2^2/\sigma_2^2$}
			\label{fig:est-circle}
			\end{subfigure}
   \caption{Ratios of the empirical finite-sample variance of $\sqrt{n}\hat{\lambda}^\eps_2$  (left) and of the variance estimator $\hat{\sigma}_2^2$ (right), relative to the asymptotic variance $\sigma_2^2$, evaluated on the unit circle. In the right panel, the curve represents the median over 100 repetitions, and the error bars denote the median absolute deviation.}\label{fig:circle}
\end{figure}

As shown above, the asymptotic variance $\sigma_l^2$ provides a reasonable assessment of the variability of $\hat\lambda_l^\eps$ when the sample size is not too small. However, the computation of  $\sigma_l^2$ requires knowledge of the underlying manifold and the associated probability density $\rho$, both of which are typically unavailable in practice. To address this limitation, we introduced the estimator $\hat\sigma_l^2$ in  subsection \ref{sec:extension}. Continuing with the experimental setup on the unit circle, we examine the performance of $\hat\sigma_2^2$ as defined in \eqref{eq-def:hat variance} in approximating the asymptotic variance $\sigma_2^2$; see \Cref{fig:est-circle}. Remarkably, it performs well even for small sample sizes. This suggests that $\hat\sigma_l/\sqrt{n}$ can be used to estimate the finite-sample standard deviation of $\hat\lambda_l^\eps$ in practical applications.

\subsection{Centralization and Dimension Restriction}\label{ss:dim} Recall from the discussion in \Cref{ss:bias} that the validity of asymptotic distribution of $\hat\lambda_l^\eps$ when centering at its continuum counterpart $\lambda_l$, instead of its expectation $\E[\hat\lambda_l^\eps]$, seems to require an additional restriction on the intrinsic dimension, namely, $m \le 3$. 
Here we  use simulations to investigate whether this restriction arises from technical limitations in the proof or reflects a fundamental characteristic of the problem. Note the decomposition
\[
\sqrt{n}\left(\hat\lambda_l^\eps - \lambda_l\right) = \sqrt{n}\left(\hat\lambda_l^\eps - \E[\hat{\lambda}_l^\eps]\right) +\sqrt{n}\left (\E[\hat{\lambda}_l^\eps]-  \lambda_l\right).
\]
One natural experiment is to examine the relation between the absolute bias $\sqrt{n}\bigl|\E[\hat{\lambda}_l^\eps]-\lambda_l\bigr|$ and the standard deviation of  $\sqrt{n}\hat\lambda_l^\eps$ across different dimensions $m$. To this end, we consider the case of $l=2$ on a $m$-sphere, with varying sample sizes $n \in \{1024, 2048, 4096, 8192, 16384\}$. We estimate $\E[\hat{\lambda}_2^\eps]$ and its standard deviation via their empirical counterparts over 100 independent repetitions. \Cref{fig:bias_std_scale,fig:bias_std_scale_eps=m+0.5} present the results for $\eps$-graphs with $\eps = n^{-\frac{1}{3.5}}$ and $\eps=n^{-\frac1{m+0.5}}$, respectively. Both figures reveal that the dimension restriction discussed in the introduction does seem to be an inherent aspect of our estimation problem. When $\eps\propto n^{-\frac{1}{3.5}}$, the scaled bias decreases in $n$ when $m=1$, and increases when $m\ge 2$. When $\eps \propto n^{-\frac{1}{m+0.5}}$, the scaled bias increases even when $m=1$. The empirical standard deviation of $\sqrt{n}\hat{\lambda}_2$ is, on the other hand, relatively stable in all of those examples, which suggests that, when centralizing over $\E[\hat{\lambda}_2]$, the CLTs for eigenvalues of the graph Laplacian should hold.

\begin{figure}[htbp]
    \centering
     \begin{subfigure}[t]{0.24\textwidth}
        \centering
        \includegraphics[width=\linewidth]{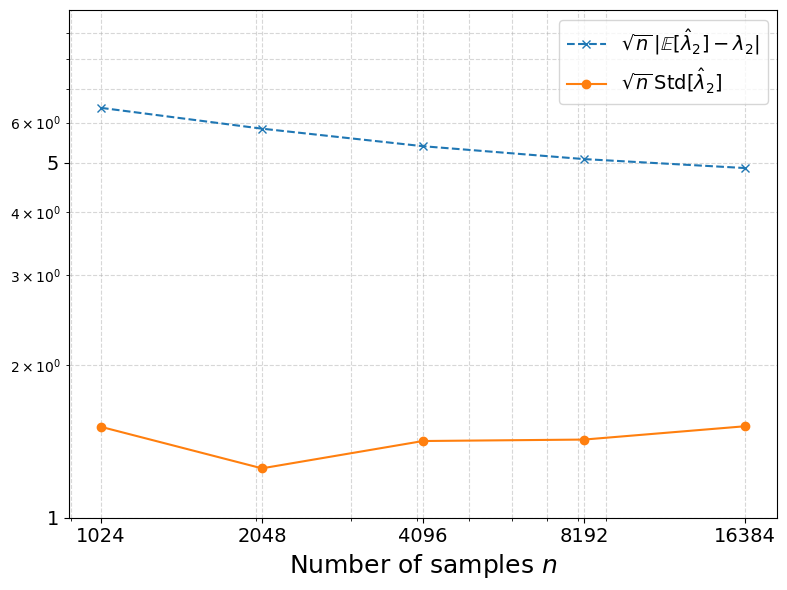}
        \caption{$m=1$}
        \label{fig:large_eps_m1}
    \end{subfigure}
    \hfill
    \begin{subfigure}[t]{0.24\textwidth}
        \centering
        \includegraphics[width=\linewidth]{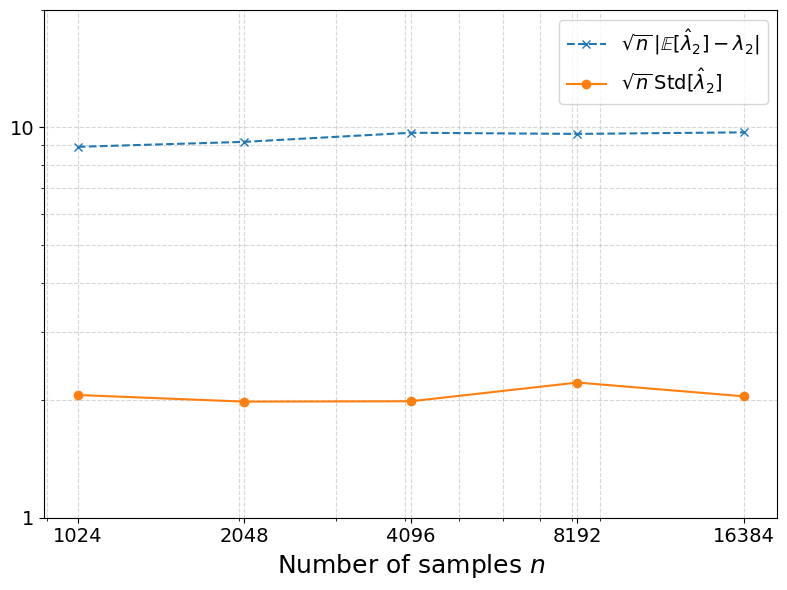}
        \caption{$m=2$}
        \label{fig:large_eps_m2}
    \end{subfigure}
    \hfill
    \begin{subfigure}[t]{0.24\textwidth}
        \centering
        \includegraphics[width=\linewidth]{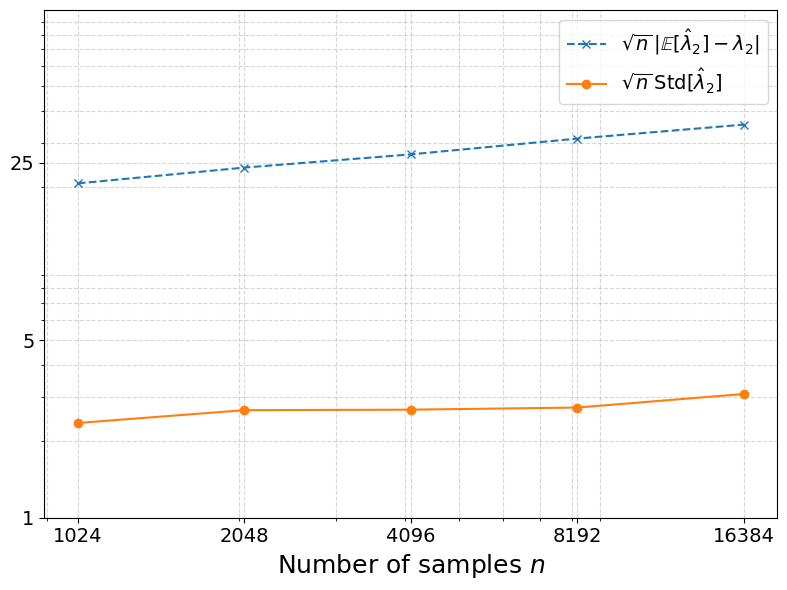}
        \caption{$m=3$}
        \label{fig:large_eps_m3}
    \end{subfigure}
    \hfill
    \begin{subfigure}[t]{0.24\textwidth}
        \centering
        \includegraphics[width=\linewidth]{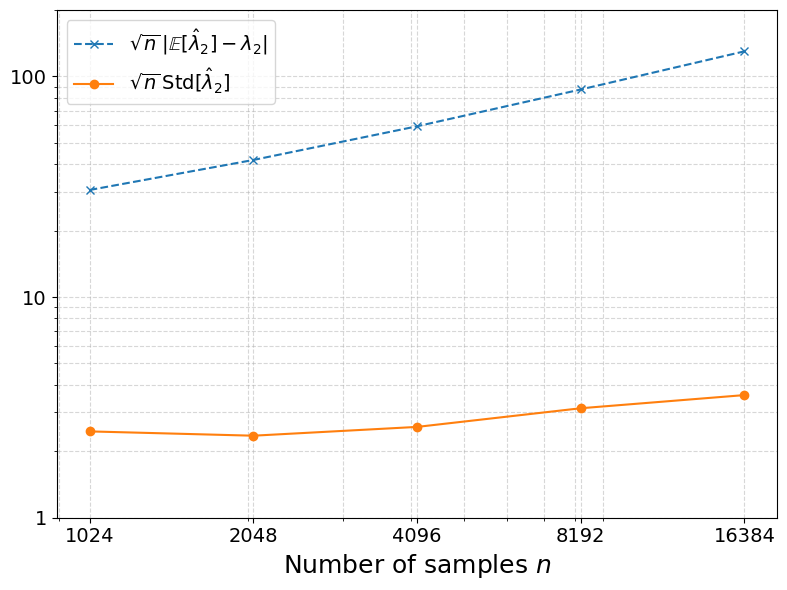}
        \caption{$m=4$}
        \label{fig:large_eps_m4}
    \end{subfigure}

    \caption{Relation of bias $\sqrt{n}|\E[\hat{\lambda}_2^\eps]-\lambda_2|$ (blue dashed) and standard deviation of $\sqrt{n}\hat{\lambda}_2^\eps$ (orange solid) on an $m$-sphere when $\eps=4n^{-\frac{1}{3.5}}$ for $m=1,2,3,4$ (left to right).}
    \label{fig:bias_std_scale}
\end{figure}

\begin{figure}[htbp]
    \centering
    \begin{subfigure}[t]{0.24\textwidth}
        \centering
        \includegraphics[width=\linewidth]{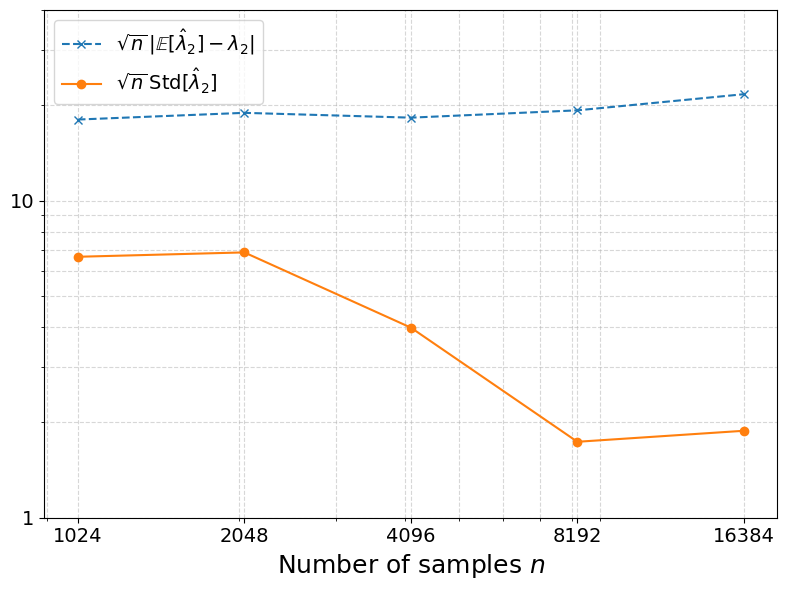}
        \caption{$m=1$}
        \label{fig:small_eps_m1}
    \end{subfigure}
    \hfill
    \begin{subfigure}[t]{0.24\textwidth}
        \centering
        \includegraphics[width=\linewidth]{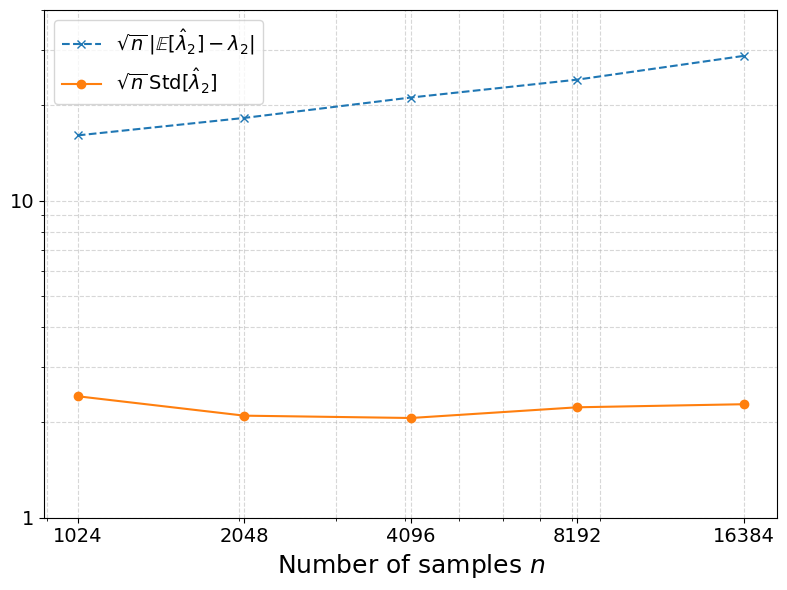}
        \caption{$m=2$}
        \label{fig:small_eps_m2}
    \end{subfigure}
    \hfill
    \begin{subfigure}[t]{0.24\textwidth}
        \centering
        \includegraphics[width=\linewidth]{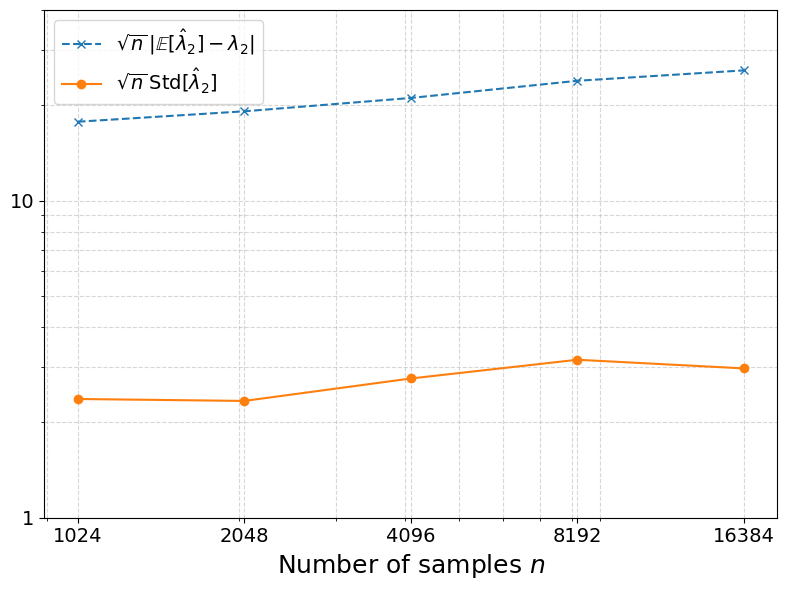}
        \caption{$m=3$}
        \label{fig:small_eps_m3}
    \end{subfigure}
    \hfill
    \begin{subfigure}[t]{0.24\textwidth}
        \centering
        \includegraphics[width=\linewidth]{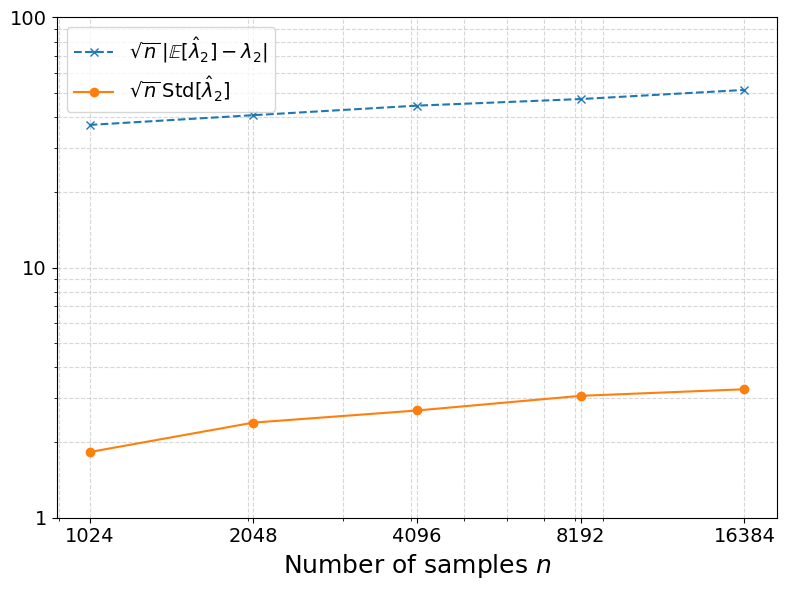}
        \caption{$m=4$}
        \label{fig:small_eps_m4}
    \end{subfigure}
    \caption{Relation of bias \(\sqrt{n}\bigl|\E[\hat{\lambda}_2^\eps]-\lambda_2\bigr|\) (blue dashed) and standard deviation of \(\hat{\lambda}_2^\eps\)  (orange solid) on an \(m\)-sphere when \(\eps=5n^{-\frac1{m+0.5}}\), for \(m=1,2,3,4\) (left to right).}
    \label{fig:bias_std_scale_eps=m+0.5}
\end{figure}


As a reconfirmation of our theoretical results, we further examine the asymptotic normality of $\sqrt{n}\hat\lambda_l^\eps$ in higher dimensions. Specifically, we consider an 8-sphere and assess how closely the finite-sample distribution of $\sqrt{n}\hat\lambda_l^\eps$, for $\eps=n^{-\frac{1}{m+4}}$, approximates a standard normal distribution. For a quantitative evaluation, we employ the Shapiro-Wilk test for normality. With sample sizes $n \in \{16000, 22000, 28000, 32000\}$, the corresponding $p$-values are $2.2 \cdot 10^{-16},\ 2.7 \cdot 10^{-16},\ 2.9 \cdot 10^{-9}$ and $0.26$, respectively. These results indicate that, as the sample size $n$ increases, the distribution of $\sqrt{n}\hat\lambda_l^\eps$ converges toward normality. In particular, for $n = 32000$, the Shapiro-Wilk test will not reject the hypothesis of normality, providing empirical support for asymptotic normality at higher dimensions.

\subsection{Structure of Asymptotic Covariance}\label{ss:cov} In this subsection, we explore the structure of asymptotic covariance matrix defined in \Cref{thm:multi normal distribution}, where the eigengap condition (in \Cref{assum:multiple CLT}) is not fulfilled. Unlike the previous subsections, we employ numerical methods instead of Monte-Carlo simulations. 

We consider the case of $2$-sphere, with $\rho$ taken as the uniform distribution. Due to the multiplicity of eigenvalues, we associate the asymptotic variance $\sigma_l^2$ with individual eigenfunctions rather than eigenvalues. We use the polar coordinate and discretize the whole sphere into a $20000\times 40000$ grid. We compute the discrete gradient on this grid and consider the usual spherical harmonics $Y_k^m$, for $k = 0,1,\ldots$ and $m = -k,\ldots, 0, \ldots, k$, which are eigenfunctions of the Laplace-Beltrami operator. The first 42 variances, ordered by the indices $k$ and $m$ corresponding to the eigenfunctions $Y_k^m$, are reported in \cref{tab:LB_2sphere}.

\begin{table}[h!]
\centering{\footnotesize
\begin{tabular}{c c c}
\toprule
Degree & Eigenvalue& Asymptotic variance associated with eigenfunction \( Y_k^m \)  \\
 \( k \)  &  \(  k(k+1) \)  & scaled by $\bigl(\frac{25.275367}{{\pi}/{3}}\bigr)^2$ for \( m = -k, \ldots, 0 \) \\
\midrule
0 & 0 & \( 0 \) \\
1 & 2 & \( 12.800,12.800  \) \\
2 & 6 & \( 144.003, 144.001, 144.000 \) \\
3 & 12 & \( 760.635, 502.162, 719.264, 667.569\) \\
4 & 20 & \( 2587.584, 1571.661, 1592.385, 2317.991, 2056.767  \) \\
5 & 30 & \( 6791.030, 4087.567 , 3586.921, 4087.523, 5803.878,5002.905 \) \\
6 & 42 & \( 15045.945, 9149.534, 7507.502, 7590.121, 8923.161, 12345.352, 10429.092 \) \\
\bottomrule
\end{tabular}}
\caption{Asymptotic ``variances" corresponding to the first few eigenfunctions of the Laplace-Beltrami operator on a 2-sphere. Note that the variance associated with $Y_k^m$ equals that of $Y_{k}^{-m}$.}
\label{tab:LB_2sphere}
\end{table}

A direct observation is that eigenfunctions corresponding to the same eigenvalue do not necessarily share the same associated asymptotic variance (again, the latter interpreted as a direct evaluation of the expression for $\sigma_l^2$ at different eigenfunctions). This is consistent with the formula given in \cref{thm1}, which indicates that the expressions for the asymptotic variances associated with $Y_k^m$ for $k=0,\dots, k$ need not be equal. We also observe that applying a random rotation to the eigenfunctions  $\{Y_{k}^m\}_{m = -k}^k$ for $m=-k,\dots, l$ results in different values. This dependence on the specific choice of eigenfunctions complicates the formulation of a unique limiting distribution in general. However, our numerical computations suggest an interesting invariance: the sum of the asymptotic variances corresponding to a given eigenvalue appears to remain constant, regardless of the basis chosen for the associated eigenspace.

To further examine the strength of correlations among empirical eigenvalues $\hat\lambda_l^\eps$, we visualize the asymptotic covariance and correlation matrices of the first 49 eigenvalues (counting multiplicities in the way suggested earlier) on the unit $2$-sphere equipped with uniform measure; see \Cref{fig:heatmap2sphere}. The indices of asymptotic variances are listed according to the lexicographic order for the eigenfunctions, which lists eigenfunctions as $Y_{1,1}, Y_{2,-1},Y_{2,0}, Y_{2,1},\dots, Y_{7,6}$. In the resulting heatmaps, each block corresponds to an eigenspace associated with a distinct eigenvalue. The visualizations reveal that correlations within the same eigenspace tend to be stronger than those between different eigenspaces.

\begin{figure}[htbp]
 \centering
 \begin{subfigure}[t]{0.48\textwidth}
\includegraphics[width=\linewidth]{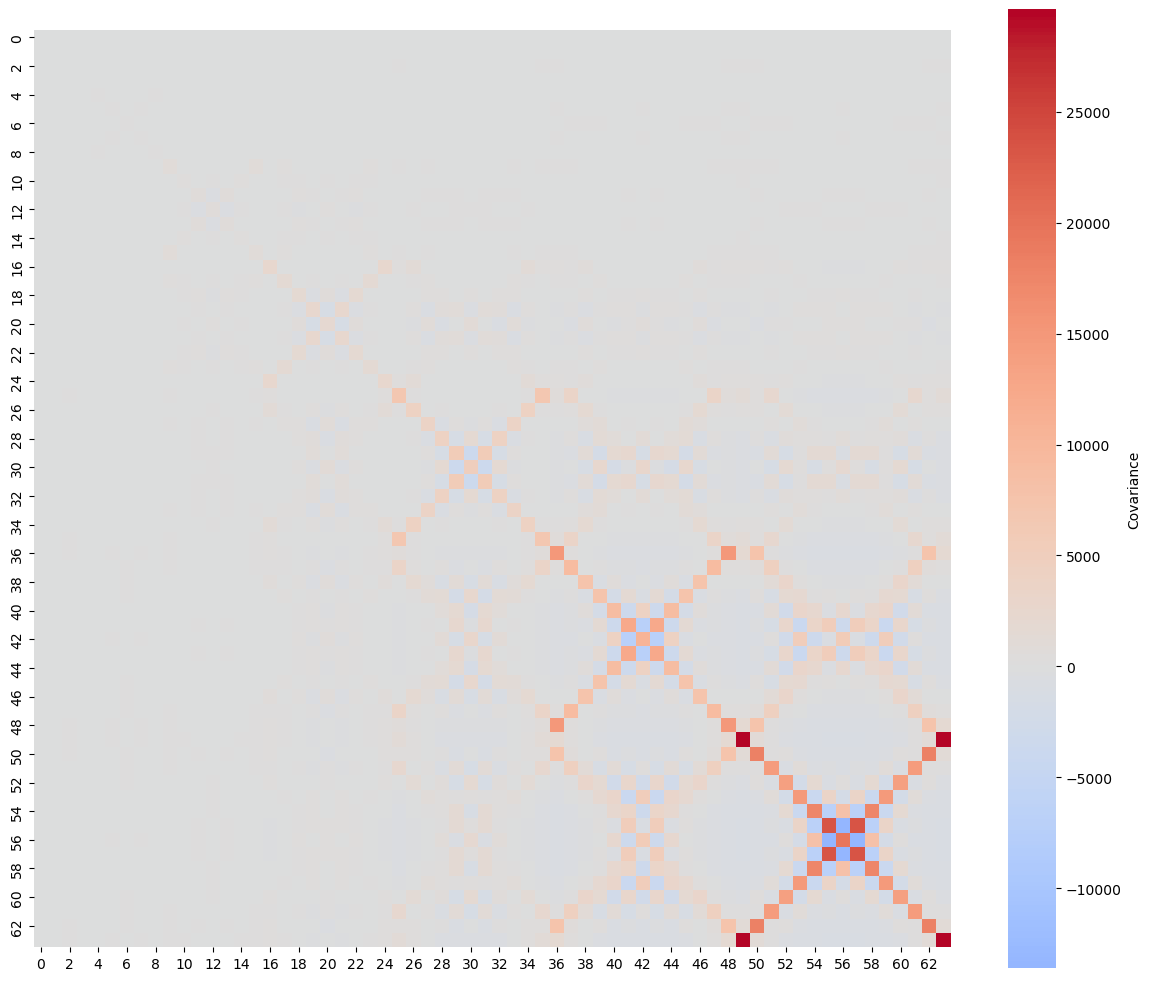}
\caption{Asymptotic covariance}
\label{fig:cov2sphere}
\end{subfigure}
 \begin{subfigure}[t]{0.48\textwidth}
\includegraphics[width=\linewidth]{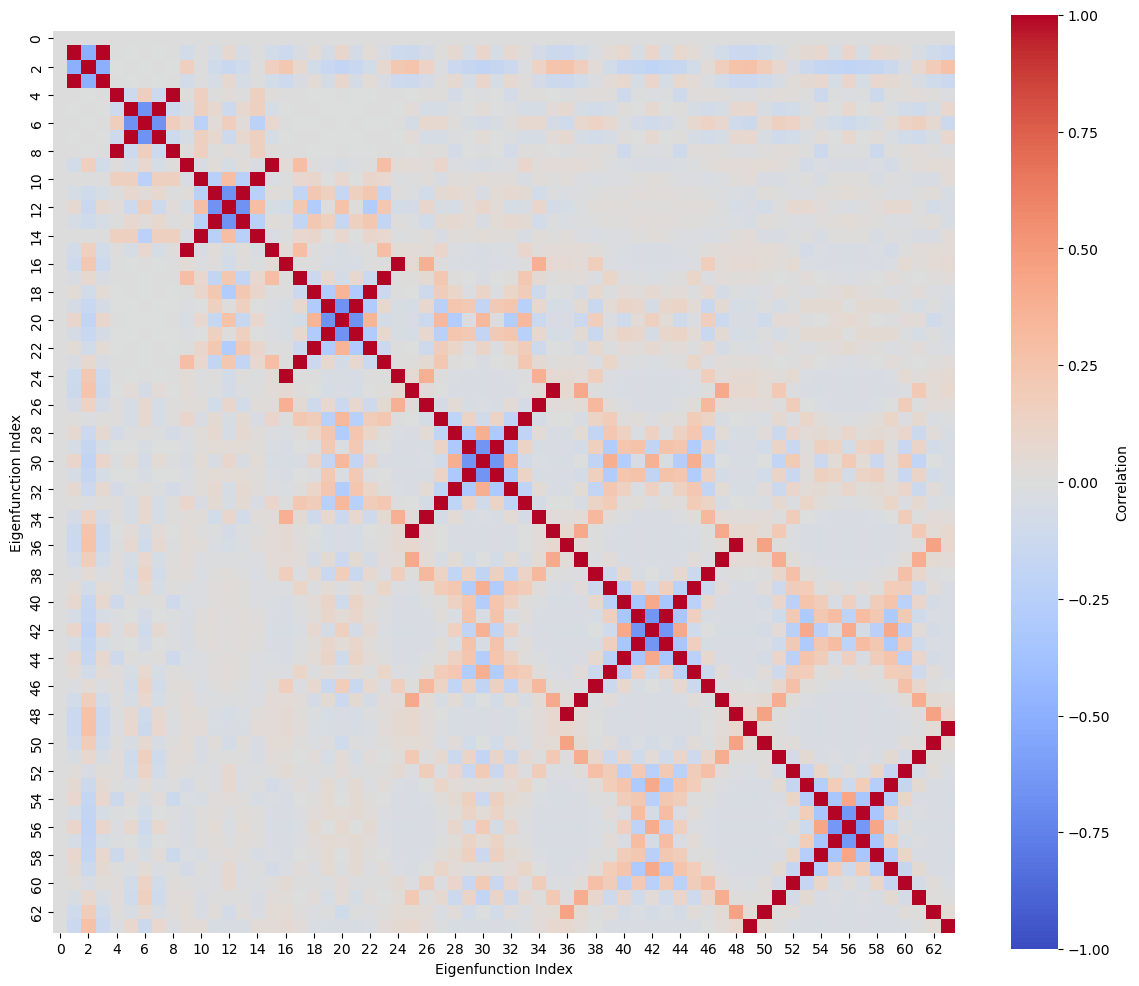}
\caption{Asymptotic correlation}
\label{fig:cor2sphere}
\end{subfigure}
\caption{Asymptotic covariance (left) and correlation (right) matrices associated with the first 63 eigenfunctions of the Laplace-Beltrami operator on a unit $2$-sphere. }
\label{fig:heatmap2sphere}
\end{figure}

For comparison, we also present the corresponding covariance and correlation matrices for the unit circle and the unit 2-torus in \Cref{fig:heatmap1sphere,fig:heatmap2torus}, respectively. In both cases, asymptotic correlation appears only within eigenspaces associated with the same eigenvalue. Interestingly, for the circle, the correlations are always nonpositive, either zero or close to $-1$, whereas for the torus, they are always nonnegative, either zero or close to $+1$.

\begin{figure}[htbp]
 \centering
 \begin{subfigure}[t]{0.48\textwidth}
\includegraphics[width=\linewidth]{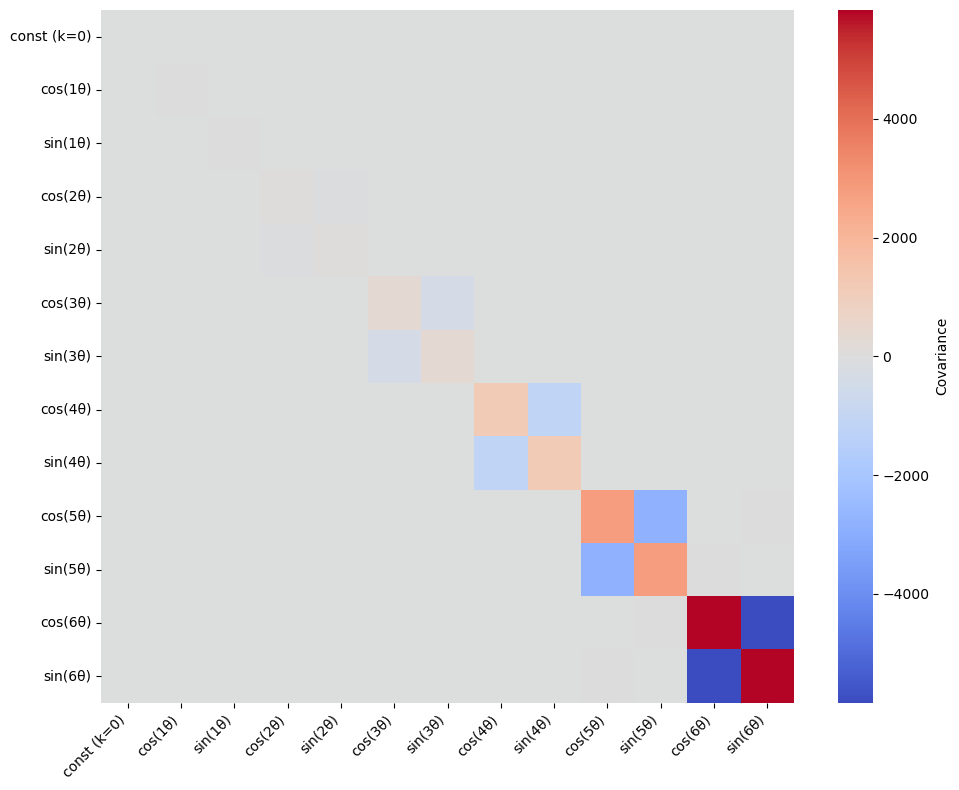}
\caption{Asymptotic covariance }
\label{fig:cov1sphere}
\end{subfigure}
 \begin{subfigure}[t]{0.48\textwidth}
\includegraphics[width=\linewidth]{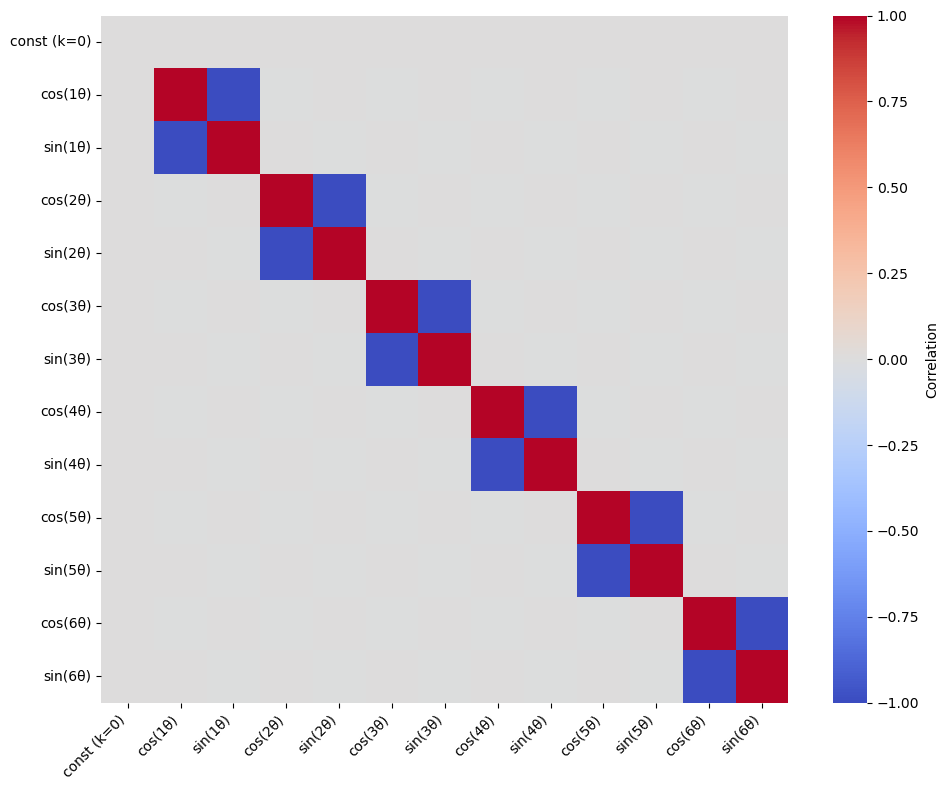}
\caption{Asymptotic correlation}
\label{fig:cor1sphere}
\end{subfigure}
\caption{Asymptotic covariance (left) and correlation (right) matrices associated with the first 13 eigenfunctions of the Laplace-Beltrami operator on a  unit circle. }
\label{fig:heatmap1sphere}
\end{figure}

\begin{figure}[htbp]
 \centering
 \begin{subfigure}[t]{0.48\textwidth}
\includegraphics[width=\linewidth]{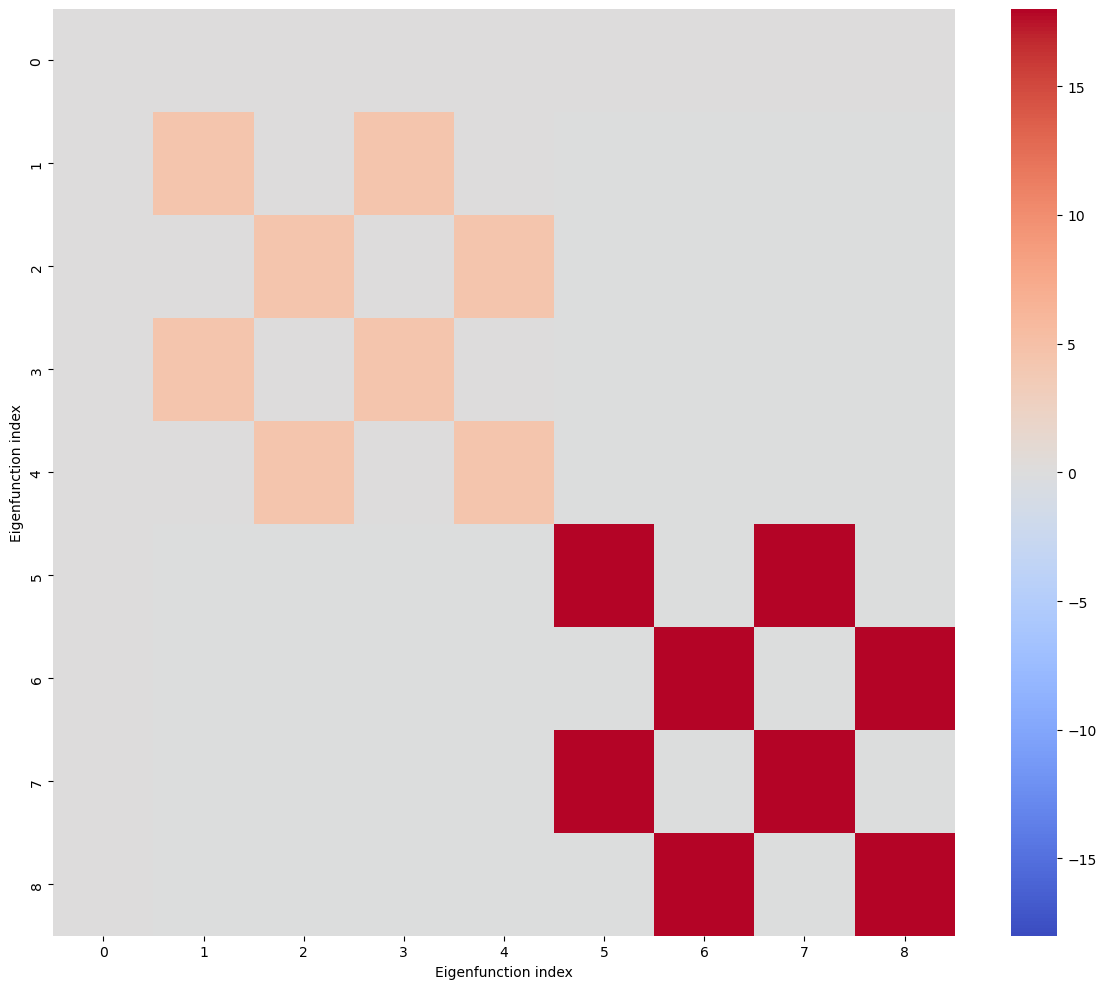}
\caption{Asymptotic covariance}
\label{fig:cov2torus}
\end{subfigure}
 \begin{subfigure}[t]{0.48\textwidth}
\includegraphics[width=\linewidth]{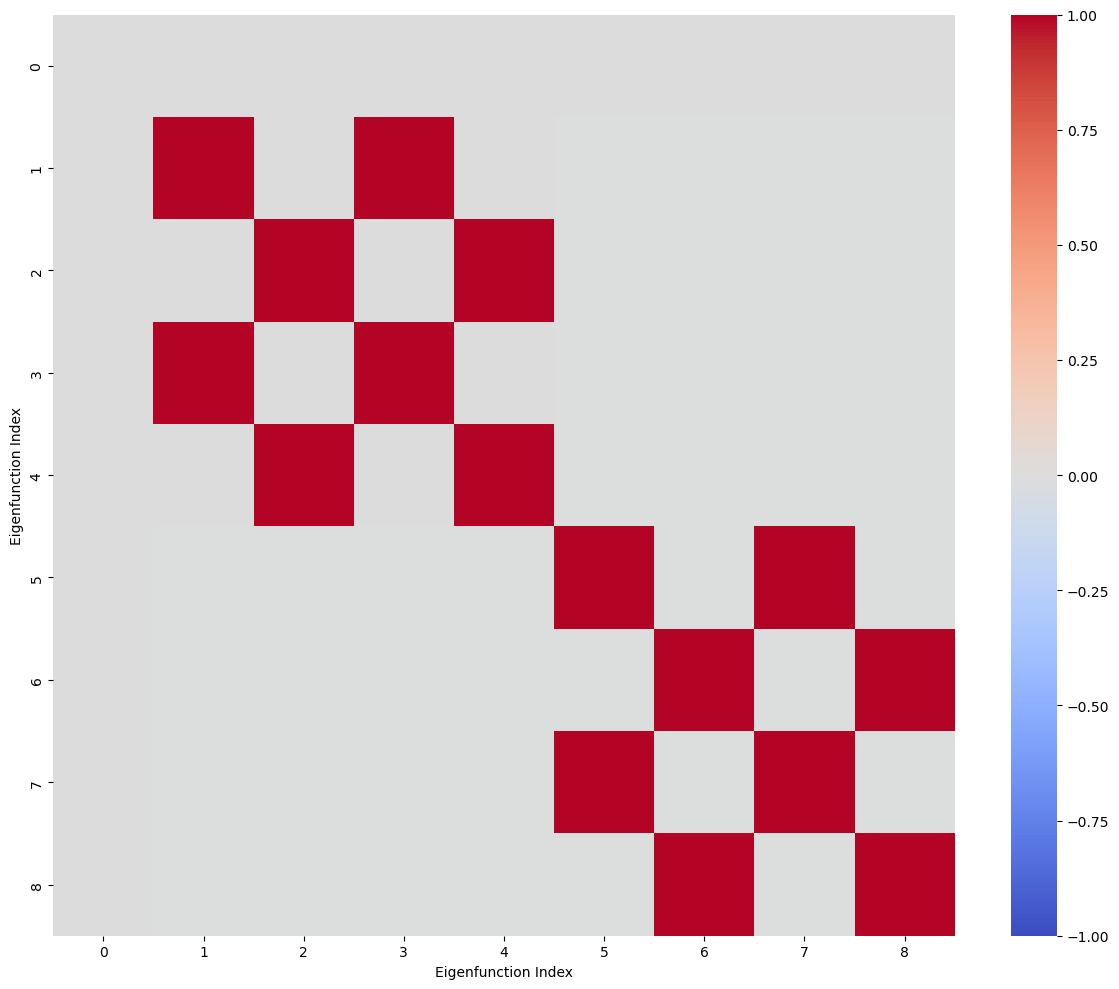}
\caption{Asymptotic correlation}
\label{fig:cor2torus}
\end{subfigure}
\caption{Asymptotic covariance (left) and correlation (right) matrices  associated with the first 9 eigenfunctions of the Laplace-Beltrami operator on a unit $2$-torus.
}
\label{fig:heatmap2torus}
\end{figure}

These observations suggest that the structure of asymptotic correlations is intimately connected to the underlying geometry of the manifold $\M$. A systematic investigation of this relationship presents an intriguing direction for future research.

\section{Conclusions}
\label{sec:Conclusions}

In this paper, we have established CLTs for the eigenvalues of graph Laplacians on data clouds under suitable assumptions on the connectivity parameter $\veps$ and on the data generating model. We have also provided geometric and statistical interpretations for the asymptotic variance appearing in these CLTs. In particular, we have shown that the asymptotic variances of the eigenvalues of the graph Laplacian can be interpreted as statistical lower bounds for the estimation of eigenvalues of certain differential operators. 

Several open questions are motivated by our work. First, we believe that it would be interesting to establish CLTs like the ones presented in this paper but in the case where the eigengap assumption at the population level fails. Our numerical experiments suggest that, even when this eigengap assumption is dropped, we should expect a CLT, although with a different expression for the variance. Secondly, it would be interesting to establish necessary and sufficient conditions under which the CLTs presented in this paper can be centralized around the eigenvalues of the limiting differential operators and not around the expectations of the eigenvalues of the graph Laplacian. As we discussed in subsection \ref{sec:bias}, and explored numerically in subsection \ref{sec:simulation}, we have reasons to believe that the dimension restriction $m \leq 3$ is necessary. Thirdly, it would be interesting to further explore the Cramer-Rao type lower bounds that we established in this paper, potentially extending them to estimators that are not necessarily unbiased. In particular, it would be interesting to prove, in a concrete and rigorous way, that eigenvalues of the graph Laplacian are statistically efficient within a reasonable class of estimators. Fourthly, numerical results in \cref{ss:cov} suggest a connection between the asymptotic correlations and the geometry of the underlying manifold, which merits further exploration. Finally, investigating CLTs for \textit{eigenvectors} of graph Laplacians on proximity graphs remains an interesting and unexplored problem.

To conclude, we would like to highlight that, while this paper has focused exclusively on analyzing the unnormalized graph Laplacian, our proof strategy is quite general and we expect a similar analysis to be applicable to other versions of the graph Laplacian. Particular examples of such generalizations to which we expect our overall approach to be applicable include the random walk graph Laplacian, the symmetric normalized graph Laplacian, graph Laplacians built over $k$-nearest neighbor graphs, etc; see  \cite{von2007tutorial,trillos2019error,calder2022lipschitz}. Analyzing the fluctuations of the eigenvalues of these more general graph-based operators is left as an interesting problem to explore in the future.

\bibliographystyle{abbrv}
\bibliography{references}

\appendix
	
\section{Some notions from Riemannian geometry}

\label{App:GeoBack}

Let $\M$ be a smooth and compact $m$-dimensional manifold embedded in $\R^d$. For a given $x \in \M$, the \textit{exponential map} $\exp_x$ at the point $x$ is the map $\exp_x : T_x \M \rightarrow \M$ (here $T_x \M$ denotes the tangent plane at $x$) with the property that, for every $v\in T_x \M $, the curve $t \in \R_+ \mapsto \exp_x(t v) $ is the unique constant speed geodesic that starts at $x$ with initial velocity $v$.
	
	It turns out that, for small enough $r>0$, the exponential map 
	\begin{equation} \label{expmap}
	{\rm exp}_{x}:  B_r(0) \subseteq T_{x}\mathcal{M} \to B_\M(x,r) \subseteq\mathcal{M}
	\end{equation}
	is a diffeomorphism between the~$m$-dimensional Euclidean ball $B_r(0)$ in the tangent space $T_{x}\mathcal{M}$ and the geodesic ball of radius $r$ centered at $x$, which we denote by $B_\M(x,r)$. The \textit{injectivity radius} $i_\M$ is the largest $r$ such that all the exponential maps $\{\exp_x\}_{x \in \M}$ are diffeomorphisms, as described above. For $r<i_\M$, we can thus introduce the diffeomorphic inverse of $\exp_x$, the \textit{logarithmic map}
	\begin{equation} \label{logmap}
	{\rm log}_{x}:  B_\M(x,r)  \subseteq \M \to B_r(0)\subseteq T_{x}\mathcal{M}\,. 
	\end{equation}
	Given $y \in B_\M(x,r)$ (for $r < i_\M$), $v=\log_x(y) \in T_x \M$ can be interpreted as the initial velocity of the minimizing geodesic that at time $t=0$ starts at $x$ and at time $1$ ends at $y$ —i.e., the curve $t\in [0,1] \mapsto \exp_x(tv)$. Moreover, we have the relation 
	\[ d(y, x ) = |v|, \]
	and ${\rm exp}_{x}(0) = x.$ 
	
	By \textit{normal coordinates} around a point $x \in \M$, we simply mean the parameterization of $B_\M(x,r)$ via the exponential map $\exp_x$. Integrals of functions $g$ supported on  $B_\M(x,r)$ can be written, in normal coordinates, as
	\[ \int_{B_\M(x,r)} g(x) \dx =  \int_{ B_r(0) \subseteq T_x \M }  g(v) J_x(v) \dd v, \]
	where $J_x(\cdot)$ is the Jacobian of the exponential map, i.e.,
	\begin{equation*}
	J_x(v): = |\det D_v \bigl( \exp_x(v)\bigr)|\,.
	\end{equation*}
	Since~$D_v(\exp_x(0)) = I,$ it is well-known that the Jacobian admits a Taylor expansion about~$v = 0$ given by  
	\begin{equation} \label{e.Jactaylor}
	|J_x(v)-1|\leq C|v|^2,
	\end{equation}
	where $C$ only depends on \textit{scalar} curvature bounds on $\M$ (see \cite[Chapter 4]{do1992riemannian}), and is, in particular, uniform in~$x\in \M$; the latter fact follows from Rauch comparison theorem; see \cite[Section 2.2]{BIK}.

For $x,y\in\M$ such that $|x-y|\leq \e$ where $\e$ satisfies \eqref{assum:Epsilon}, it can be shown that 
	\begin{equation} \label{e.comparegeodesic}
	0 \le d(x,y)-|x-y|\leq C|x-y|^3\,,
	\end{equation}
	for a constant~$C > 0$ that depends on a bound on the \textit{second fundamental form} of $\M$ (as an embedded manifold in $\R^d$); see \cite[Chapter 6]{do1992riemannian} for a definition of the second fundamental form of a manifold embedded into another. Indeed, a geometric quantity that bounds the second fundamental form is the \textit{reach} of the manifold (denoted by $R_\M$), and it can be defined as the largest radius $r>0$ such that for every $x \in \R^d$ with $ \inf_{y \in \M}|x -y| \leq r$ there is a unique closest point to $x$ in $\M$; see \cite[Proposition 2]{trillos2019error}).

\section{Regularity of eigenfunctions of $\Delta_\rho$}
\label{sec:Regularity}
In normal coordinates around a given point $x \in \M$, the operator $\Delta_\rho$ can be written as
\begin{equation}
\label{eqn:LaplacianCoordinates}
    (\Delta_\rho f)(\exp_x(v)) = - \frac{1}{\rho \sqrt{\det(g)}} \sum_{i=1}^m \sum_{j=1}^m \frac{\partial }{\partial v_i} \left( \sqrt{\det(g)} \rho^2 g^{ij} \frac{\partial}{\partial v_j} f(\exp_x(v)) \right)  ,
\end{equation}
where $g^{ij}$ are the components of the inverse of the matrix $g(v):= D_v(\exp_x(v)) \cdot  D_v(\exp_x(v))^{{T}}$ (i.e., the metric tensor in normal coordinates.). For $\M$ and $\rho$ satisfying Assumption \ref{assump:DataGenerating}, it follows that the coefficients $\{ a_{ij}\}_{ij}$ defined by
\[ a_{ij}:= \sqrt{\det(g)} \rho^2 g^{ij}  \]
fulfill the uniform ellipticity condition in \cite[Equation (2.32)]{FernndezReal2022} and are such that their $C^{2,\alpha}$ norm is controlled. Using a standard iteration procedure, we conclude from standard Schauder estimates (see, e.g., \cite[Corollary 2.29]{FernndezReal2022}) that the solution $u_l$ of the equation
\[ \Delta_\rho u_l = \lambda_l u_l \]
is at least $C^{3,\alpha}$.

\label{rem:RegCoeffciients}

\nc

	\section{Estimates From Perturbation Theory of Elliptic Operators}
	\label{app:EstimatesPerturbationTheory}
	
Let $\rho \in \mathcal{P}_\M^\alpha$ be such that $\lambda_l(\rho)$ is simple and suppose that $\xi$ is a $C^{2, \alpha}(\M)$ function with
\[ \int_\M \xi(x) \rho(x) \dx =0.  \]
In particular, in the notation from section \ref{sec:FisherRao} we have $\xi \in \T_\rho \mathcal{P}(\M)$.
	For a~$t \in \R$ with~$|t| $ small enough, we define $\rho_t$ via
	\begin{equation*}
	\rho_t:= \rho_0+t \xi \rho \,. 
	\end{equation*}
	Since~$\int_\M\xi \rho \dx = 0$ it follows that~$\int_\M \rho_t\dx = \int_\M \rho\dx=1$ for every~$t.$ Moreover, for~$|t|$  small,~$\rho_t \in \mathcal{P}^\alpha_{\M}$. 
    
    Let us now introduce the family of operators
	\begin{equation*}
	\mathcal{L}_t:= \Delta_{\rho_t}= -\frac{1}{\rho_t} \div(\rho_t^2\nabla \cdot  )\,, \quad t \in \R.
	\end{equation*}
	Then, clearly, for small enough $t$~$\mathcal{L}_t$ defines a family of elliptic second-order differential operators that depend analytically on~$t$ in a neighborhood of~$t = 0.$ It is well-known~(see \cite{Kato}) that if~$\lambda_0 = \lambda_{0,l}$ is a \emph{simple} eigenvalue of~$\Delta_{\rho_0}$ then there exists~$t_{l} > 0$ and an analytic branch~$\{\lambda_{t,l}, u_{t,l}\}_{|t|< t_{l}}$ of \emph{simple} eigenvalue-eigenfunction pairs for the operator~$\mathcal{L}_t.$ Our goal is to compute the infinitesimal quantity
	\begin{equation*}
	\dot{\lambda}  :=  \frac{\,d}{\,dt}\big\vert_{t=0} \lambda_{t,l}.
	\end{equation*}
	
	For notational ease, we set~$\dot{u}  :=  \frac{\,d}{\,dt}\big\vert_{t=0} u_{t,l}.$ Since we have 
	\begin{equation}
	\label{e.LtPDE}
	-\mathrm{div}\bigl( \rho_t^2 \nabla u_{t,l} \bigr) = \lambda_{t,l}\rho_{t,l}u_{t,l}\,,
	\end{equation}
	with the normalization
	\begin{equation} \label{e.Ltnormal}
	\int_\M u_{t,l}^2 \rho_t  \dx = 1\,
	\end{equation}
	for~$|t|$ small, we may differentiate \eqref{e.LtPDE} with respect to~$t$ and set~$t=0$ to obtain  
	\begin{equation} \label{e.phidot}
	-\mathrm{div}\bigl( \rho^2 \nabla \dot{u}\bigr) - \lambda \rho \dot{u} = 2 \mathrm{div}\bigl( \rho \dot{\rho} \nabla u\bigr) + \bigl(\lambda \dot{\rho} + \dot{\lambda} \rho \bigr)u\,,
	\end{equation}
	where we write~ $\lambda:= \lambda_{0,l}$ and $u :=  u_{0,l}$ for brevity. Also, $\dot{\rho}= \xi \rho$.
	
	By the Fredholm alternative applied to the operator $-\div(\rho^2 \nabla \cdot) - \lambda \rho \cdot $, and the fact that~$\lambda$ was assumed to be simple, the right hand side of~\eqref{e.phidot} must be orthogonal to~$u$ in the standard (i.e., unweighted) $L^2$ inner product (since $u$ is in the kernel of the aforementioned operator). So, testing this equation with~$u$ yields 
	\begin{equation*}
	0 = - 2\int_{\M} \rho \dot{\rho} |\nabla u|^2 \dx + \lambda \int_\M \dot{\rho} u^2 \dx + \dot{\lambda}.
	\end{equation*}
	Rearranging, we obtain
	\begin{equation}
	\frac{d}{dt}|_{t=0} \lambda_l(\rho_t) = \dot{\lambda} = - \lambda_{l}(\rho) \int_{\M} \xi u^2 \rho \dx + 2 \int_{\M} \xi  |\nabla u|^2 \rho^2\,\dx\,.  
    \label{eqn:PerturbationEigenvalues}
	\end{equation}

\nc





\end{document}